
\documentclass[10pt,journal,compsoc]{IEEEtran}
\usepackage{graphicx}
\usepackage{color}
\usepackage{url}
\usepackage{amsthm}
\usepackage{amsmath,amssymb}
\usepackage{amsfonts}
\usepackage{diagbox}

\usepackage{algorithm}
\usepackage{algorithmic}
\newtheorem{assumption}{Assumption}
\newtheorem{property}{Property}
\newtheorem{problem}{Problem}
\newtheorem{theorem}{Theorem}
\newtheorem{lemma}{Lemma}
\usepackage{cleveref}
\newcommand{\tabincell}[2]{\begin{tabular}{@{}#1@{}}#2\end{tabular}}
\DeclareMathAlphabet\mathbfcal{OMS}{cmsy}{b}{n}
\begin{document}
%
% The "title" command has an optional parameter, allowing the author to define a "short title" to be used in page headers.
\title{A Convergent ADMM Framework for Efficient Neural Network Training}

\author{Junxiang~Wang,
        Hongyi~Li,
        and Liang~Zhao~\IEEEmembership{Senior~Member,~IEEE}% <-this % stops a space
\IEEEcompsocitemizethanks{

\IEEEcompsocthanksitem {Junxiang Wang (junxiang.wang@emory.edu) and Liang Zhao (liang.zhao@emory.edu) are from Emory University.}
\IEEEcompsocthanksitem {Hongyi Li (lihongyi@stu.xidian.edu.cn) is from Xidian University.}
}}

% The paper headers
\markboth{Journal of \LaTeX\ Class Files,~Vol.~14, No.~8, August~2015}%
{Shell \MakeLowercase{\textit{et al.}}: Bare Demo of IEEEtran.cls for Computer Society Journals}

%

%
% The abstract is a short summary of the work to be presented in the article.

%
% The code below is generated by the tool at http://dl.acm.org/ccs.cfm.
% Please copy and paste the code instead of the example below.
%

%
% Keywords. The author(s) should pick words that accurately describe the work being
% presented. Separate the keywords with commas.
\IEEEtitleabstractindextext{
\begin{abstract}
As a well-known optimization framework, the Alternating Direction Method of Multipliers (ADMM) has achieved tremendous success in many classification and regression applications. Recently, it has attracted the attention of deep learning researchers and is considered to be a potential substitute to Gradient Descent (GD). However, as an emerging domain, several challenges remain unsolved, including 1) The lack of global convergence guarantees, 2) Slow convergence towards solutions, and 3) Cubic time complexity with regard to feature dimensions. In this paper, we propose a novel optimization framework to solve a general neural network training problem via ADMM (dlADMM) to address these challenges simultaneously. Specifically, the parameters in each layer are updated backward and then forward so that parameter information in each layer is exchanged efficiently. When the dlADMM is applied to specific architectures, the time complexity of subproblems is reduced from cubic to quadratic via a dedicated algorithm design utilizing quadratic approximations and backtracking techniques. Last but not least, we provide the first proof of convergence to a critical point sublinearly for an ADMM-type method (dlADMM) under mild conditions. Experiments on seven benchmark datasets demonstrate the convergence, efficiency, and effectiveness of our proposed dlADMM algorithm.
\end{abstract}

\begin{IEEEkeywords}
Alternating Direction Method of Multipliers, Multi-Layer Perceptron, Graph Convolutional Networks, Convergence, Quadratic Approximation, Backtracking
\end{IEEEkeywords}}
\maketitle
%
% The code below is generated by the tool at http://dl.acm.org/ccs.cfm.
% Please copy and paste the code instead of the example below.
%

%
% Keywords. The author(s) should pick words that accurately describe the work being
% presented. Separate the keywords with commas.
\maketitle
 \section{Introduction}
 \ \quad The last two decades have witnessed the rapid development of deep learning techniques. Deep learning models have numerous advantages over traditional machine learning models, one of which is rich expressiveness: deep learning methods consist of non-linear modules, and hence have a powerful capacity to express different levels of representations \cite{lecun2015deep}. Because deep learning methods have a wide range of large-scale applications ranging from computer vision to graph learning, they entail efficient and accurate optimizers to reach solutions within time limits.\\
 \indent Deep learning models are usually trained by the backpropagation algorithm, which is achieved by the Gradient Descent (GD) and many of its variants. They are state-of-the-art optimizers because of simplicity and efficiency. However,  many drawbacks of GD put up a barrier to its wide applications. For example, it suffers from the gradient vanishing problem, where the error signal diminishes as the gradient is backpropagated; As another example, it is also sensitive to the poor conditioning problem, namely, a small magnitude of input change can lead to a dramatic variation of the gradient. Recently, a well-known optimization framework, the Alternating Direction Method of Multipliers (ADMM) is being considered as an alternative to GD for training deep learning models. The principle of the ADMM is to partition a problem into multiple subproblems, each of which usually has an analytic solution. It has achieved great success in many conventional machine learning problems  \cite{boyd2011distributed}. The ADMM has many potential advantages when solving deep learning problems: it scales linearly as a model is trained in parallel across cores; it serves as a gradient-free optimizer and hence avoids the gradient vanishing and the poor conditioning problems \cite{taylor2016training}. \\
\indent Even though the ADMM seems promising, there are still several challenges at must be overcome: \textbf{1. The lack of global convergence guarantees.} Although many empirical experiments have shown that ADMM converges in deep learning applications, the underlying theory governing this convergence behavior remains mysterious. This is because a typical deep learning problem consists of a combination of linear and nonlinear mappings, causing optimization problems to be highly nonconvex. This means that traditional proof techniques cannot be directly applied. \textbf{2. Slow convergence towards solutions.} Although ADMM is a powerful optimization framework that can be applied to large-scale deep learning applications, it usually converges slowly to high accuracy, even for simple examples \cite{boyd2011distributed}. It is often the case that ADMM becomes trapped in a modest solution. \textbf{3. Cubic time complexity with regard to feature dimensions.} The implementation of the ADMM is very time-consuming for real-world datasets. Experiments conducted by Taylor et al. found that ADMM required more than 7000 cores to train a neural network with just 300 neurons \cite{taylor2016training}. This computational bottleneck mainly originates from the matrix inversion required to update the weight parameters. Computing an inverse matrix needs further subiterations, and its time complexity is approximately $O(n^3)$, where $n$ is a feature dimension \cite{boyd2011distributed}.\\
\indent In order to deal with these difficulties simultaneously,  in this paper we propose a novel optimization framework for a deep learning Alternating Direction Method of Multipliers (dlADMM) algorithm. For a general neural network training problem, our proposed dlADMM algorithm updates parameters first in a backward direction and then forwards. This update approach propagates parameter information across the whole network and accelerates the convergence empirically. Then we apply the dlADMM algorithm to two specific architectures, namely, Multi-Layer Perceptron (MLP) and Graph Convolutional Network (GCN). When updating parameters of the proposed dlADMM, we avoid the operation of matrix inversion using the quadratic approximation and backtracking techniques, reducing the time complexity from $O(n^3)$ to $O(n^2)$.  Finally,  to the best of our knowledge, we provide the first convergence proof of the ADMM-based method (dlADMM) to a critical point with a sublinear convergence rate. The assumption conditions are mild enough for many common loss functions (e.g. cross-entropy loss and square loss) and activation functions (e.g. Rectified Linear Unit (ReLU) and leaky ReLU) to satisfy. Our proposed framework and convergence proofs are highly flexible for MLP models and GCN models, as well as being easily extendable to other popular network architectures such as Convolutional Neural Networks \cite{krizhevsky2012imagenet} and Recurrent Neural Networks \cite{mikolov2010recurrent}. Our contributions in this paper include:
\begin{itemize}
\item We present a novel and efficient dlADMM algorithm to handle a general neural network training problem. The new dlADMM updates parameters in a backward-forward fashion to speed up convergence empirically. We apply the proposed dlADMM algorithm to two specific architectures, MLP and GCN models.
\item We propose the use of quadratic approximation and backtracking techniques to avoid the need for matrix inversion as well as to reduce the computational cost for large-scale datasets. The time complexity of subproblems in dlADMM is reduced from $O(n^3)$ to $O(n^2)$. 
\item  We investigate several attractive convergence properties of the proposed dlADMM. The convergence assumptions are very mild to ensure that most deep learning applications satisfy our assumptions. The proposed dlADMM is guaranteed to converge to a critical point whatever the initialization is, when the hyperparameter is sufficiently large. We also analyze the new algorithm's sublinear convergence rate.
\item We conduct extensive experiments on MLP models and GCN models on seven benchmark datasets to validate our proposed dlADMM algorithm. The results show that the proposed dlADMM algorithm not only is convergent and efficient, but also performs better than most existing state-of-the-art algorithms, including GD and its variants.
\end{itemize}
\ \quad The rest of this paper is organized as follows. In Section \ref{sec:related work}, we summarize recent research related to this topic. In Section \ref{sec:algorithm}, we present the new dlADMM algorithm, the quadratic approximation, and the backtracking techniques utilized. In Section \ref{sec:convergence}, we introduce the main convergence results for the dlADMM algorithm. The results of extensive experiments conducted to show the convergence, efficiency, and effectiveness of our proposed new dlADMM algorithm are presented in Section \ref{sec:experiment}, and Section \ref{sec:conclusion} concludes this paper by summarizing the research.
 \section{Related Work}
 \label{sec:related work}
 \ \quad Previous literature related to this research includes gradient descent and alternating minimization for deep learning models, and ADMM for nonconvex problems.\\
 \indent \textbf{Gradient descent for deep learning models:} The GD algorithm and its variants play a dominant role in the research conducted by the deep learning optimization community. The famous back-propagation algorithm was firstly introduced by Rumelhart et al. to train the neural network effectively \cite{rumelhart1986learning}. Since the superior performance exhibited by AlexNet \cite{krizhevsky2012imagenet} in 2012, deep learning has attracted a great deal of researchers' attention and many new optimizers based on GD have been proposed to accelerate the convergence process, including the use of Polyak momentum \cite{polyak1964some}, as well as research on the Nesterov momentum and initialization by Sutskever et al.  \cite{sutskever2013importance}. Adam is the most popular method because it is computationally efficient and requires little tuning \cite{kingma2014adam}. Other well-known methods that incorporate  adaptive learning rates include AdaGrad \cite{duchi2011adaptive}, RMSProp \cite{tielemandivide},  AMSGrad \cite{j.2018on}, AdaBound\cite{luo2018adaptive}, and generalized Adam-type optimizers \cite{chen2018on}. These adaptive optimizers, however, suffer from worse generalization than GD, as verified by various studies \cite{keskar2017improving, DBLP:conf/nips/WilsonRSSR17}. Many theoretical investigations have been conducted to explain such generalization gaps by exploring their convergence properties \cite{he2019asymmetric,izmailov2018averaging,keskar2017large,li2018visualizing,zhou2020towards}. Moreover, their convergence assumptions do not apply to deep learning problems, which often require non-differentiable activation functions such as the ReLU.\\
\indent\textbf{Alternating minimization for deep learning:} During the last decade, some alternating minimization methods have been studied to apply the Alternating Direction Method of Multipliers (ADMM) \cite{taylor2016training}, Block Coordinate Descent (BCD) \cite{pmlr-v97-zeng19a,choromanska2019beyond} and auxiliary coordinates (MAC) \cite{carreira2014distributed} to replace a nested neural network with a constrained problem without nesting.  These methods also avoid gradient vanishing problems and allow for non-differentiable activation functions such as ReLU as well as allowing for complex non-smooth regularization and the constraints that are increasingly important for deep neural architectures. More follow-up works have extended previous works to specific architectures \cite{lu2021training,tangADMMiRNN}, or to achieve parallel or efficient computation   \cite{guan2021pdladmm,khorram2021stochastic,qiao2021inertial,wang2020toward}.\\
\indent\textbf{ADMM for nonconvex problems}: The excellent performance achieved by ADMM over a range-wide of convex problems has attracted the attention of many researchers, who have now begun to investigate the behavior of ADMM on nonconvex problems and made significant advances. For example, Wang et al. proposed an ADMM to solve multi-convex problems with a convergence guarantee \cite{wang2019multi}, while Wang et al. presented convergence conditions for a coupled objective function that is nonconvex and nonsmooth \cite{wang2015global}. Chen et al. discussed the use of ADMM to solve problems with quadratic coupling terms \cite{chen2015extended} and Wang et al. studied the behavior of the ADMM for problems with nonlinear equality constraints \cite{wang2017nonconvex}. Other papers improved nonconvex ADMM via  Anderson acceleration \cite{ouyang2020anderson,zhang2019accelerating}, linearization \cite{liu2019linearized}, or overrelaxation \cite{themelis2020douglas}. Even though ADMM has been proposed to solve deep learning applications  \cite{taylor2016training,gao2019incomplete}, there remains a lack of theoretical convergence analysis for the application of ADMM to such problems.
 \section{The dlADMM Algorithm}
  \label{sec:algorithm}
 \quad We present our proposed dlADMM algorithm in this section. Section \ref{sec:dladmm} provides a high overview of the proposed dlADMM algorithm to handle a general neural network problem. Sections \ref{sec:MLP application} and \ref{sec:gcn problem} apply the proposed dlADMM to Multi-Layer Perceptron (MLP) and Graph Convolutional Network (GCN) problems, respectively. In particular, the quadratic approximation and backtracking techniques used to solve subproblems are discussed in detail.
  \begin{table}[]
     \centering
     \begin{tabular}{cc}
     \hline
          Notations&Descriptions  \\
          \hline
          L&Number of Layers\\
          $B_l$& Parameters for the $l$-th layer.\\
          $V_l$& The output for the $l$-th layer.\\
          $g_l(B_l,V_l)$& The nonlinear function for the $l$-th layer.\\
          $Y$& The label vector.\\
          \hline
     \end{tabular}
     \caption{Important notations of a general neural network model.}
     \label{tab:general}
 \end{table}

 \subsection{The Generic dlADMM Algorithm}
 \label{sec:dladmm}
  \begin{table}
 \centering

 \begin{tabular}{cc}
 \hline
 Notations&Descriptions\\ \hline
 $L$& Number of layers.\\
 $W_l$& The weight matrix for the $l$-th layer.\\
 $b_l$& The intercept vector for the $l$-th layer.\\
 $z_l$&  The output of the linear mapping for the $l$-th layer.\\
 $f_l(z_l)$& The nonlinear activation function for the $l$-th layer.\\
 $a_l$& The output for the $l$-th layer.\\
 $x$& The input matrix of the neural network.\\
 $y$& The predefined label vector.\\
 $R(z_L,y)$& The risk function.\\
 $\Omega_l(W_l)$& The regularization term for the $l$-th layer.\\
 $n_l$& The number of neurons for the $l$-th layer.\\
\hline
 \end{tabular}
  \caption{Important notations of the MLP model.}
 \label{tab:notation}
 \end{table}
 \begin{figure}
    \centering
    \includegraphics[width=\linewidth]{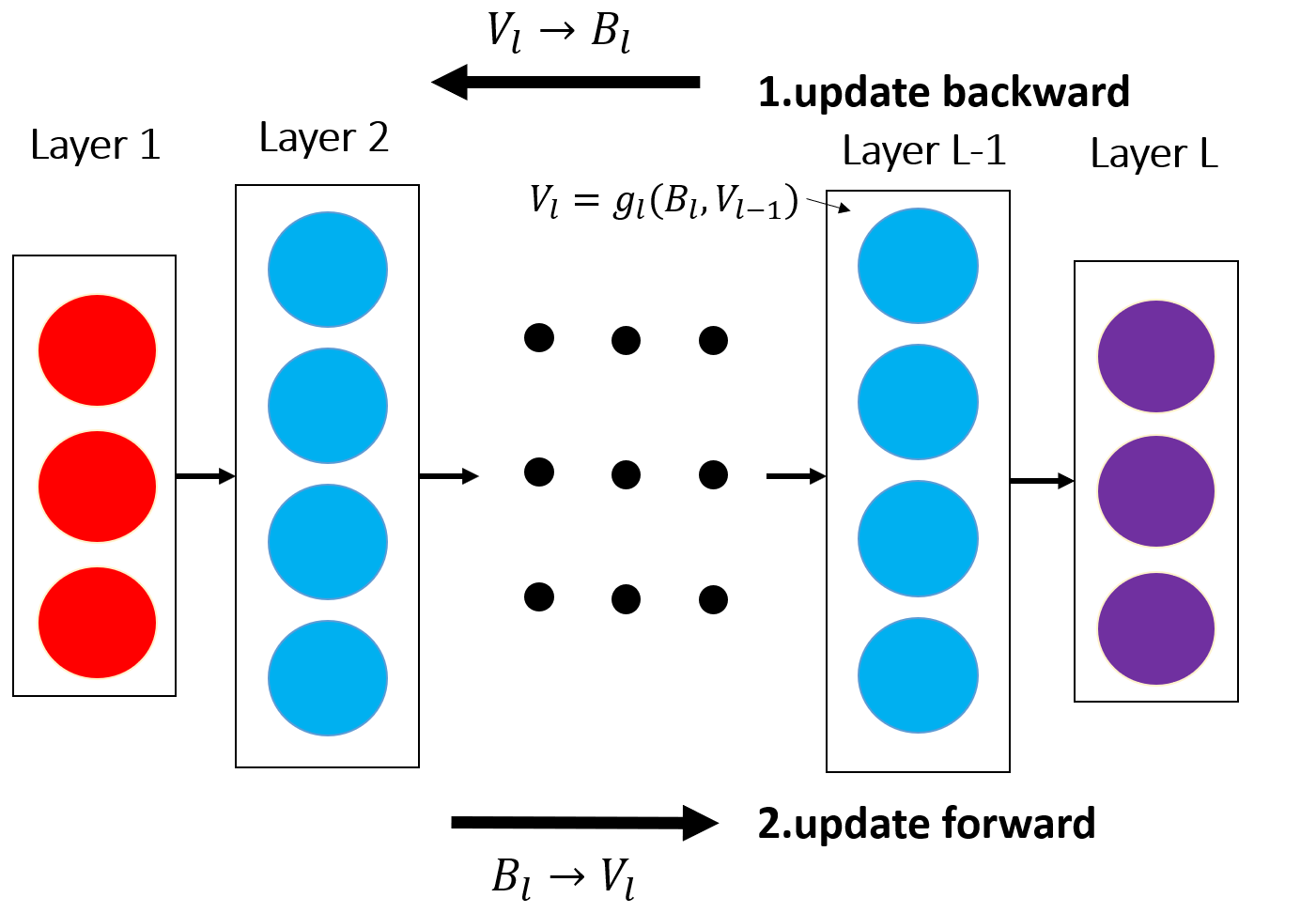}
    \caption{The dlADMM framework to train a general neural network: update parameter backward and then forward.}
    \label{fig:framwork overview}
\end{figure}
\indent In this section, we give a high-level overview of the proposed dlADMM algorithm to address a general neural network training problem. A general neural network consists of multiple layers, each of which is represented by a nonlinear function $V_l=g_l(B_l,V_{l-1})$, where $B_l$ is a weight matrix for the $l$-th layer, and $V_l$ is the output for the $l$-th layer, which is also the input of the $(l+1)$-th layer. The number of layers is $L$. For the final layer (i.e. the $L$-th layer), a fully-connected layer is represented as $V_L=B_LV_{L-1}$. $V_0$ is the input of a neural network. $Y$ is a label vector. Then a general neural network training problem is formulated as follows:\\
\begin{problem}[General Neural Network Training Problem]
\label{prob:general NN}
\begin{align*}
&\min\nolimits_{B_l,V_l} \mathbfcal{R}(V_L;Y)+\sum\nolimits_{l=1}^{L-1}\omega_l(B_l),\\
     &s.t. \ V_l=g_l(B_l,V_{l-1}) \ (l=1,\cdots,L-1), \quad V_L=B_L V_{L-1}.
\end{align*}
\end{problem}
where $\mathbfcal{R}(V_L;Y)$ is a loss function, $\omega_l(B_l)$ is a regularization term for the $l$-th layer, and $L$ is the number of layers. Solving Problem \ref{prob:general NN} directly via the ADMM algorithm is computationally intractable due to the introduction of nonlinear constraints $V_l=g_l(B_l,V_{l-1})$. By relaxing it via an $\ell_2$ penalty, we have the following problem:
\begin{problem}
\label{prob:relaxed general NN}
\begin{align*}
&\min\nolimits_{B_l,V_l} \mathbfcal{R}(V_L;Y)\!+\!\sum\nolimits_{l=1}^L\omega_l(B_l)\!+\!\Psi\sum\nolimits_{i\!=\!1}^{L\!-\!1} \Vert V_l\!-\!g_l(B_l,V_{l-1})\Vert^2_2,\\
     &\quad \quad s.t. \ V_L=B_LV_{L-1}.
\end{align*}
\end{problem}
where $\Psi>0$ is a tuning parameter. Compared with Problem \ref{prob:general NN}, Problem \ref{prob:relaxed general NN} has only a linear constraint $V_L=B_LV_{L-1}$ and hence is easier to solve.  It is straightforward to show that as $\Psi\rightarrow \infty$, the solution to Problem \ref{prob:general NN} approaches that of Problem \ref{prob:relaxed general NN}.\\
\indent Now we introduce the dlADMM algorithm to solve Problem \ref{prob:relaxed general NN}. The traditional ADMM strategy for optimizing parameters is to start from the first layer and then update parameters in the following layer sequentially \cite{taylor2016training}. In this case, the parameters in the final layer are subject to the parameter update in the first layer.  However, the parameters in the final layer contain important information that can be transmitted towards the previous layers to speed up convergence. To achieve this, we propose our novel dlADMM framework, as shown in Figure \ref{fig:framwork overview}. Specifically, the dlADMM algorithm updates parameters in two steps. In the first, the dlADMM begins updating from the $L$-th (final) layer and moves backward toward the first layer. The update order of parameters in the same layer is $V_l\rightarrow B_l$. In the second, the dlADMM reverses the update direction, beginning at the first layer and moving forward toward the $L$-th (final) layer. The update order of the parameters in the same layer is $B_l\rightarrow V_l$. The parameter information for all layers can be exchanged completely by adopting this update approach.\\
\indent The Augmented Lagrangian is formulated mathematically as follows:
\begin{align*}
    &\mathbfcal{L}_\rho(\textbf{B},\textbf{V},\mathbfcal{U})=\mathbfcal{R}(V_L;Y)+\sum\nolimits_{l=1}^L\omega_l(B_l)\\&+\Psi\sum\nolimits_{i=1}^{L-1} \Vert V_l-g_l(B_l,V_{l-1})\Vert^2_2+\mathbfcal{U}^T(V_L-B_LV_{L-1}))\\&+(\rho/2)\Vert V_L-B_LV_{L-1}\Vert^2_2.
\end{align*}
where $\rho>0$ is a hyperparameter, and $\mathbfcal{U}$ is a dual variable. Throughout the paper,  a variable with a bar over it denotes one in the backward update, and a variable without a bar denotes one in the forward update. Specifically, $\overline{B}^{k+1}_l$ and $\overline{V}^{k+1}_l$ denote backward updates of the dlADMM for the $l$-th layer in the $(k+1)$-th iteration, and $B^{k+1}_l$ and $V^{k+1}_l$ denote forward updates of the dlADMM for the $l$-th layer in the $(k+1)$-th iteration. $\textbf{B}^{k+1}=\{B^{k+1}_l\}_{l=1}^L$, $\textbf{V}^{k+1}=\{V^{k+1}_l\}_{l=1}^L$. Algorithm \ref{algo:dlADMM} shows the procedure of the proposed dlADMM. Specifically, Lines 4-5 and Lines 8-9 update variables backward and forward via solving subproblems, respectively. Lines 11 and 12 calculate the primal residual and the dual variable $U^{k+1}$, respectively. In the next two sections, we discuss how to apply the proposed dlADMM algorithm to two well-known architectures, and solve subproblems efficiently.
\begin{algorithm} %算法开始
\caption{the dlADMM Algorithm to Solve Problem \ref{prob:relaxed general NN}} %算法的题目
\begin{algorithmic}[1] 
\scriptsize
\label{algo:dlADMM}
%此处的[1]控制一下算法中的每句前面都有标号 
\REQUIRE $Y$, $V_0$, $\rho$, $\Psi$. %输入条件(此处的REQUIRE默认关键字为Require) 
\ENSURE $B_l(l=1,\cdots,L),V_l(l=1,\cdots,L)$. %输出结果(此处的ENSURE默认关键字为Ensure) 
\STATE Initialize $k=0$.
\WHILE{$\textbf{B}^{k+1},\textbf{V}^{k+1}$ not converged}
\FOR{$l=L$ to $1$}
\STATE  Update $\overline{V}^{k\!+\!1}_l$ via \\$\arg\!\min_{V_l}\! \mathbfcal{L}(\!\{B^{k}_i\}_{i\!=\!1}^l\!,\!\{\overline{B}_i^{k\!+\!1}\}_{i\!=\!l\!+\!1}^L\!,\!\{V^{k}_i\}_{i\!=\!1}^{l\!-\!1},V_l,\{\overline{V}_i^{k\!+\!1}\}_{i\!=\!l\!+\!1}^L\!,\!\mathbfcal{U}^k\!)$.
\STATE  Update $\overline{B}^{k\!+\!1}_l$ via \\$\arg\min_{B_l} \mathbfcal{L}(\{B^{k}_i\}_{i\!=\!1}^{l\!-\!1}\!,B_l,\!\{\overline{B}_i^{k\!+\!1}\}_{i\!=\!l\!+\!1}^L\!,\!\{V^{k}_i\}_{i\!=\!1}^{l\!-\!1},\{\overline{V}_i^{k\!+\!1}\}_{i\!=\!l}^L\!,\!\mathbfcal{U}^k)$.
\ENDFOR
\FOR{$l=1$ to $L$}
\STATE  Update $B^{k\!+\!1}_l$ via \\$\arg\!\min_{B_l} \mathbfcal{L}(\{B^{k\!+\!1}_i\}_{i\!=\!1}^{l\!-\!1}\!,B_l,\!\{\overline{B}_i^{k\!+\!1}\}_{i\!=\!l\!+\!1}^L\!,\!\{V^{k\!+\!1}_i\}_{i\!=\!1}^{l\!-\!1},\{\overline{V}_i^{k\!+\!1}\}_{i\!=\!l}^L,\mathbfcal{U}^k)$.
\STATE  Update $V^{k\!+\!1}_l$ via \\$\!\arg\!\min_{V_l} \mathbfcal{L}(\{B^{k\!+\!1}_i\}_{i\!=\!1}^l\!,\!\{\overline{B}_i^{k\!+\!1}\}_{i\!=\!l\!+\!1}^L\!,\!\{V^{k\!+\!1}_i\}_{i\!=\!1}^{l\!-\!1},V_l,\{\overline{V}_i^{k\!+\!1}\}_{i\!=\!l\!+\!1}^L,\mathbfcal{U}^k)$.

\ENDFOR
\STATE{$\mathcal{E}^{k+1}\leftarrow V^{k+1}_L-B^{k+1}_LV^{k+1}_{L-1}$}.
\STATE{$\mathbfcal{U}^{k+1}\leftarrow \mathbfcal{U}^k+\rho  \mathcal{E}^{k+1}$}.
\STATE $k\leftarrow k+1$.
\ENDWHILE
\STATE Output $\textbf{B},\textbf{V}$.
\end{algorithmic}
\end{algorithm}

\subsection{The Application on the MLP  Problem}
\label{sec:MLP application}
 \ \quad In this section, we discuss how to apply the proposed dlADMM algorithm on the MLP problem. Table \ref{tab:notation} lists important notations of the MLP model. A typical MLP model is defined by multiple linear mappings and nonlinear activation functions. A linear mapping for the $l$-th layer is composed of a weight matrix $W_l\in \mathbb{R}^{n_l\times n_{l-1}}$ and an intercept vector $b_l\in \mathbb{R}^{n_l}$, where $n_l$ is the number of neurons for the $l$-th layer; a nonlinear mapping for the $l$-th layer is defined by a continuous activation function $f_l(\bullet)$. Given an input $a_{l-1}\in \mathbb{R}^{n_{l-1}}$ from the $(l-1)$-th layer, the $l$-th layer outputs $a_l=f_l(W_la_{l-1}+b_l)$. Obviously, $a_{l-1}$ is nested in $a_{l}=f_l(\bullet)$. By introducing an auxiliary variable $z_l$,  the task of training a MLP problem is formulated mathematically as follows:
\begin{problem}[MLP Training Problem]
\label{prob:problem 1}
\begin{align*}
     & \min\nolimits_{W_l,b_l,z_l,a_l} R(z_L;y)+\sum\nolimits_{l=1}^L\Omega_l(W_l),\\
     &s.t.z_l=W_la_{l-1}+b_l(l=1,\cdots,L),\\& a_l=f_l(z_l) \ (l=1,\cdots,L-1).
\end{align*}
\end{problem}
In Problem \ref{prob:problem 1}, $a_0=x\in\mathbb{R}^{n_0}$ is the input of the MLP model where $n_0$ is the number of feature dimensions, and $y$ is a predefined label vector. $R(z_L;y)$ is a risk function for the $L$-th layer, which is convex, continuous, and proper,  and $\Omega_l(W_l)$ is a regularization term for the $l$-th layer, which is also convex, continuous, and proper. Similar to the problem relaxation in the previous section,  Problem \ref{prob:problem 1} is relaxed to Problem \ref{prob:problem 2} as follows:
\begin{problem}
\label{prob:problem 2}
\begin{align*}
    &\min\nolimits_{W_l,b_l,z_l,a_l}F(\textbf{W},\textbf{b},\textbf{z},\textbf{a})=R(z_L;y)+\sum\nolimits_{l=1}^L\Omega_l(W_l)\\&+(\nu/2)\sum\nolimits_{l=1}^{L-1}(\Vert z_l-W_la_{l-1}-b_l\Vert^2_2+\Vert a_l-f_l(z_l)\Vert^2_2),\\ &s.t. \ z_L=W_La_{L-1}+b_L.
\end{align*}
\end{problem}
where $\textbf{W}=\{W_l\}_{l=1}^{L}$, $\textbf{b}=\{b_l\}_{l=1}^{L}$, $\textbf{z}=\{z_l\}_{l=1}^{L}$, $\textbf{a}=\{a_l\}_{l=1}^{L-1}$ and $\nu>0$ is a tuning parameter. The augmented Lagrangian function of Problem \ref{pro:property 2} is shown as follows:
\begin{align}
    \nonumber L_\rho(\textbf{W},\textbf{b},\textbf{z},\textbf{a},u)&=R(z_L;y)+\sum\nolimits_{l=1}^L\Omega_l(W_l)\\&+\phi(\textbf{W},\textbf{b},\textbf{z},\textbf{a},u).
    \label{eq: Lagrangian}
    \end{align}
where $\phi(\textbf{W},\textbf{b},\textbf{z},\textbf{a},u)=(\nu/2)\sum\nolimits_{l=1}^{L-1} (\Vert z_l-W_la_{l-1}-b_l\Vert^2_2+\Vert a_l-f_l(z_l)\Vert^2_2)+u^T(z_L-W_La_{L-1}-b_L)+(\rho/2)\Vert z_L-W_La_{L-1}-b_L\Vert^2_2$, $u$ is a dual variable and $\rho>0$ is a hyperparameter of the dlADMM algorithm.  $\overline{W}^{k+1}_l$, $\overline{b}^{k+1}_l$, $\overline{z}^{k+1}_l$ and $\overline{a}^{k+1}_l$ denote backward updates of the dlADMM for the $l$-th layer in the $(k+1)$-th iteration, while ${W}^{k+1}_l$, ${b}^{k+1}_l$, ${z}^{k+1}_l$ and ${a}^{k+1}_l$ denote forward updates of the dlADMM for the $l$-th layer in the $(k+1)$-th iteration. Moreover,  some notations are shown as follows for the sake of simplicity: $\overline{\textbf{W}}^{k+1}_l=\{\{{W}^{k}_i\}_{i=1}^{l-1},\{\overline{{W}}^{k+1}_i\}_{i=l}^L\}$,  $\overline{\textbf{b}}^{k+1}_l=\{\{{b}^{k}_i\}_{i=1}^{l-1},\{\overline{{b}}^{k+1}_i\}_{i=l}^L\}$,  $\overline{\textbf{z}}^{k+1}_l=\{\{{z}^{k}_i\}_{i=1}^{l-1},\{\overline{{z}}^{k+1}_i\}_{i=l}^L\}$,  $\overline{\textbf{a}}^{k+1}_l\!=\!\{\{{a}^{k}_i\}_{i=1}^{l-1},\{\overline{{a}}^{k+1}_i\}_{i=l}^{L-1}\}$, ${\textbf{W}}^{k\!+\!1}_l\!=\!\{\{{W}^{k\!+\!1}_i\}_{i\!=\!1}^{l},\{\overline{{W}}^{k\!+\!1}_i\}_{i=l\!+\!1}^{L}\}$, ${\textbf{b}}^{k\!+\!1}_l\!=\!\{\{{b}^{k\!+\!1}_i\}_{i\!=\!1}^{l},\{\overline{{b}}^{k\!+\!1}_i\}_{i\!=\!l\!+\!1}^{L}\}$, ${\textbf{z}}^{k+1}_l\!=\!\{\{{z}^{k+1}_i\}_{i=1}^{l},\{\overline{{z}}^{k+1}_i\}_{i=l+1}^{L}\}$, ${\textbf{a}}^{k+1}_l=\{\{{a}^{k+1}_i\}_{i=1}^{l},\{\overline{{a}}^{k+1}_i\}_{i=l+1}^{L-1}\}$,
$\overline{\textbf{W}}^{k+1}=\{\overline{W}_i^{k+1}\}_{i=1}^L$, $\overline{\textbf{b}}^{k+1}=\{\overline{b}_i^{k+1}\}_{i=1}^L$,
$\overline{\textbf{z}}^{k+1}=\{\overline{z}_i^{k+1}\}_{i=1}^L$,
$\overline{\textbf{a}}^{k+1}=\{\overline{a}_i^{k+1}\}_{i=1}^{L-1}$, $\textbf{W}^{k+1}=\{W_i^{k+1}\}_{i=1}^L$, $\textbf{b}^{k+1}=\{b_i^{k+1}\}_{i=1}^L$,
$\textbf{z}^{k+1}=\{z_i^{k+1}\}_{i=1}^L$, and
$\textbf{a}^{k+1}=\{a_i^{k+1}\}_{i=1}^{L-1}$. Then the dlADMM algorithm is shown in Algorithm \ref{algo:dlADMM mlp}. Specifically, Lines 5, 6, 10, 11, 14, 15, 17 and 18 solve eight subproblems, namely,  $\overline{a}^{k+1}_l$, $\overline{z}^{k+1}_l$, $\overline{b}^{k+1}_l$, $\overline{W}^{k+1}_l$, ${W}^{k+1}_l$, ${b}^{k+1}_l$, ${z}^{k+1}_l$ and ${a}^{k+1}_l$, respectively. Lines 21 and 22 update the residual $r^{k+1}$ and the dual variable $u^{k+1}$, respectively.
\begin{algorithm} %算法开始
\caption{the dlADMM Algorithm to Solve Problem \ref{prob:problem 2}} %算法的题目
\begin{algorithmic}[1] 
\scriptsize
\label{algo:dlADMM mlp}
%此处的[1]控制一下算法中的每句前面都有标号 
\REQUIRE $y$, $a_0=x$, $\rho$, $\nu$. %输入条件(此处的REQUIRE默认关键字为Require) 
\ENSURE $a_l(l=1,\cdots,L-1),W_l(l=1,\cdots,L),b_l(l=1,\cdots,L), z_l(l=1,\cdots,L)$. %输出结果(此处的ENSURE默认关键字为Ensure) 
\STATE Initialize $k=0$.
\WHILE{$\textbf{W}^{k+1},\textbf{b}^{k+1},\textbf{z}^{k+1},\textbf{a}^{k+1}$ not converged}
\FOR{$l=L$ to $1$}
\IF{$l<L$}
\STATE Update $\overline{a}^{k+1}_l$ in Equation \eqref{eq:update overline a}.
\STATE Update $\overline{z}^{k+1}_l$ in Equation \eqref{eq:update overline z}.
\STATE Update $\overline{b}^{k+1}_l$ in Equation \eqref{eq:update overline b}.
\ELSE
\STATE{Update $\overline{z}^{k+1}_L$ in Equation \eqref{eq:update overline zl}}.
\STATE Update $\overline{b}^{k+1}_L$ in Equation \eqref{eq:update overline bl}.
\ENDIF
\STATE Update $\overline{W}^{k+1}_l$ in Equation \eqref{eq:update overline W}.
\ENDFOR
\FOR{$l=1$ to $L$}
\STATE Update ${W}^{k+1}_l$ in Equation \eqref{eq:update W}.
\IF{$l< L$}
\STATE Update $b^{k+1}_l$ in Equation \eqref{eq:update b}.
\STATE Update $z^{k+1}_l$ in Equation \eqref{eq:update z}.
\STATE Update $a^{k+1}_l$ in Equation \eqref{eq:update a}.
\ELSE
\STATE Update $b^{k+1}_L$ in Equation \eqref{eq:update bl}.
\STATE{ Update $z^{k+1}_L$ in Equation \eqref{eq:update zl}.}
\STATE{$r^{k+1}\leftarrow z_L^{k+1}-W^{k+1}_L a^{k+1}_{L-1}-b^{k+1}_L$}.
 \STATE{$u^{k+1}\leftarrow u^k+\rho  r^{k+1}$}.
\ENDIF
\ENDFOR
\STATE $k\leftarrow k+1$.
\ENDWHILE
\STATE Output $\textbf{W},\textbf{b},\textbf{z},\textbf{a}$.
\end{algorithmic}
\end{algorithm}
\\ \quad The eight subproblems in Algorithm \ref{algo:dlADMM mlp} are discussed in detail in the following. Most can be solved by the quadratic approximation and the backtracking techniques discussed below, so the matrix inversion can be avoided.\\ \\
\textbf{1. Update $\overline{a}^{k+1}_l$}\\
\indent The variables $\overline{a}^{k+1}_l(l=1,\cdots,L-1)$ are updated as follows:
\begin{align*}
    \overline{a}^{k+1}_{l}&\leftarrow \arg\min\nolimits_{a_{l}} L_\rho(\overline{\textbf{W}}^{k+1}_{l+1},\overline{\textbf{b}}^{k+1}_{l+1},\overline{\textbf{z}}^{k+1}_{l+1},\{a^k_i\}_{i=1}^{l-1},a_l,\\&\{\overline{a}^{k+1}_i\}_{i=l+1}^{L-1},u^k).
\end{align*}
\indent The subproblem is transformed into the following form after it is replaced by Equation \eqref{eq: Lagrangian}. 
\begin{align}
    \nonumber\overline{a}^{k+1}_{l}&\leftarrow \arg\min\nolimits_{a_{l}} \phi(\overline{\textbf{W}}^{k+1}_{l+1},\overline{\textbf{b}}^{k+1}_{l+1},\overline{\textbf{z}}^{k+1}_{l+1},\{a^k_i\}_{i=1}^{l-1},a_l,\\&\{\overline{a}^{k+1}_i\}_{i=l+1}^{L-1},u^k)
    \label{eq:original overline a}.
\end{align}
Because $a_l$ and $W_{l+1}$ are coupled in $\phi(\bullet)$, in order to solve this problem, we must compute the inverse matrix of $\overline{W}^{k+1}_{l+1}$, which involves subiterations and is computationally expensive \cite{taylor2016training}. In order to handle this challenge,  we define $\overline{Q}_l(a_l;\overline{\tau}^{k+1}_l)$ as a quadratic approximation of $\phi$ at $a^{k}_{l}$, which is mathematically reformulated as follows:
\begin{align*}\overline{Q}_l(a_{l};\overline{\tau}^{k+1}_l)&=\phi(\overline{\textbf{W}}^{k+1}_{l+1},\overline{\textbf{b}}^{k+1}_{l+1},\overline{\textbf{z}}^{k+1}_{l+1},\overline{\textbf{a}}^{k+1}_{l+1},u^k)\\&+(\nabla _{a^k_{l}}\phi)^T(\overline{\textbf{W}}^{k+1}_{l+1},\overline{\textbf{b}}^{k+1}_{l+1},\overline{\textbf{z}}^{k+1}_{l+1},\overline{\textbf{a}}^{k+1}_{l+1},u^k)(a_{l}-a^k_{l})\\&+\Vert\overline{\tau}^{k+1}_l\circ (a_{l}-a^k_{l})^{\circ 2}\Vert_{1}/2.
\end{align*}
where $\overline{\tau}^{k+1}_l>0$ is a  parameter vector, $\circ$ denotes the Hadamard product (the elementwise product), and $a^{\circ b}$  denotes $a$ to the Hadamard power of $b$ and $\Vert\bullet\Vert_{1}$ is the $\ell_{1}$ norm. $\nabla _{a^k_{l}}\phi$ is the gradient of $\overline{a}_l$ at $a^k_l$. Obviously, $\overline{Q}_l(a^k_l;\overline{\tau}^{k+1}_l)=\phi(\overline{\textbf{W}}^{k+1}_{l+1},\overline{\textbf{b}}^{k+1}_{l+1},\overline{\textbf{z}}^{k+1}_{l+1},\overline{\textbf{a}}^{k+1}_{l+1},u^k)$. Rather than minimizing the original problem in Equation \eqref{eq:original overline a}, we instead solve the following problem:
\begin{align} \overline{a}^{k+1}_l&\leftarrow\arg\min\nolimits_{a_l} \overline{Q}_l(a_l;\overline{\tau}^{k+1}_l). \label{eq:update overline a}
\end{align} Because $\overline{Q}_l(a_{l};\overline{\tau}^{k+1}_l)$ is a quadratic function with respect to $a_{l}$, the solution can be obtained by
\begin{align*}
\overline{a}^{k+1}_{l}\leftarrow a^k_{l}-\nabla_{a^k_{l}}\phi/\overline{\tau}^{k+1}_{l}.
\end{align*}
 given a suitable $\overline{\tau}^{k+1}_l$. Now the main focus is how to choose $\overline{\tau}^{k+1}_l$. Algorithm  \ref{algo:overline tau update} shows the backtracking algorithm utilized to find a suitable $\overline{\tau}_l^{k+1}$. Lines 2-5 implement a while loop until  the condition $\phi(\overline{\textbf{W}}^{k+1}_{l+1},\overline{\textbf{b}}^{k+1}_{l+1},\overline{\textbf{z}}^{k+1}_{l+1},\overline{\textbf{a}}^{k+1}_l,u^k)\leq\overline{Q}_l(\overline{a}^{k+1}_l;\overline{\tau}^{k+1}_l)$ is satisfied. As $\overline{\tau}^{k+1}_l$ becomes larger and larger, $\overline{a}^{k+1}_l$ is close to $a^k_l$ and $a^k_l$ satisfies the loop condition, which precludes the possibility of the infinite loop. The time complexity of Algorithm \ref{algo:overline tau update} is $O(n^2)$, where $n$ is the number of features or neurons.
 \begin{algorithm} %算法开始
\caption{The Backtracking Algorithm  to update $\overline{a}^{k+1}_{l}$ } %算法的题目 
\begin{algorithmic}[1]
\scriptsize
\label{algo:overline tau update}%此处的[1]控制一下算法中的每句前面都有标号 
\REQUIRE $\overline{\textbf{W}}^{k+1}_{l+1}$, $\overline{\textbf{b}}^{k+1}_{l+1}$, $\overline{\textbf{z}}^{k+1}_{l+1}$, $\overline{\textbf{a}}^{k+1}_{l+1}$, $u^k$, $\rho$, some constant $\overline{\eta}>1$. %输入条件(此处的REQUIRE默认关键字为Require) 
\ENSURE $\overline{\tau}^{k+1}_l$,$\overline{a}^{k+1}_{l}$. %输出结果(此处的ENSURE默认关键字为Ensure) 
\STATE Pick up $\overline{t}$ and $\overline{\beta}=a^k_l-\nabla_{a^k_l}\phi/\overline{t}$
\WHILE{$\phi(\overline{\textbf{W}}^{k+1}_{l+1},\overline{\textbf{b}}^{k+1}_{l+1},\overline{\textbf{z}}^{k+1}_{l+1},\{a^k_i\}_{i=1}^{l-1},\overline{\beta},\{\overline{a}^{k+1}_i\}_{i=l+1}^{L-1},u^k)>\overline{Q}_l(\overline{\beta};\overline{t})$}
\STATE $\overline{t}\leftarrow \overline{t}\overline{\eta}$.\\
\STATE $\overline{\beta}\leftarrow a^k_{l}-\nabla_{a^k_{l}}\phi/\overline{t}$.\\
\ENDWHILE
\STATE Output $\overline{\tau}^{k+1}_l \leftarrow \overline{t} $.\\
\STATE Output $\overline{a}^{k+1}_{l}\leftarrow \overline{\beta}$.
\end{algorithmic}
\end{algorithm}
\\
\textbf{2. Update $\overline{z}^{k+1}_l$}\\
\indent The variables $\overline{z}^{k+1}_l(l=1,\cdots,L)$ are updated as follows:
\begin{align*}
    \overline{z}^{k+1}_{l}&\leftarrow \arg\min\nolimits_{z_{l}} L_\rho(\overline{\textbf{W}}^{k+1}_{l+1},\overline{\textbf{b}}^{k+1}_{l+1},\{z^k_i\}_{i=1}^{l-1},z_l,\{\overline{z}^{k+1}_i\}_{i=l+1}^{L},\\&\overline{\textbf{a}}^{k+1}_{l},u^k),
\end{align*}
which is equivalent to the following forms: for $\overline{z}^{k+1}_l(l=1,\cdots,L-1)$,
\begin{align}
          \nonumber\overline{z}^{k+1}_{l}&\leftarrow \arg\min\nolimits_{z_{l}} \phi(\overline{\textbf{W}}^{k+1}_{l+1},\overline{\textbf{b}}^{k+1}_{l+1},\{z^k_i\}_{i=1}^{l-1},z_l,\{\overline{z}^{k+1}_i\}_{i=l+1}^{L},\\&\overline{\textbf{a}}^{k+1}_{l},u^k),\label{eq:update overline z}
          \end{align}
          and for $\overline{z}^{k+1}_L$,
          \begin{align}
    \nonumber\overline{z}^{k+1}_L &\leftarrow\arg\min\nolimits_{z_L}      \phi(\textbf{W}^{k},\textbf{b}^{k},\{z^k_i\}_{i=1}^{L-1},z_L,\textbf{a}^{k},u^k)\\&+R(z_L;y).\label{eq:update overline zl}
\end{align}
\indent Equation \eqref{eq:update overline z} is highly nonconvex because the nonlinear activation function $f(z_l)$ is contained in $\phi(\bullet)$. For common activation functions such as the ReLU and leaky ReLU, Equation \eqref{eq:update overline z} has a closed-form solution; for other activation functions like sigmoid and hyperbolic tangent (tanh), a look-up table is recommended \cite{taylor2016training}.\\ \quad Equation \eqref{eq:update overline zl} is a convex problem because $\phi(\bullet)$ and $R(\bullet)$ are convex with regard to $z_L$. Therefore, Equation \eqref{eq:update overline zl} can be solved by Fast Iterative Soft-Thresholding Algorithm (FISTA) \cite{beck2009fast}.\\
\textbf{3. Update $\overline{b}^{k+1}_l$}\\
\indent The variables $\overline{b}^{k+1}_l(l=1,\cdots,L)$ are updated as follows:
\begin{align*}
    \overline{b}^{k+1}_l&\leftarrow \arg\min\nolimits_{b_l} 
     L_\rho(\overline{\textbf{W}}^{k+1}_{l+1},\{b^{k}_i\}_{i=1}^{l-1},b_l,\{\overline{b}^{k+1}_i\}_{i=l+1}^{L},\overline{\textbf{z}}^{k+1}_l,\\&\overline{\textbf{a}}^{k+1}_l,u^k),
\end{align*}
which is equivalent to the following form:
\begin{align*}
        \overline{b}^{k+1}_l&\leftarrow \arg\min\nolimits_{b_l} 
     \phi(\overline{\textbf{W}}^{k+1}_{l+1},\{b^{k}_i\}_{i=1}^{l-1},b_l,\{\overline{b}^{k+1}_i\}_{i=l+1}^{L},\overline{\textbf{z}}^{k+1}_l,\\&\overline{\textbf{a}}^{k+1}_l,u^k).
\end{align*}
\indent Similarly to the update of $\overline{a}^{k+1}_l$, we define $\overline{U}_l(b_l;\overline{B})$ as a quadratic approximation of $\phi(\bullet)$ at ${b}^k_l$, which is formulated mathematically as follows \cite{beck2009fast}:
\begin{align*}
 \overline{U}_l(b_l;\overline{B})&=\phi(\overline{\textbf{W}}^{k+1}_{l+1},\overline{\textbf{b}}^{k+1}_{l+1},\overline{\textbf{z}}^{k+1}_l,\overline{\textbf{a}}^{k+1}_l,u^k)\\&+(\nabla_{b^k_l}\phi)^T(\overline{\textbf{W}}^{k+1}_{l+1},\overline{\textbf{b}}^{k+1}_{l+1},\overline{\textbf{z}}^{k+1}_l,\overline{\textbf{a}}^{k+1}_l,u^k)(b_l-b^k_l)\\&+(\overline{B}/2)\Vert b_l-b^k_l\Vert^2_2,
\end{align*}
where $\overline{B}>0$ is a parameter. Here $\overline{B}\geq \nu$ for $l=1,\cdots,L-1$ and $\overline{B}\geq \rho$ for $l=L$ are required for the convergence analysis  \cite{beck2009fast}. Without loss of generality, we set $\overline{B}=\nu$, and solve the subsequent subproblem as follows:
\begin{align}
    &\overline{b}^{k+1}_l\leftarrow \arg\min\nolimits_{b_l} \overline{U}_l(b_l;\nu)(l=1,\cdots,L-1). \label{eq:update overline b}\\
    &\overline{b}^{k+1}_L\leftarrow \arg\min\nolimits_{b_L} \overline{U}_L(b_L;\rho). \label{eq:update overline bl}
\end{align}
\indent Equation \eqref{eq:update overline b} is a convex problem and has a closed-form solution as follows:
\begin{align*}
    &\overline{b}^{k+1}_{l}\leftarrow b^k_{l}-\nabla_{b^k_{l}}\phi/\nu.\quad (l=1,\cdots,L-1)\\&\overline{b}^{k+1}_{L}\leftarrow b^k_{L}-\nabla_{b^k_{L}}\phi/\rho.
\end{align*}
\textbf{4. Update $\overline{W}^{k+1}_l$}\\
\indent The variables $\overline{W}^{k+1}_l(l=1,\cdots,L)$ are updated as follows:
\begin{align*}
    \overline{W}^{k+1}_l&\leftarrow\arg\min\nolimits_{W_l} L_\rho(\{{W}^{k}_i\}_{i=1}^{l-1},W_l,\{\overline{W}^{k+1}_i\}_{i=l+1}^{L},\overline{\textbf{b}}^{k+1}_l,\\&\overline{\textbf{z}}^{k+1}_l,\overline{\textbf{a}}^{k+1}_l,u^k),
\end{align*}
which is equivalent to the following form:
\begin{align}
 \nonumber\overline{W}^{k+1}_l&\leftarrow\arg\min\nolimits_{W_l} \phi(\{{W}^{k}_i\}_{i=1}^{l-1},W_l,\{\overline{W}^{k+1}_i\}_{i=l+1}^{L},\overline{\textbf{b}}^{k+1}_l,\\&\overline{\textbf{z}}^{k+1}_l,\overline{\textbf{a}}^{k+1}_l,u^k)+\Omega(W_l).
    \label{eq:update W overline original}
\end{align}
\indent Due to the same challenge in updating $\overline{a}^{k+1}_l$,  we define $\overline{P}_l(W_l;\overline{\theta}^{k+1}_l)$ as a quadratic approximation of $\phi$ at ${W}^{k}_{l}$. The quadratic approximation is mathematically reformulated as follows \cite{beck2009fast}:
\begin{align*}\overline{P}_l(W_l;\overline{\theta}^{k+1}_l)&=\phi(\overline{\textbf{W}}^{k+1}_{l+1},\overline{\textbf{b}}^{k+1}_l,\overline{\textbf{z}}^{k+1}_l,\overline{\textbf{a}}^{k+1}_l,u^k)\\&+(\nabla_{W^k_l}\phi)^T(\overline{\textbf{W}}^{k+1}_{l+1},\overline{\textbf{b}}^{k+1}_l,\overline{\textbf{z}}^{k+1}_l,\overline{\textbf{a}}^{k+1}_l,u^k)\\&(W_l-W^k_l)+\Vert\overline{\theta}^{k+1}_l\circ (W_l-W^k_l)^{\circ 2}\Vert_{1}/2,
\end{align*}
where $\overline{\theta}^{k+1}_l>0$ is a  parameter vector, which is chosen by the Algorithm \ref{algo:overline theta update}. Instead of minimizing the Equation \eqref{eq:update W overline original}, we minimize the following:
\begin{align}
 &\overline{W}^{k+1}_l \leftarrow \arg\min\nolimits_{W_l} \overline{P}_l(W_l;\overline{\theta}^{k+1}_l)+\Omega_l (W_l).     \label{eq:update overline W}
\end{align}
\indent Equation \eqref{eq:update overline W} is convex and hence can be solved exactly. If $\Omega_l$ is either an $\ell_1$ or an $\ell_2$ regularization term, Equation \eqref{eq:update overline W} has a closed-form solution.
\begin{algorithm} %算法开始
\caption{The Backtracking Algorithm  to update $\overline{W}^{k+1}_{l}$ }
%算法的题目 
\begin{algorithmic}[1]
\scriptsize
\label{algo:overline theta update}%此处的[1]控制一下算法中的每句前面都有标号 
\REQUIRE $\overline{\textbf{W}}^{k+1}_{l+1}$,$\overline{\textbf{b}}^{k+1}_{l}$, $\overline{\textbf{z}}^{k+1}_{l}$,$\overline{\textbf{a}}^{k+1}_{l}$,$u^k$, $\rho$, some constant $\overline{\gamma}>1$. %输入条件(此处的REQUIRE默认关键字为Require) 
\ENSURE $\overline{\theta}^{k+1}_l$,$\overline{{W}}^{k+1}_{l}$. %输出结果(此处的ENSURE默认关键字为Ensure) 
\STATE Pick up $\overline{\alpha}$ and $\overline{\zeta}=W^k_l-\nabla_{W^k_l}\phi/\overline{\alpha}$.
\WHILE{$\phi(\{W^k_i\}_{i=1}^{l-1},\overline{\zeta},\{\overline{W}^{k+1}_i\}_{i=l+1}^{L},\overline{\textbf{b}}^{k+1}_{l},\overline{\textbf{z}}^{k+1}_{l},\overline{\textbf{a}}^{k+1}_{l},u^k)>\overline{P}_l(\overline{\zeta};\overline{\alpha})$}
\STATE $\overline{\alpha}\leftarrow \overline{\alpha}\ \overline{\gamma}$.\\
\STATE Solve $\overline{\zeta}$ by Equation \eqref{eq:update overline W}.\\
\ENDWHILE
\STATE Output $\overline{\theta}^{k+1}_l \leftarrow \overline{\alpha} $.\\
\STATE Output $\overline{W}^{k+1}_{l}\leftarrow \overline{\zeta}$.
\end{algorithmic}
\end{algorithm}
\\
\textbf{5. Update ${W}^{k+1}_l$}\\
\indent The variables ${W}^{k+1}_l(l=1,\cdots,L)$ are updated as follows:
\begin{align*}
    W^{k+1}_l&\leftarrow\arg\min\nolimits_{W_l} L_\rho(\{{W}^{k+1}_i\}_{i=1}^{l-1},W_l,\{\overline{W}^{k+1}_i\}_{i=l+1}^{L},\textbf{b}^{k+1}_{l-1},\\&\textbf{z}^{k+1}_{l-1},\textbf{a}^{k+1}_{l-1},u^k).
\end{align*}
which is equivalent to 
\begin{align}
 \nonumber{W}^{k+1}_l&\leftarrow\arg\min\nolimits_{W_l} \phi(\{{W}^{k+1}_i\}_{i=1}^{l-1},W_l,\{\overline{W}^{k+1}_i\}_{i=l+1}^{L},\textbf{b}^{k+1}_{l-1},\\&\textbf{z}^{k+1}_{l-1},\textbf{a}^{k+1}_{l-1},u^k)+\Omega(W_l).
    \label{eq:update W original}
\end{align}
\indent Similarly,  we define $P_l(W_l;\theta^{k+1}_l)$ as a quadratic approximation of $\phi$ at $\overline{W}^{k+1}_{l}$. The quadratic approximation is then mathematically reformulated as follows \cite{beck2009fast}:
\begin{align*}{P}_l(W_l;{\theta}^{k+1}_l)&=\phi({\textbf{W}}^{k+1}_{l-1},{\textbf{b}}^{k+1}_{l-1},{\textbf{z}}^{k+1}_{l-1},{\textbf{a}}^{k+1}_{l-1},u^k)\\&+(\nabla_{\overline{W}^{k+1}_l}\phi)^T({\textbf{W}}^{k+1}_{l-1},\textbf{b}^{k+1}_{l-1},\textbf{z}^{k+1}_{l-1},{\textbf{a}}^{k+1}_{l-1},u^k)\\&(W_l-\overline{W}^{k+1}_l)+\Vert{\theta}^{k+1}_l\circ (W_l-\overline{W}^{k+1}_l)^{\circ 2}\Vert_{1}/2.
\end{align*}
where $\theta^{k+1}_l>0$ is a  parameter vector. Instead of minimizing the Equation \eqref{eq:update W original}, we minimize the following:
\begin{align}
 &{W}^{k+1}_l \leftarrow \arg\min\nolimits_{W_l} P_l(W_l;{\theta}^{k+1}_l)+\Omega_l (W_l).     \label{eq:update W}
\end{align}
The choice of $\theta^{k+1}_l$ is discussed in the supplementary materials.\\
\textbf{6. Update ${b}^{k+1}_l$}\\
\indent The variables ${b}^{k+1}_l(l=1,\cdots,L)$ are updated as follows:
\begin{align*}
    {b}^{k+1}_l&\leftarrow \arg\min\nolimits_{b_l} 
     L_\rho({\textbf{W}}^{k+1}_{l},\{b^{k+1}_i\}_{i=1}^{l-1},b_l,\{\overline{b}^{k+1}_i\}_{i=l+1}^{L},{\textbf{z}}^{k+1}_{l-1},\\&{\textbf{a}}^{k+1}_{l-1},u^k),
\end{align*}
which is equivalent to the following formulation:
\begin{align*}
        {b}^{k+1}_l&\leftarrow \arg\min\nolimits_{b_l} 
     \phi({\textbf{W}}^{k+1}_{l},\{b^{k+1}_i\}_{i=1}^{l-1},b_l,\{\overline{b}^{k+1}_i\}_{i=l+1}^{L},{\textbf{z}}^{k+1}_{l-1},\\&{\textbf{a}}^{k+1}_{l-1},u^k).
\end{align*}
$U_l(b_l;B)$ is defined as the quadratic approximation of $\phi$ at $\overline{b}^{k+1}_l$ as follows:
\begin{align*}
 U_l(b_l;B)&=\phi(\textbf{W}^{k+1}_{l},\textbf{b}^{k+1}_{l-1},{\textbf{z}}^{k+1}_{l-1},{\textbf{a}}^{k+1}_{l-1},u^k)\\&+\nabla_{\overline{b}^{k+1}_l}\phi^T({\textbf{W}}^{k+1}_{l},{\textbf{b}}^{k+1}_{l-1},{\textbf{z}}^{k+1}_{l-1},{\textbf{a}}^{k+1}_{l-1},u^k)(b_l-\overline{b}^{k+1}_l)\\&+(B/2)\Vert b_l-\overline{b}^{k+1}_l\Vert^2_2,
\end{align*}
where $B> 0$ is a parameter. We set $B=\nu$ for $l=1,\cdots,L-1$ and $B=\rho$ for $l=L$, and solve the resulting subproblems as follows:
\begin{align}
    &{b}^{k+1}_l\leftarrow \arg\min\nolimits_{b_l} {U}_l(b_l;\nu)(l=1,\cdots,L-1). \label{eq:update b}
    \\&{b}^{k+1}_L\leftarrow \arg\min\nolimits_{b_L} {U}_L(b_L;\rho). \label{eq:update bl}
\end{align}
\indent The solutions to Equations \eqref{eq:update b} and \eqref{eq:update bl} are as follows:
\begin{align*}
    &b^{k+1}_l\leftarrow \overline{b}^{k+1}_l-\nabla_{\overline{b}^{k+1}_L}\phi/\nu (l=1,\cdots,L-1).
    \\&b^{k+1}_L\leftarrow \overline{b}^{k+1}_L-\nabla_{\overline{b}^{k+1}_L}\phi/\rho.
\end{align*}
\textbf{7. Update ${z}^{k+1}_l$}\\
\indent The variables ${z}^{k+1}_l(l=1,\cdots,L)$ are updated as follows:
\begin{align*}
    {z}^{k+1}_{l}&\leftarrow \arg\min\nolimits_{z_{l}} L_\rho({\textbf{W}}^{k+1}_{l},{\textbf{b}}^{k+1}_{l},\{z^{k+1}_i\}_{i=1}^{l-1},z_l,\{\overline{z}^{k+1}_i\}_{i=l+1}^{L},\\&\textbf{a}^{k+1}_{l-1},u^k),
\end{align*}
which is equivalent to the following forms for $z_l(l=1,\cdots,L-1)$:
\begin{align}
    \nonumber{z}^{k+1}_{l}&\leftarrow \arg\min\nolimits_{z_{l}} \phi({\textbf{W}}^{k+1}_{l},{\textbf{b}}^{k+1}_{l},\{z^{k+1}_i\}_{i=1}^{l-1},z_l,\{\overline{z}^{k+1}_i\}_{i=l+1}^{L-1},\\&\textbf{a}^{k+1}_{l-1},u^k),
    \label{eq:update z}
    \end{align}
    and for $z_L$:
\begin{align}
\nonumber
    {z}^{k+1}_L &\leftarrow\arg\min\nolimits_{z_L}      \phi(\textbf{W}^{k+1}_{L},\textbf{b}^{k+1}_{L},\{z^{k+1}_i\}_{i=1}^{L-1},z_L,\textbf{a}^{k}_{L-1},u^k)\\&+R(z_L;y).\label{eq:update zl}
\end{align}
Solving Equations \eqref{eq:update z} and \eqref{eq:update zl} proceeds exactly the same as solving Equations \eqref{eq:update overline z} and \eqref{eq:update overline zl}, respectively.\\
\textbf{8. Update ${a}^{k+1}_l$}\\
\indent The variables ${a}^{k+1}_l(l=1,\cdots,L-1)$ are updated as follows:
\begin{align*}
    {a}^{k+1}_{l}&\leftarrow \arg\min\nolimits_{a_{l}} L_\rho({\textbf{W}}^{k+1}_{l},{\textbf{b}}^{k+1}_{l},{\textbf{z}}^{k+1}_{l},\{a^k_i\}_{i=1}^{l-1},a_l,\\&\{\overline{a}^{k+1}_i\}_{i=l+1}^{L-1},u^k),
\end{align*}
\indent which is equivalent to the following form:
\begin{align*}
{a}^{k+1}_{l}&\leftarrow \arg\min\nolimits_{a_{l}} \phi({\textbf{W}}^{k+1}_{l},{\textbf{b}}^{k+1}_{l},{\textbf{z}}^{k+1}_{l},\{a^k_i\}_{i=1}^{l-1},a_l,\\&\{\overline{a}^{k+1}_i\}_{i=l+1}^{L-1},u^k).
\end{align*}
$Q_l(a_l;\tau^{k+1}_l)$ is defined as the quadratic approximation of $\phi$ at $a^{k+1}_l$ as follows:
\begin{align*}{Q}_l(a_{l};{\tau}^{k+1}_l)&=\phi({\textbf{W}}^{k+1}_{l},{\textbf{b}}^{k+1}_{l},{\textbf{z}}^{k+1}_{l},{\textbf{a}}^{k+1}_{l-1},u^k)\\&+(\nabla _{\overline{a}^{k+1}_{l}}\phi)^T({\textbf{W}}^{k+1}_{l},{\textbf{b}}^{k+1}_{l},{\textbf{z}}^{k+1}_{l},{\textbf{a}}^{k+1}_{l-1},u^k)\\&(a_{l}-\overline{a}^{k+1}_{l})+\Vert{\tau}^{k+1}_l\circ (a_{l}-\overline{a}^{k+1}_{l})^{\circ 2}\Vert_{1}/2,
\end{align*}
and we can solve the following problem instead:
\begin{align} a^{k+1}_l&\leftarrow\arg\min\nolimits_{a_l} {Q}_l(a_l;{\tau}^{k+1}_l). \label{eq:update a}
\end{align}
where ${\tau}^{k+1}_l>0$ is a  parameter vector. The solution to Equation \eqref{eq:update a} can be obtained by
${a}^{k+1}_{l}\leftarrow \overline{a}^{k+1}_{l}-\nabla_{\overline{a}^{k+1}_{l}}\phi/{\tau}^{k+1}_{l}.$
To choice of an appropriate  $\tau^{k+1}_l$ is shown  in the supplementary materials.
\subsection{The Application on the GCN  Problem}
\label{sec:gcn problem}
\begin{algorithm}[H]
\caption{The dlADMM Algorithm to Solve Problem \ref{algro: dlADMM GCN}}
\label{algro: dlADMM GCN}
\scriptsize
\begin{algorithmic}[1]
 \REQUIRE ${\mathcal{Y}}$, $\mathcal{Z}_0$, $\mu, \rho$.
\ENSURE $\mathcal{Z}_l$, ${\mathcal{W}_l}, l=1,\cdots, L$.
\STATE \textbf{Initialize:} $k=0$.
\WHILE{ ${\mathcal{Z}}_l^k$, ${\mathcal{W}}_l^k$ not converged} 
\FOR{$l=L$ to $1$}
\IF{$l<L$}
\STATE Update $\overline{\mathcal{Z}}_l^{k+1}$ in Equation \eqref{eq: update overline mathcal z}. 
\ELSE
\STATE Update $\overline{\mathcal{Z}}_L^{k+1}$ in Equation \eqref{eq: original overline mathcal zl}.
\ENDIF
\STATE Update $\overline{\mathcal{W}}_l^{k+1}$ in Equation \eqref{eq:update overline mathcal w}.
\ENDFOR
\FOR{$l=1$ to $L$}
\STATE Update ${\mathcal{W}}_{l}^{k+1}$ in Equation \eqref{eq:update mathcal w}.
\IF{$l<L$}
\STATE Update ${\mathcal{Z}}_{l}^{k+1}$ in Equation
\eqref{eq: mathcal z}.
\ELSE
\STATE Update ${\mathcal{Z}}_{L}^{k+1}$ in Equation
\eqref{eq: mathcal zl}.
\ENDIF
\ENDFOR
\STATE Update $\epsilon^{k+1}\leftarrow\mathcal{Z}_{L}^{k+1}-\tilde{\mathcal{A}}\mathcal{Z}_{L-1}^{k+1}\mathcal{W}_{L}^{k+1}$.
\STATE Update $\mathcal{U}^{k+1}\leftarrow
\mathcal{U}^{k}+\rho\epsilon^{k+1}$.
\ENDWHILE
\end{algorithmic}
\end{algorithm}

\indent In this section, we apply the dlADMM algorithm to the GCN problem, which is one of the state-of-the-art architectures in Graph Neural Networks (GNNs) \cite{wu2020}.
Important notations are listed in Table \ref{tab:GCN notation}.
We consider the semi-supervised node classification task, in which the input of GCN is an undirected and unweighted graph $\mathcal{G}=(\mathcal{V}; \mathcal{E})$, with $\mathcal{V}$ a set of $\mathcal{N}$ nodes and $ \mathcal{E}$ a set of edges between them. $\mathcal{A}\in \{0, 1\}^{\mathcal{N} \times \mathcal{N}} $, $\mathcal{D}\in \mathbb{R}^{\mathcal{N} \times \mathcal{N}}$ are the adjacency matrix and the degree matrix, respectively. $\mathcal{Z}_0=\mathcal{X} \in \mathbb{R}^{\mathcal{N} \times \mathcal{C}_0}$ is the input feature matrix, each row of which corresponds to the input feature vector of one node, and  $\mathcal{Y}$ is the pre-defined node label vector. Then the task of training the GCN model is formulated in the following fashion:
\begin{problem}[GCN Training Problem]
\label{prob: problem 3}
 \begin{align*}
   &\min\nolimits_{\mathcal{W}_l, \mathcal{Z}_l} \quad \mathcal{R}(\mathcal{Z}_L,\mathcal{Y}), \\
   &s.t. \qquad\qquad \mathcal{Z}_l = f_l(\mathcal{\tilde{A}}\mathcal{Z}_{l-1}\mathcal{W}_{l}),\quad l=1, \cdots, L-1\\
   & \qquad\qquad\quad \mathcal{Z}_L = \mathcal{\tilde{A}}\mathcal{Z}_{L-1}\mathcal{W}_{L}.
   \end{align*}
   \end{problem}
In Problem \ref{prob: problem 3}, $\mathcal{\tilde{A}} = (\mathcal{D}+{I})^{-1/2}(\mathcal{A}+{I})(\mathcal{D}+{I})^{-1/2}$ is the normalized adjacency matrix, $f_l$ is a non-linear activation function for the $l$-th layer \ (e.g., ReLU), and $\mathcal{R}(\mathcal{Z}_l,\mathcal{Y})$ is some risk function that should satisfy the conditions mentioned in Section \ref{sec:MLP application}. Similar to the MLP training problem, We relax Problem \ref{prob: problem 3} to  Problem \ref{prob: problem 4} as follows:

\begin{table}
\centering
 \begin{tabular}{cc}
 \hline
 Notations&Descriptions\\ \hline
 $L$& Number of layers.\\
 $\mathcal{A}$ & The adjacency matrix of a graph.\\
 $\mathcal{D}$ & The degree matrix of a graph.\\
 $\tilde{\mathcal{A}}$ & The normalized adjacency matrix of a graph.\\
 $\mathcal{W}_l$& The weight matrix for the $l$-th layer.\\
 $f_l(\cdot)$& The nonlinear activation function for the $l$-th layer.\\
 $\mathcal{Z}_l$& The output for the $l$-th layer.\\
 $\mathcal{Z}_0$& The input feature matrix of the neural network.\\
 $\mathcal{Y}$& The predefined node label matrix.\\
 $\mathcal{R}(\mathcal{Z}_L,\mathcal{Y})$& The risk function for the $L$-th layer.\\
%  $\mathcal{C}$& The number of node classes.\\
\hline
  \end{tabular}
    \caption{Important Notations of the GCN model.}
   \label{tab:GCN notation}
\end{table}
\begin{problem}
\label{prob: problem 4}
    \begin{align*}
   &\min\nolimits_{\mathcal{W}_l,\mathcal{Z}_l} \quad \mathcal{R}(\mathcal{Z}_L,\mathcal{Y}) +  \frac{\mu}{2}\sum_{l=1}^{L-1}\Vert\mathcal{Z}_l - f_l(\mathcal{\tilde{{A}}}\mathcal{Z}_{l-1}\mathcal{W}_{l})\Vert_F^2, \\
   &s.t.  \qquad\qquad\quad \mathcal{Z}_L = \tilde{\mathcal{A}}\mathcal{Z}_{L-1}\mathcal{W}_{L}.
    \end{align*}
\end{problem}
where $\mathbfcal{W}=\{\mathcal{W}_l\}_{l=1}^{L}, \mathbfcal{Z}=\{\mathcal{Z}_l\}_{l=1}^{L}$, and $\mu >0$ is a tuning parameter. The augmented Lagrangian is formulated as follows:
\begin{align*}
  &\nonumber \mathcal{L}_{\rho}(\mathbfcal{W},\mathbfcal{Z},\mathcal{U}) = \mathcal{R}(\mathcal{Z}_L,\mathcal{Y})  +\psi(\mathbfcal{W},\mathbfcal{Z},\mathcal{U}).
\end{align*}
where
 $\psi(\mathbfcal{W},\mathbfcal{Z},\mathcal{U}) =  (\mu/2)\sum_{l=1}^{L-1}\|\mathcal{Z}_l - f_l(\tilde{\mathcal{A}}\mathcal{Z}_{l-1}\mathcal{W}_{l})\|_F^2 
  +\langle \mathcal{U}, \mathcal{Z}_L - \tilde{\mathcal{A}}\mathcal{Z}_{L-1}\mathcal{W}_{L}\rangle + (\rho/2)\|\mathcal{Z}_L - \tilde{\mathcal{A}}\mathcal{Z}_{L-1}\mathcal{W}_{L}\|_F^2$, 
$\rho>0$ is a penalty parameter and $\mathcal{U}$ is a dual variable. $\overline{\mathcal{W}}_l^{k+1}$ and $\overline{\mathcal{Z}}_l^{k+1}$ denote backward updates of the dlADMM for the $l$-th layer in the $(k+1)$-th iteration, while $\mathcal{W}_l^{k+1}$ and $\mathcal{Z}_l^{k+1}$ denote forward updates of the dlADMM algorithm for the $l$-th layer in the $(k+1)$-th iteration. Moreover, some notations are shown as follows for the sake of simplicity: $\overline{\mathbfcal{W}}_l^{k+1}=\{\{\mathcal{W}_i^{k}\}_{i=1}^{l-1},\{\overline{\mathcal{W}}_i^{k+1}\}_{i=l}^{L}\}$, $\overline{\mathbfcal{Z}}_l^{k+1}=\{\{\mathcal{Z}_i^{k}\}_{i=1}^{l-1},\{\overline{\mathcal{Z}}_i^{k+1}\}_{i=l}^{L}\}$, ${\mathbfcal{W}}_l^{k+1}=\{\{\mathcal{W}_i^{k+1}\}_{i=1}^{l},\{\overline{\mathcal{W}}_i^{k+1}\}_{i=l+1}^{L}\}$, ${\mathbfcal{Z}}_l^{k+1}=\{\{\mathcal{Z}_i^{k+1}\}_{i=1}^{l},\{\overline{\mathcal{Z}}_i^{k+1}\}_{i=l+1}^{L}\}$, $\overline{\mathbfcal{W}}^{k+1}=\{\overline{\mathcal{W}}_i^{k+1}\}_{i=1}^{L}$, $\overline{\mathbfcal{Z}}^{k+1}=\{\overline{\mathcal{Z}}_i^{k+1}\}_{i=1}^{L}$, ${\mathbfcal{W}}^{k+1}=\{{\mathcal{W}}_i^{k+1}\}_{i=1}^{L}$, and ${\mathbfcal{Z}}^{k+1}=\{{\mathcal{Z}}_i^{k+1}\}_{i=1}^{L}$. 

Then the dlADMM algorithm to solve Problem \ref{prob: problem 4} is shown in Algorithm \ref{algro: dlADMM GCN}. Specifically, Lines 5, 9, 12 and 14 address four subproblems, namely, $\overline{\mathcal{Z}}_l^{k+1}$, $\overline{\mathcal{W}}_l^{k+1}$,
${\mathcal{W}}_l^{k+1}$ and
${\mathcal{Z}}_l^{k+1}$, respectively. Lines 19 and 20 update the residual $\epsilon^{k+1}$ and the dual variable $\mathcal{U}^{k+1}$, respectively. \\
 \indent Now we discuss four subproblems generated by the dlADMM algorithm  in detail.\\
\textbf{1. Update $\overline{\mathcal{Z}}^{k+1}_{l}$}\\
\indent The variable $\overline{\mathcal{Z}}^{k+1}_{l}$ is updated as follows:
\begin{align*}
     \nonumber\overline{\mathcal{Z}}^{k+1}_{l} &\leftarrow \arg\min\nolimits_{\mathcal{Z}_{l}}\mathcal{L}_{\rho}(\mathbfcal{\overline{W}}^{k+1}_{l+1},\{\mathcal{Z}_i^k\}_{i=1}^{l-1}, \mathcal{Z}_l,\{\overline{\mathcal{Z}}_i^{k+1}\}_{i=l+1}^L,\mathcal{U}^k)\\
     &=\psi(\mathbfcal{\overline{W}}^{k+1}_{l+1},\{\mathcal{Z}_i^k\}_{i=1}^{l-1}, \mathcal{Z}_l,\{\overline{\mathcal{Z}}_i^{k+1}\}_{i=l+1}^L,\mathcal{U}^k),
 \end{align*}
where $l=1, \cdots, L-1$. We still use the quadratic approximation as follows:
\begin{align}\label{eq: update overline mathcal z}
     \overline{\mathcal{Z}}_{l}^{k+1}\leftarrow \arg\min\nolimits_{\mathcal{Z}_{l}} \overline{\mathcal{Q}}_l(\mathcal{Z}_{l}; \overline{\delta}_l^{k+1}),
\end{align}
where $\overline{\mathcal{Q}}_l(\mathcal{Z}_{l}; \overline{\delta}_l^{k+1})
=\psi(\mathbfcal{\overline{W}}^{k+1}_{l+1},\overline{\mathbfcal{Z}}_{l+1}^{k+1},\mathcal{U}^k)+\langle \psi(\mathbfcal{\overline{W}}^{k+1}_{l+1},\overline{\mathbfcal{Z}}_{l+1}^{k+1},\mathcal{U}^k), \mathcal{Z}_l-{\mathcal{Z}}_l^{k}\rangle + (\overline{\delta}_l^{k+1}/2)\|\mathcal{Z}_l-{\mathcal{Z}}_l^{k}\|_F^2$, and $\overline{\delta}_l^{k+1}>0$ is a parameter to be chosen via the backtracking technique, it should satisfy $
 \overline{\mathcal{Q}}_l(\overline{\mathcal{Z}}_{l}^{k+1}; \overline{\delta}_l^{k+1})  \geq \psi(\mathbfcal{\overline{W}}^{k+1}_{l+1},\overline{\mathbfcal{Z}}_{l}^{k+1},\mathcal{U}^k)$.
The solution is 
\begin{align*}\overline{\mathcal{Z}}_{l}^{k+1}
\leftarrow  \mathcal{Z}^k_l-
\psi(\mathbfcal{\overline{W}}^{k+1}_{l+1},\overline{\mathbfcal{Z}}_{l+1}^{k+1},\mathcal{U}^k)/\overline{\delta}_l^{k+1}.\end{align*}
\indent For $l=L$, we have
\begin{align}
      \nonumber\overline{\mathcal{Z}}^{k+1}_{L} &\leftarrow \arg\min\nolimits_{\mathcal{Z}_{L}} \mathcal{L}_{\rho}(\mathbfcal{W}^{k},\{\mathcal{Z}_i^k\}_{i=1}^{L-1}, \mathcal{Z}_L,\mathcal{U}^k)\\
     &=\mathcal{R}(\mathcal{Z}_{L}, \mathcal{Y})+\psi(\mathbfcal{W}^{k},\{\mathcal{Z}_i^k\}_{i=1}^{L-1}, \mathcal{Z}_L,\mathcal{U}^k).
     \label{eq: original overline mathcal zl}
\end{align}
We can solve Equation \eqref{eq: original overline mathcal zl} via FISTA \cite{beck2009fast}.\\
\textbf{2. Update $\overline{\mathcal{W}}_{l}^{k+1}$}\\
\indent The variable $\overline{\mathcal{W}}_{l}^{k+1}$ is updated as follows:
\begin{align*}
\overline{\mathcal{W}}_{l}^{k+1} & \leftarrow  \arg\min\nolimits_{{\mathcal{W}}_{l}} \mathcal{L}_{\rho}(\{\mathcal{W}_{i}^{k}\}_{i=1}^{l-1},{\mathcal{W}}_{l}, \{\overline{\mathcal{W}}_{i}\}_{i=l+1}^{L} ,\overline{\mathbfcal{Z}}_l^{k+1},\mathcal{U}^k)\\
&=\psi(\{\mathcal{W}_{i}^{k}\}_{i=1}^{l-1},{\mathcal{W}}_{l}, \{\overline{\mathcal{W}}_{i}\}_{i=l+1}^{L} ,\overline{\mathbfcal{Z}}_l^{k+1},\mathcal{U}^k).
 \end{align*}
We apply similar quadratic approximation techniques as follows:
\begin{align}\label{eq:update overline mathcal w}
\overline{\mathcal{W}}_{l}^{k+1}\leftarrow \arg\min\nolimits_{\mathcal{W}_{l}} \overline{\mathcal{P}}_{l}(\mathcal{W}_{l};\overline{\sigma}_{l}^{k+1}),
\end{align}
where 
\begin{align*}
&\overline{\mathcal{P}}_l(\mathcal{W}_{l};\overline{\sigma}_l^{k+1})=\psi( \overline{\mathbfcal{W}}_{l+1}^{k+1} ,\overline{\mathbfcal{Z}}_l^{k+1},\mathcal{U}^k)\\
&+\langle\nabla_{\mathcal{W}_{l}^k}\psi( \overline{\mathbfcal{W}}_{l+1}^{k+1} ,\overline{\mathbfcal{Z}}_l^{k+1},\mathcal{U}^k), \mathcal{W}_{l}-\mathcal{W}_{l}^k\rangle\\
&+(\overline{\sigma}_l^{k+1}/2)\|\mathcal{W}_{l}-\mathcal{W}_{l}^k\|_F^2.
 \end{align*}
\\
 and $\overline{\sigma}_l^{k+1}>0$ is a parameter to be chosen via the backtracking technique. It should satisfy:
\begin{align*}
\overline{\mathcal{P}}_l(\mathcal{W}_{l}^{k+1};\overline{\sigma}_l^{k+1}) \geq \psi( \overline{\mathbfcal{W}}_{l+1}^{k+1} ,\overline{\mathbfcal{Z}}_l^{k+1},\mathcal{U}^k).
 \end{align*}
 
The solution to Equation \eqref{eq:update overline mathcal w} is:
\begin{align*}
\overline{\mathcal{W}}_{l}^{k+1}
\leftarrow
 \mathcal{W}_{l}^{k}-\nabla_{\mathcal{W}_{l}^k}\psi( \overline{\mathbfcal{W}}_{l+1}^{k+1} ,\overline{\mathbfcal{Z}}_l^{k+1},\mathcal{U}^k)/\overline{\sigma}_l^{k+1}.
 \end{align*}
\textbf{3. Update ${\mathcal{W}}_{l}^{k+1}$}\\
\indent The variable ${\mathcal{W}}_{l}^{k+1}$ is updated as follows:
\begin{align*}
{\mathcal{W}}_{l}^{k+1} & \leftarrow  \arg\min\nolimits_{{\mathcal{W}}_{l}} \mathcal{L}_{\rho}(\{\mathcal{W}_{i}^{k\!+\!1}\}_{i\!=\!1}^{l\!-\!1},{\mathcal{W}}_{l}, \{\overline{\mathcal{W}}_{i}\}_{i\!=\!l\!+\!1}^{L} ,\mathbfcal{Z}_{l\!-\!1}^{k\!+\!1},\mathcal{U}^k)\\
&=\psi(\{\mathcal{W}_{i}^{k+1}\}_{i=1}^{l-1},{\mathcal{W}}_{l}, \{\overline{\mathcal{W}}_{i}\}_{i=l+1}^{L} ,\mathbfcal{Z}_{l-1}^{k\!+\!1},\mathcal{U}^k).
 \end{align*}
 
Similarly, we define ${\mathcal{P}}_{l}(\mathcal{W}_{l};{\sigma}_{l}^{k+1})$ as the quadratic approximation of $\psi$ at $\mathcal{W}_{l}^k$ that is formulated as below:
\begin{align*}
&{\mathcal{P}}_l(\mathcal{W}_{l};{\sigma}_l^{k+1})=\psi( {\mathbfcal{W}}_{l-1}^{k+1} ,\mathbfcal{Z}_{l-1}^{k\!+\!1},\mathcal{U}^k)\\
&+\langle\nabla_{\overline{\mathcal{W}}_{l}^{k+1}}\psi({\mathbfcal{W}}_{l-1}^{k+1} ,\mathbfcal{Z}_{l-1}^{k\!+\!1},\mathcal{U}^k), \mathcal{W}_{l}-\overline{\mathcal{W}}_{l}^{k+1}\rangle\\
&+({\sigma}_l^{k+1}/2)\|\mathcal{W}_{l}-\overline{\mathcal{W}}_{l}^{k+1}\|_F^2,
 \end{align*}
 where ${\sigma}_l^{k+1}>0$ is a parameter to be chosen via the backtracking technique. It should satisfy ${\mathcal{P}}_l(\mathcal{W}_{l}^{k+1};{\sigma}_l^{k+1}) \geq \psi( {\mathbfcal{W}}_{l-1}^{k+1} ,\mathbfcal{Z}_{l-1}^{k\!+\!1},\mathcal{U}^k)$. Then we solve the following subproblem instead: 
\begin{align}
{\mathcal{W}}_{l}^{k+1}\leftarrow \arg\min\nolimits_{\mathcal{W}_{l}} {\mathcal{P}}_{l}(\mathcal{W}_{l};{\sigma}_{l}^{k+1}).
\label{eq:update mathcal w}
\end{align}
The solution to Equation \eqref{eq:update mathcal w} is:
\begin{align*}
\mathcal{W}_{l}^{k+1}
\leftarrow
 \overline{\mathcal{W}}_{l}^{k+1}-\nabla_{\overline{\mathcal{W}}_{l}^{k+1}}\psi( {\mathbfcal{W}}_{l-1}^{k+1} ,\mathbfcal{Z}_{l-1}^{k\!+\!1},\mathcal{U}^k)/{\sigma}_l^{k+1}.
 \end{align*}
\textbf{4. Update ${\mathcal{Z}}^{k+1}_{l}$}\\
\indent The variable ${\mathcal{Z}}^{k+1}_{l}$ is updated as follows:
\begin{align*}
     &\nonumber{\mathcal{Z}}^{k+1}_{l} \leftarrow \arg\min\nolimits_{\mathcal{Z}_{l}}\mathcal{L}_{\rho}({\mathbfcal{W}}_l^{k+1},\{\mathcal{Z}_i^{k+1}\}_{i=1}^{l-1}, \mathcal{Z}_l,\\\nonumber &\{\overline{\mathcal{Z}}_i^{k+1}\}_{i=l+1}^L,\mathcal{U}^k)\\
     &=\psi({\mathbfcal{W}}_l^{k+1},\{\mathcal{Z}_i^{k+1}\}_{i=1}^{l-1}, \mathcal{Z}_l,\{\overline{\mathcal{Z}}_i^{k+1}\}_{i\!=\!l\!+\!1}^L,\mathcal{U}^k),
 \end{align*}
where $l=1, \cdots, L-1$, and
\begin{align}
      \nonumber{\mathcal{Z}}^{k+1}_{L} &\leftarrow \arg\min\nolimits_{\mathcal{Z}_{L}} \mathcal{L}_{\rho}({\mathbfcal{W}}^{k+1},\{\mathcal{Z}_i^{k+1}\}_{i=1}^{L-1}, \mathcal{Z}_L,\mathcal{U}^k)\\
     &=\mathcal{R}(\mathcal{Z}_{L}, \mathcal{Y})+\psi({\mathbfcal{W}}^{k+1},\{\mathcal{Z}_i^{k+1}\}_{i=1}^{L-1}, \mathcal{Z}_L,\mathcal{U}^k).
     \label{eq: mathcal zl}
\end{align}
Similarly, we define
$\mathcal{Q}_l(\mathcal{Z}_{l}; {\delta}_l^{k+1})$ as the quadratic approximation of $\psi$ at $\mathcal{Z}_{l}^{k+1}$ as follows:
\begin{align*}
&\mathcal{Q}_l(\mathcal{Z}_{l}; {\delta}_l^{k+1})
=\psi(\mathbfcal{W}_l^{k+1},{\mathbfcal{Z}}_{l-1}^{k+1},\mathcal{U}^k)\\
&\!+\!\langle \nabla_{\overline{\mathcal{Z}}_l^{k+1}}\psi(\mathbfcal{W}_l^{k+1},{\mathbfcal{Z}}_{l-1}^{k+1},\mathcal{U}^k), \mathcal{Z}_l\!-\!\overline{\mathcal{Z}}_l^{k+1}\rangle \!+\! ({\delta}_l^{k+1}/2)\|\mathcal{Z}_l-\overline{\mathcal{Z}}_l^{k+1}\|_F^2,
\end{align*}
where $\delta_l^{k+1}>0$ is a parameter to be chosen via the backtracking technique. It should satisfy $
 \mathcal{Q}_l(\mathcal{Z}_{l}^{k+1}; {\delta}_l^{k+1})  \geq \psi(\mathbfcal{W}_l^{k+1},{\mathbfcal{Z}}_{l-1}^{k+1},\mathcal{U}^k)$. We solve the following subproblem instead:
\begin{align}
     \mathcal{Z}_{l}^{k+1}\leftarrow \arg\min\nolimits_{\mathcal{Z}_{l}} \mathcal{Q}_l(\mathcal{Z}_{l}; {\delta}_l^{k+1}).
     \label{eq: mathcal z}
\end{align}
The solution is 
\begin{align*}
\mathcal{Z}_{l}^{k+1}\leftarrow \overline{\mathcal{Z}}_{l}^{k+1}-
 \nabla_{\overline{\mathcal{Z}}_l^{k+1}}\psi(\mathbfcal{W}_l^{k+1},{\mathbfcal{Z}}_{l-1}^{k+1},\mathcal{U}^k)/{\delta}_l^{k+1}.
\end{align*}
When $l=L$,  we can solve Equation \eqref{eq: mathcal zl} by FISTA \cite{beck2009fast}.
\section{Convergence Analysis}
\label{sec:convergence}
\indent In this section, the theoretical convergence of the proposed dlADMM algorithm is analyzed. For the sake of simplicity, we only prove its convergence in the MLP problem. Similar convergence results can be derived for the GCN problem. Before we formally present the convergence results of the dlADMM algorithms, Section \ref{sec:assumption} presents necessary assumptions to guarantee the convergence of dlADMM. In Section \ref{sec:key properties}, we prove the convergence of the dlADMM algorithm to a critical point with a sublinear convergence rate $o(1/k)$.
\subsection{Assumptions}
\label{sec:assumption}
\begin{assumption}[Closed-form Solutions] There exist activation functions $a_l=f_l(z_l)$ such that Equations \eqref{eq:update overline z} and \eqref{eq:update z} have closed form solutions  $\overline{z}^{k+1}_l=\overline{h}(\overline{\textbf{W}}^{k+1}_{l
+1},\overline{\textbf{b}}^{k+1}_{l+1},\overline{\textbf{a}}^{k+1}_{l})$ and $z^{k+1}_l=h(\textbf{W}^{k+1}_l,\textbf{b}^{k+1}_l,\textbf{a}^{k+1}_{l-1})$, respectively, where $\overline{h}(\bullet)$ and $h(\bullet)$ are continuous functions. \label{ass:assumption 1}
\end{assumption}
\indent This assumption can be satisfied by commonly used activation functions such as ReLU and leaky ReLU. For example, for the ReLU function $a_l=\max(z_l,0)$, Equation \eqref{eq:update z} has the following solution: 
\begin{align*}
    z^{k+1}_l=
    \begin{cases} \min(W^{k+1}_{l}a^{k+1}_{l-1}+b^{k+1}_l,0)&z^{k+1}_l\leq0\\
    \max((W^{k+1}_{l}a^{k+1}_{l-1}+b^{k+1}_l+\overline{a}^{k+1}_l)/2,0)&z^{k+1}_l\geq0\\
    \end{cases}
    .
\end{align*}
\begin{assumption}[Objective Function] $F (\textbf{W}, \textbf{b},\textbf{z}, \textbf{a})$ is coercive over the nonempty set $G = \{(\textbf{W}, \textbf{b},\textbf{z}, \textbf{a}) : z_L - W_La_{L-1} -
b_L = 0\}$. In other words, $F (\textbf{W}, \textbf{b},\textbf{z}, \textbf{a}) \rightarrow \infty$ if $(\textbf{W}, \textbf{b},\textbf{z}, \textbf{a})\in G$ and
$\Vert(\textbf{W}, \textbf{b},\textbf{z},\textbf{ a})\Vert \rightarrow \infty$. Moreover, $R(z_l;y)$ is Lipschitz differentiable with Lipschitz constant $H\geq 0$.\label{ass:assumption 2}
\end{assumption}
The Assumption \ref{ass:assumption 2} is mild enough for most common loss functions to satisfy. For example, the cross-entropy and square loss are Lipschitz differentiable.
\subsection{Key Properties}
\label{sec:key properties}
\indent We present the main convergence result of the proposed dlADMM algorithm in this section.  Specifically, as long as Assumptions \ref{ass:assumption 1}-\ref{ass:assumption 2} hold, then Properties \ref{pro:property 1}-\ref{pro:property 3} are satisfied, which are important to prove the global convergence of the proposed dlADMM algorithm. The proof details are included in the supplementary materials.
\begin{property}[Boundness]
\label{pro:property 1}
If $\rho > 2H$, then $\{\textbf{W}^k,\textbf{b}^k
,\textbf{z}^k, \textbf{a}^k,u^k\}$ is bounded, and $L_\rho(\textbf{W}^k,\textbf{b}^k
,\textbf{z}^k, \textbf{a}^k,u^k)$ is lower bounded.
\end{property}
Property \ref{pro:property 1} concludes that all variables and the value of $L_\rho$ have
lower bounds. It is proven under Assumptions \ref{ass:assumption 1} and \ref{ass:assumption 2}, and its proof
can be found in the supplementary materials.
\begin{property}[Sufficient Descent]
\label{pro:property 2}
If $\rho>2H$ so that $C_1=\rho/2-H/2-H^2/\rho>0$, then there exists $C_2=\min(\nu/2,C_1,\{\overline{\theta}^{k+1}_l\}_{l=1}^L,\{{\theta}^{k+1}_l\}_{l=1}^L,\{\overline{\tau}^{k+1}_l\}_{l=1}^{L-1},\{\tau^{k+1}_l\}_{l=1}^{L-1})$ such that
\begin{align}
\nonumber &L_\rho(\textbf{W}^k,\textbf{b}^k,\textbf{z}^k,\textbf{a}^k,u^k)-L_\rho(\textbf{W}^{k+1},\textbf{b}^{k+1},\textbf{z}^{k+1},\textbf{a}^{k+1},u^{k+1})\\\nonumber &\geq C_2(\sum\nolimits_{l=1}^L(\Vert \overline{W}_l^{k+1}-W_l^k\Vert^2_2+\Vert {W}_l^{k+1}-\overline{W}_l^{k+1}\Vert^2_2\\\nonumber&+\Vert \overline{b}_l^{k+1}-b_l^k\Vert^2_2+\Vert {b}_l^{k+1}-\overline{b}_l^{k+1}\Vert^2_2)\\&\nonumber +\sum\nolimits_{l=1}^{L-1}(\Vert \overline{a}_l^{k+1}-a_l^k\Vert^2_2+\Vert {a}_l^{k+1}-\overline{a}_l^{k+1}\Vert^2_2)\\&+\Vert \overline{z}^{k+1}_L-{z}^{k}_L\Vert^2_2+\Vert z^{k+1}_L-\overline{z}^{k+1}_L\Vert^2_2)\label{eq: property2}.
\end{align}
\end{property}
\indent Property \ref{pro:property 2} depicts the  monotonic decrease of the objective value during iterations. The proof of Property \ref{pro:property 2} is detailed in the supplementary materials. 
\begin{property}[Subgradient Bound]
\label{pro:property 3}
There exist a constant $C>0$ and $g\in \partial L_\rho(\textbf{W}^{k+1},\textbf{b}^{k+1},\textbf{z}^{k+1},\textbf{a}^{k+1})$ such that
\begin{align}
    \nonumber\Vert g\Vert &\leq C(\Vert \textbf{W}^{k+1}\!-\!\overline{\textbf{W}}^{k+1}\Vert+\Vert\textbf{b}^{k+1}\!-\!\overline{\textbf{b}}^{k+1}\Vert\\&+\Vert\textbf{z}^{k+1}\!-\!\overline{\textbf{z}}^{k+1}\Vert+\Vert\textbf{a}^{k+1}\!-\!\overline{\textbf{a}}^{k+1}\Vert+\Vert \textbf{z}^{k+1}\!-\!\textbf{z}^k\Vert)\label{eq: property3}.
\end{align}
\end{property}
\indent Property \ref{pro:property 3} ensures that the subgradient of the objective function is bounded by variables. The proof of Property \ref{pro:property 3} requires Property \ref{pro:property 1} and the proof is elaborated in the supplementary materials. Now the global convergence of the dlADMM algorithm is presented. The following theorem states that Properties \ref{pro:property 1}-\ref{pro:property 3} are guaranteed.
\begin{theorem}
\label{thero: theorem 1}
For any $\rho>2H$, if Assumptions \ref{ass:assumption 1} and \ref{ass:assumption 2} are satisfied, then Properties \ref{pro:property 1}-\ref{pro:property 3} hold. 
\end{theorem}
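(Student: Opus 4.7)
The plan is to verify Properties 1--3 in a carefully chosen order, exploiting the structure of the quadratic-approximation subproblems together with a key identity that links the dual variable to a gradient of the smooth loss. First I would establish the crucial relation $u^{k+1} = -\nabla R(z_L^{k+1};y)$ by writing the first-order optimality condition of the $z_L$-subproblem (Equation~\eqref{eq:update zl}), which is smooth and unconstrained in $z_L$, and combining it with the dual update. This identity is the heart of the analysis: it converts the potentially unbounded dual variable into a quantity directly controlled by Assumption~\ref{ass:assumption 2}, yielding $\|u^{k+1}-u^k\|^2 \le H^2\|z_L^{k+1}-z_L^k\|^2$.

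Next I would establish Property~\ref{pro:property 2} (sufficient descent). For each of the six subproblems updated via quadratic approximation (the updates of $\overline{a}_l,a_l,\overline{b}_l,b_l,\overline{W}_l,W_l$), I would use the majorization guaranteed by backtracking together with the fact that the iterate minimizes the quadratic model to obtain a per-update descent of the form $(\mathrm{const}/2)\|x^{\mathrm{new}}-x^{\mathrm{old}}\|_2^2$. For the $z_l$ updates with $l<L$, a descent of at least $(\nu/2)\|z_l^{\mathrm{new}}-z_l^{\mathrm{old}}\|_2^2$ comes from the $\nu$-strong convexity of $L_\rho$ in $z_l$ induced by the $\|z_l-W_la_{l-1}-b_l\|_2^2$ penalty; for $z_L$ the analogous $\rho$-strong convexity (plus an $H$-smooth term from $R$) gives a $(\rho/2-H/2)\|z_L^{\mathrm{new}}-z_L^{\mathrm{old}}\|_2^2$ descent via the descent lemma. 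Summing over the backward and forward sweeps and subtracting the dual ascent $\|u^{k+1}-u^k\|_2^2/\rho \le (H^2/\rho)\|z_L^{k+1}-z_L^k\|_2^2$ produces exactly the constant $C_1=\rho/2-H/2-H^2/\rho$, which is positive precisely when $\rho>2H$, yielding \eqref{eq: property2}.

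Property~\ref{pro:property 1} then follows largely by telescoping Property~\ref{pro:property 2}. To get the lower bound on $L_\rho^k$, I would use $u^{k+1}=-\nabla R(z_L^{k+1};y)$ to rewrite the coupling terms and apply the standard descent lemma so that $L_\rho \ge F(\textbf{W},\textbf{b},\textbf{z},\textbf{a}) - (\mathrm{const})\cdot\|\nabla R\|_2^2/\rho$; coerciveness of $F$ on $G$ then forces a uniform lower bound. Combined with monotone descent, this lower-bounds $L_\rho^k$, hence upper-bounds every quadratic penalty, hence bounds the constraint residual, and coerciveness of $F$ yields boundedness of $(\textbf{W}^k,\textbf{b}^k,\textbf{z}^k,\textbf{a}^k)$; boundedness of $u^k$ is then immediate from the identity of Step~1 combined with continuity of $\nabla R$ on the bounded iterates. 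For Property~\ref{pro:property 3}, I would assemble a subgradient $g$ of $L_\rho$ at the $(k{+}1)$-th iterate block by block: each first-order optimality condition of a forward subproblem expresses the gradient of $\phi$ \emph{at an earlier iterate} (e.g.\ at $\overline{W}_l^{k+1}$ rather than $W_l^{k+1}$) as an element of a subdifferential, so the block of $g$ at $W_l^{k+1}$ is a difference of two $\nabla_{W_l}\phi$'s evaluated at nearby points. Local Lipschitzness of $\nabla\phi$ on the bounded set (from Property~\ref{pro:property 1}) turns these differences into the claimed norms $\|\textbf{W}^{k+1}-\overline{\textbf{W}}^{k+1}\|$ etc., while the dual block of $g$ reduces to the primal residual, which is $\|z_L^{k+1}-z_L^k\|$ up to a factor via the Step~1 identity.

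The main obstacle is Property~\ref{pro:property 2}, specifically absorbing the dual ascent. The dual step always \emph{increases} $L_\rho$, and the only descent available to offset it comes from the $z_L$-updates; getting the precise algebra that produces $C_1=\rho/2-H/2-H^2/\rho>0$ requires both the gradient identity of Step~1 and the Lipschitz descent lemma applied carefully across the $\overline{z}_L\to z_L$ step. A secondary technical nuisance is that the quadratic approximations for $\overline{a}_l$ and $W_l$ use elementwise parameters $\overline\tau_l^{k+1},\overline\theta_l^{k+1}$ produced by backtracking; one must argue these are uniformly bounded away from $0$ and $\infty$ so that $C_2$ in \eqref{eq: property2} is a genuine positive constant, which follows because $\nabla\phi$ is locally Lipschitz on the bounded iterate set furnished by Property~\ref{pro:property 1}.
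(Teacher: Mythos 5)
Your proposal follows essentially the same route as the paper's own proof: the dual--gradient identity $u^{k}=-\nabla_{z_L^{k}} R(z_L^{k};y)$ obtained from the optimality condition of the $z_L$-subproblem plus the dual update, per-block descent from the backtracking majorizations, absorption of the dual ascent $(1/\rho)\Vert u^{k+1}-u^k\Vert_2^2\le (H^2/\rho)\Vert z_L^{k+1}-z_L^k\Vert_2^2$ into the $z_L$-descent to produce $C_1=\rho/2-H/2-H^2/\rho$, coercivity of $F$ for the lower bound and boundedness, and blockwise optimality conditions with Cauchy--Schwarz on the bounded iterates for the subgradient bound. The only (harmless) overreach is the claim that $L_\rho$ is $\nu$-strongly convex in $z_l$ for $l<L$ --- the term $\Vert a_l-f_l(z_l)\Vert_2^2$ is generally nonconvex --- but Property \ref{pro:property 2} only needs non-increase for those blocks, which follows directly from optimality of the update, exactly as the paper argues.
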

\begin{proof}
This theorem can be concluded by the proofs in the supplementary materials.
\end{proof}
\indent The next theorem presents the global convergence of the dlADMM algorithm.
\begin{theorem} [Global Convergence]
\label{thero: theorem 2}
If $\rho>2H$, then for the variables $(\textbf{W},\textbf{b},\textbf{z},\textbf{a},u)$ in Problem \ref{prob:problem 2}, starting from any $(\textbf{W}^0,\textbf{b}^0,\textbf{z}^0,\textbf{a}^0,u^0)$, it has at least a limit point $(\textbf{W}^*,\textbf{b}^*,\textbf{z}^*,\textbf{a}^*,u^*)$, and any limit point $(\textbf{W}^*,\textbf{b}^*,\textbf{z}^*,\textbf{a}^*,u^*)$ is a critical point of Problem \ref{prob:problem 2}. That is, $0\in \partial L_\rho(\textbf{W}^*,\textbf{b}^*,\textbf{z}^*,\textbf{a}^*,u^*)$. Or equivalently, 
\begin{align*}
    & z^*_L=W^*_La^*_{L-1}+b^*_L,\\
    & 0\in\partial_{\textbf{W}^*} L_\rho(\textbf{W}^*,\textbf{b}^*,\textbf{z}^*,\textbf{a}^*,u^*),
    & \nabla_{\textbf{b}^*} L_\rho(\textbf{W}^*,\textbf{b}^*,\textbf{z}^*,\textbf{a}^*,u^*)=0,\\
    & 0\in\partial_{\textbf{z}^*} L_\rho(\textbf{W}^*,\textbf{b}^*,\textbf{z}^*,\textbf{a}^*,u^*),
    & \nabla_{\textbf{a}^*} L_\rho(\textbf{W}^*,\textbf{b}^*,\textbf{z}^*,\textbf{a}^*,u^*)=0.
\end{align*}
\end{theorem}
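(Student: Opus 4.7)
The plan is to chain together the three properties established in Theorem 1. First, Property 1 gives boundedness of the full iterate sequence $\{(\textbf{W}^k,\textbf{b}^k,\textbf{z}^k,\textbf{a}^k,u^k)\}$, so by the Bolzano--Weierstrass theorem there exists a convergent subsequence $\{(\textbf{W}^{k_j},\textbf{b}^{k_j},\textbf{z}^{k_j},\textbf{a}^{k_j},u^{k_j})\}$ with some limit $(\textbf{W}^*,\textbf{b}^*,\textbf{z}^*,\textbf{a}^*,u^*)$. This already delivers the first assertion of the theorem, so the bulk of the work is to show that every such limit point is critical.

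Next, I would turn the sufficient-descent and lower-boundedness into vanishing successive differences. Since $L_\rho(\textbf{W}^k,\textbf{b}^k,\textbf{z}^k,\textbf{a}^k,u^k)$ is monotonically decreasing by Property 2 and lower bounded by Property 1, it converges to a finite value. Telescoping the inequality in Property 2 over $k=0,1,2,\ldots$ and dividing by $C_2>0$ yields
$$\sum_{k=0}^{\infty}\Bigl(\sum_{l=1}^{L}\bigl(\|\overline{W}_l^{k+1}-W_l^k\|_2^2+\|W_l^{k+1}-\overline{W}_l^{k+1}\|_2^2+\|\overline{b}_l^{k+1}-b_l^k\|_2^2+\|b_l^{k+1}-\overline{b}_l^{k+1}\|_2^2\bigr)+\cdots\Bigr)<\infty,$$
so every summand tends to zero. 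Triangle inequalities then give $\|\textbf{W}^{k+1}-\textbf{W}^k\|\to 0$, $\|\textbf{W}^{k+1}-\overline{\textbf{W}}^{k+1}\|\to 0$, and similarly for $\textbf{b},\textbf{z},\textbf{a}$. In particular, along the convergent subsequence $\{k_j\}$ the barred iterates $\overline{\textbf{W}}^{k_j+1}$, etc., also converge to the same limit. Combining with the dual update $u^{k+1}=u^k+\rho\, r^{k+1}$ and the Lipschitz-differentiability bound on $\|u^{k+1}-u^k\|$ (coming from the optimality condition of the $z_L$ subproblem and Assumption 2, which is essentially what drives $\rho>2H$ in Property 1), I would deduce $r^{k+1}\to 0$, giving the primal feasibility $z_L^*=W_L^*a_{L-1}^*+b_L^*$ at the limit.

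The criticality conclusion then comes from Property 3: it produces $g^{k+1}\in\partial L_\rho(\textbf{W}^{k+1},\textbf{b}^{k+1},\textbf{z}^{k+1},\textbf{a}^{k+1},u^{k+1})$ whose norm is bounded by exactly the differences that were just shown to vanish, hence $g^{k+1}\to 0$. Restricting to the subsequence $\{k_j\}$, the iterates converge to the limit point and $g^{k_j+1}\to 0$, so by the outer semicontinuity (closedness) of the limiting subdifferential for proper lower semicontinuous functions, $0\in\partial L_\rho(\textbf{W}^*,\textbf{b}^*,\textbf{z}^*,\textbf{a}^*,u^*)$. Unpacking this inclusion block by block gives the four componentwise optimality conditions in the theorem statement.

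The main obstacle I anticipate is the rigorous handling of the subdifferential passage to the limit, because $L_\rho$ is genuinely nonconvex and nonsmooth: it contains coupled terms $\|a_l-f_l(z_l)\|_2^2$ with $f_l$ possibly nondifferentiable (e.g. ReLU), plus nonsmooth regularizers $\Omega_l(W_l)$. One cannot use plain convex subdifferential calculus; the argument needs the general (limiting/Mordukhovich) subdifferential and its graph-closedness, together with continuity of the smooth gradient pieces at the limit. This is precisely where Assumption 1 (continuous closed-form dependence $\overline{h}$ and $h$ for the $z_l$ updates) and Assumption 2 (Lipschitz differentiability of $R$) do their work, guaranteeing that the partial gradients entering $g^{k+1}$ converge under subsequential convergence of the iterates and that the nonsmooth selections in $\partial\Omega_l$ and $\partial R$ respect the graph-closedness property.
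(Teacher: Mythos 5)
Your proposal is correct and follows essentially the same route as the paper's proof: Property 1 gives boundedness and hence a convergent subsequence, Properties 1 and 2 give convergence of $L_\rho$ and thus vanishing successive differences, Property 3 then gives a subgradient $g^k$ with $\Vert g^k\Vert\to 0$, and closedness of the general (limiting) subdifferential yields $0\in\partial L_\rho$ at the limit point. The only bookkeeping difference is that the paper derives $\Vert\overline{\textbf{z}}^{k+1}-\textbf{z}^{k}\Vert\to 0$ and $\Vert\textbf{z}^{k+1}-\overline{\textbf{z}}^{k+1}\Vert\to 0$ for the inner layers $l<L$ from the continuity of $h,\overline{h}$ in Assumption \ref{ass:assumption 1} (Property \ref{pro:property 2} only controls the $L$-th layer $z$), whereas you subsume this under a generic ``similarly for $\textbf{z}$''; since Property \ref{pro:property 3}'s bound involves all layers of $\textbf{z}$, that step deserves the explicit appeal to Assumption \ref{ass:assumption 1} that you only gesture at in your closing paragraph.
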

\begin{proof}
Because $(\textbf{W}^k,\textbf{b}^k,\textbf{z}^k,\textbf{a}^k,u^k)$  is bounded, there exists a subsequence  $(\textbf{W}^s,\textbf{b}^s,\textbf{z}^s,\textbf{a}^s,\textbf{u}^s)$ such that $(\textbf{W}^s,\textbf{b}^s,\textbf{z}^s,\textbf{a}^s,u^s)\rightarrow (\textbf{W}^*,\textbf{b}^*,\textbf{z}^*,\textbf{a}^*,u^*)$ where $(\textbf{W}^*,\textbf{b}^*,\textbf{z}^*,\textbf{a}^*,u^*)$ is a limit point.
By Properties \ref{pro:property 1} and \ref{pro:property 2}, $L_\rho(\textbf{W}^k,\textbf{b}^k,\textbf{z}^k,\textbf{a}^k,u^k)$ is non-increasing and  lower bounded and hence converges.  By Property \ref{pro:property 2}, we prove that $\Vert \overline{\textbf{W}}^{k+1}-\textbf{W}^{k}\Vert\rightarrow 0$, $\Vert \overline{\textbf{b}}^{k+1}-\textbf{b}^{k}\Vert\rightarrow 0$, $\Vert \overline{\textbf{a}}^{k+1}-\textbf{a}^{k}\Vert\rightarrow 0$, $\Vert \textbf{W}^{k+1}-\overline{\textbf{W}}^{k+1}\Vert\rightarrow 0$, $\Vert \textbf{b}^{k+1}-\overline{\textbf{b}}^{k+1}\Vert\rightarrow 0$, and $\Vert \textbf{a}^{k+1}-\overline{\textbf{a}}^{k+1}\Vert\rightarrow 0$, as $k\rightarrow \infty$ . Therefore $\Vert \textbf{W}^{k+1}-{\textbf{W}}^{k}\Vert\rightarrow 0$, $\Vert \textbf{b}^{k+1}-{\textbf{b}}^{k}\Vert\rightarrow 0$, and $\Vert \textbf{a}^{k+1}-{\textbf{a}}^{k}\Vert\rightarrow 0$, as $k\rightarrow \infty$. Moreover, from Assumption \ref{ass:assumption 1}, we know that $\overline{\textbf{z}}^{k+1}\rightarrow \textbf{z}^{k}$ and $\textbf{z}^{k+1}\rightarrow \overline{\textbf{z}}^{k+1}$  as $k\rightarrow \infty$. Therefore, $\textbf{z}^{k+1}\rightarrow \textbf{z}^{k}$.  We infer there exists $g^k\in \partial L_\rho(\textbf{W}^k,\textbf{b}^k,\textbf{z}^k,\textbf{a}^k,u^k)$ such that $\Vert g^k\Vert \rightarrow 0$ as $k\rightarrow \infty$ based on Property \ref{pro:property 3}. Specifically, $\Vert g^s\Vert \rightarrow 0$ as $s\rightarrow \infty$. According to the definition of general subgradient (Defintion 8.3 in \cite{rockafellar2009variational}), we have $0\in \partial L_\rho(\textbf{W}^*,\textbf{b}^*,\textbf{z}^*,\textbf{a}^*,u^*)$. In other words, the limit point $(\textbf{W}^*,\textbf{b}^*,\textbf{z}^*,\textbf{a}^*,u^*)$ is a critical point of $L_\rho$ defined in Equation \eqref{eq: Lagrangian}.
\end{proof}
\indent Theorem \ref{thero: theorem 2} shows that our dlADMM algorithm converges globally for sufficiently large $\rho$, which is consistent with previous literature \cite{wang2015global,kiaee2016alternating}. The next theorem shows that the dlADMM converges globally with a sublinear convergence rate $o(1/k)$.
\begin{theorem}[Convergence Rate]
For a sequence\\ $(\textbf{W}^k,\textbf{b}^k,\textbf{z}^k,\textbf{a}^k,u^k)$,  define $c_k=\min\nolimits_{0\leq i\leq k}(\sum\nolimits_{l=1}^L(\Vert \overline{W}_l^{i+1}-W_l^i\Vert^2_2+\Vert {W}_l^{i+1}-\overline{W}_l^{i+1}\Vert^2_2+\Vert \overline{b}_l^{i+1}-b_l^i\Vert^2_2+\Vert {b}_l^{i+1}-\overline{b}_l^{i+1}\Vert^2_2) +\sum\nolimits_{l=1}^{L-1}(\Vert \overline{a}_l^{i+1}-a_l^i\Vert^2_2+\Vert {a}_l^{i+1}-\overline{a}_l^{i+1}\Vert^2_2)+\Vert \overline{z}^{i+1}_L-{z}^{i}_L\Vert^2_2+\Vert z^{i+1}_L-\overline{z}^{i+1}_L\Vert^2_2)$, then the convergence rate of $c_k$ is $o(1/k)$.
\label{thero: theorem 3}
\end{theorem}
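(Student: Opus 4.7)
The plan is to chain together Property~\ref{pro:property 1} and Property~\ref{pro:property 2}, turn the per-iteration descent into a summability statement on the quantity that defines $c_k$, and then upgrade the easy $O(1/k)$ bound to the claimed $o(1/k)$ rate by a tail-splitting argument.

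First I would abbreviate $d_{i+1}$ for the full bracketed sum of squared norms on the right-hand side of \eqref{eq: property2}, so that Property~\ref{pro:property 2} reads $L_\rho^i - L_\rho^{i+1} \ge C_2\, d_{i+1}$ for every $i \ge 0$, where $L_\rho^i := L_\rho(\textbf{W}^i,\textbf{b}^i,\textbf{z}^i,\textbf{a}^i,u^i)$. Telescoping from $i=0$ to $i=k$ gives $L_\rho^0 - L_\rho^{k+1} \ge C_2 \sum_{i=0}^{k} d_{i+1}$, and Property~\ref{pro:property 1} guarantees that $L_\rho^{k+1}$ is bounded below by some constant $\underline{L}$ independent of $k$. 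Hence $\sum_{i=0}^{k} d_{i+1} \le (L_\rho^0 - \underline{L})/C_2$ uniformly in $k$, and letting $k\to\infty$ yields the key summability bound $\sum_{i=0}^{\infty} d_{i+1} < \infty$.

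Second, observe that $c_k = \min_{0 \le i \le k} d_{i+1}$ by definition, so the crude averaging inequality $(k+1) c_k \le \sum_{i=0}^{k} d_{i+1}$ immediately gives $c_k = O(1/k)$. To sharpen this to $o(1/k)$, I would take the minimum only over the second half of the iterates: setting $m = \lceil k/2\rceil$, one has $c_k \le \min_{m \le i \le k} d_{i+1} \le \frac{1}{k-m+1}\sum_{i=m}^{k} d_{i+1} \le \frac{2}{k}\, T_m$, where $T_m := \sum_{i \ge m} d_{i+1}$ is the tail of the convergent series established in the first step. Since $T_m \to 0$ as $k\to\infty$, this gives $k \cdot c_k \to 0$, i.e., $c_k = o(1/k)$ as claimed.

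The main obstacle is purely the last step: passing from the trivial $O(1/k)$ averaging bound to the claimed $o(1/k)$ rate. Everything preceding it is bookkeeping directly on the two established properties, but the tail-splitting step has to be written carefully so that the factor $2/k$ survives without silently absorbing the tail's decay into the constants. Once the summability $\sum d_{i+1} < \infty$ is in hand, the rate statement follows from this standard analytic trick without any additional assumptions on the sequence generated by the dlADMM algorithm.
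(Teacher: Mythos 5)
Your proposal is correct, and its first half (telescoping Property~\ref{pro:property 2}, invoking the lower bound from Property~\ref{pro:property 1} to get $\sum_{i} d_{i+1} < \infty$) is exactly the paper's argument. Where you diverge is the final step: the paper does not prove the $o(1/k)$ rate directly, but instead verifies three conditions --- $c_k \ge c_{k+1}$ (since the minimum is taken over a growing index set), $\sum_k c_k < \infty$ (via $c_k \le d_{k+1}$ and the summability of $d$), and $c_k \ge 0$ --- and then cites an external result (Lemma~1.2 of the reference \cite{deng2017parallel}) to conclude $c_k = o(1/k)$. You instead unfold that lemma's content and prove it inline with the standard tail-splitting trick: $c_k \le \min_{m\le i\le k} d_{i+1} \le \frac{1}{k-m+1}\sum_{i=m}^{k} d_{i+1} \le \frac{2}{k}\,T_m$ with $m=\lceil k/2\rceil$, and $T_m \to 0$ because it is the tail of a convergent nonnegative series. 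Your version is self-contained and slightly leaner --- you never need the monotonicity of $c_k$ as a separate condition, since bounding the minimum by the minimum over the second half of the indices is immediate from the definition --- while the paper's version is shorter on the page at the cost of an external citation. The inequalities in your tail-splitting step all check out ($k-\lceil k/2\rceil+1 \ge k/2$, min bounded by average, partial tail bounded by full tail), so there is no gap.
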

\begin{proof}
The proof of this theorem is included in the supplementary materials. 
\end{proof}

\section{Experiments}
\label{sec:experiment}
In this section, the proposed dlADMM is evaluated on MLP models and GCN models using seven benchmark datasets. Effectiveness, efficiency, and convergence of the proposed dlADMM algorithm are compared with state-of-the-art optimizers. The experiments on MLP models were conducted on a 64-bit machine with Intel(R) Xeon processor and GTX1080Ti GPU, and the experiments on GCN models were conducted on a 64-bit machine with Xeon processor and NVIDIA Quadro RTX 5000 GPU.
\subsection{Experiments on MLP Models}
\begin{table}[]
    \centering
    \begin{tabular}{c|c|c|c|c}
    \hline\hline
         Dataset&Feature\#&Class\#& \tabincell{c}{Training\\ Sample\#}& \tabincell{c}{Test\\ Sample\#} \\
         \hline
         MNIST&784&10&55,000&10,000\\\hline Fashion MNIST&784&10&60,000&10,000\\\hline  \hline
    \end{tabular}
    \caption{Statistics of two benchmark datasets on MLP models.}
    \label{tab:MLP dataset}
\end{table}
\subsubsection{Experiment Setup}
\indent For MLP models, two benchmark datasets were used for performance evaluation:  MNIST \cite{lecun1998gradient} and  Fashion MNIST \cite{xiao2017fashion}. The MNIST dataset has ten classes of handwritten-digit images, which was firstly introduced by LeCun et al. in 1998 \cite{lecun1998gradient}. Unlike the MNIST dataset, the Fashion MNIST dataset has ten classes of assortment images on the website of Zalando, which is Europe's largest online fashion platform \cite{xiao2017fashion}. Their statistics are available in Table \ref{tab:MLP dataset}. We set up a network architecture which contained two hidden layers with $1,000$ neurons each. The ReLU was used for the activation function for both network structures. The loss function was set as the deterministic cross-entropy loss. $\nu$ and $\rho$ were both set as $10^{-6}$. The number of iteration (i.e. epoch) was set to $200$.\\
\indent GD and its variants and ADMM serve as comparison methods. GD-based methods trained models in a full-batch fashion. All parameters were chosen based on the optimal training accuracy. All comparison methods are outlined as follows: \\
\indent 1. Gradient Descent (GD) \cite{bottou2010large}. The GD updates parameters based on their gradients. The learning rate of GD was set to $10^{-6}$.\\
\indent 2. Adaptive gradient (Adagrad) \cite{duchi2011adaptive}. Rather than fixing the learning rate, Adagrad adapts the learning rate to some hyperparameter. The learning rate was set to $10^{-3}$.\\
\indent 3. Adaptive learning rate (Adadelta) \cite{zeiler2012adadelta}. The Adadelta is proposed to overcome the sensitivity to hyperparameter selection. The learning rate was set to $0.1$.\\
\indent 4. Adaptive momentum estimation (Adam) \cite{kingma2014adam}. The Adam estimates the first and second momentums in order to correct the biased gradient. The learning rate of Adam was set to $10^{-3}$.\\
\indent 5. Alternating Direction Method of Multipliers (ADMM) \cite{taylor2016training}. ADMM is a powerful convex optimization method because it can split an objective function into a series of subproblems, which are coordinated to get global solutions. It is scalable to large-scale datasets and supports parallel computations. The $\rho$ of ADMM was set to $1$.
\subsubsection{Experimental Results}
\begin{figure}[h]
  \centering
\begin{minipage}
{0.49\linewidth}
\centerline{\includegraphics[width=\columnwidth]
{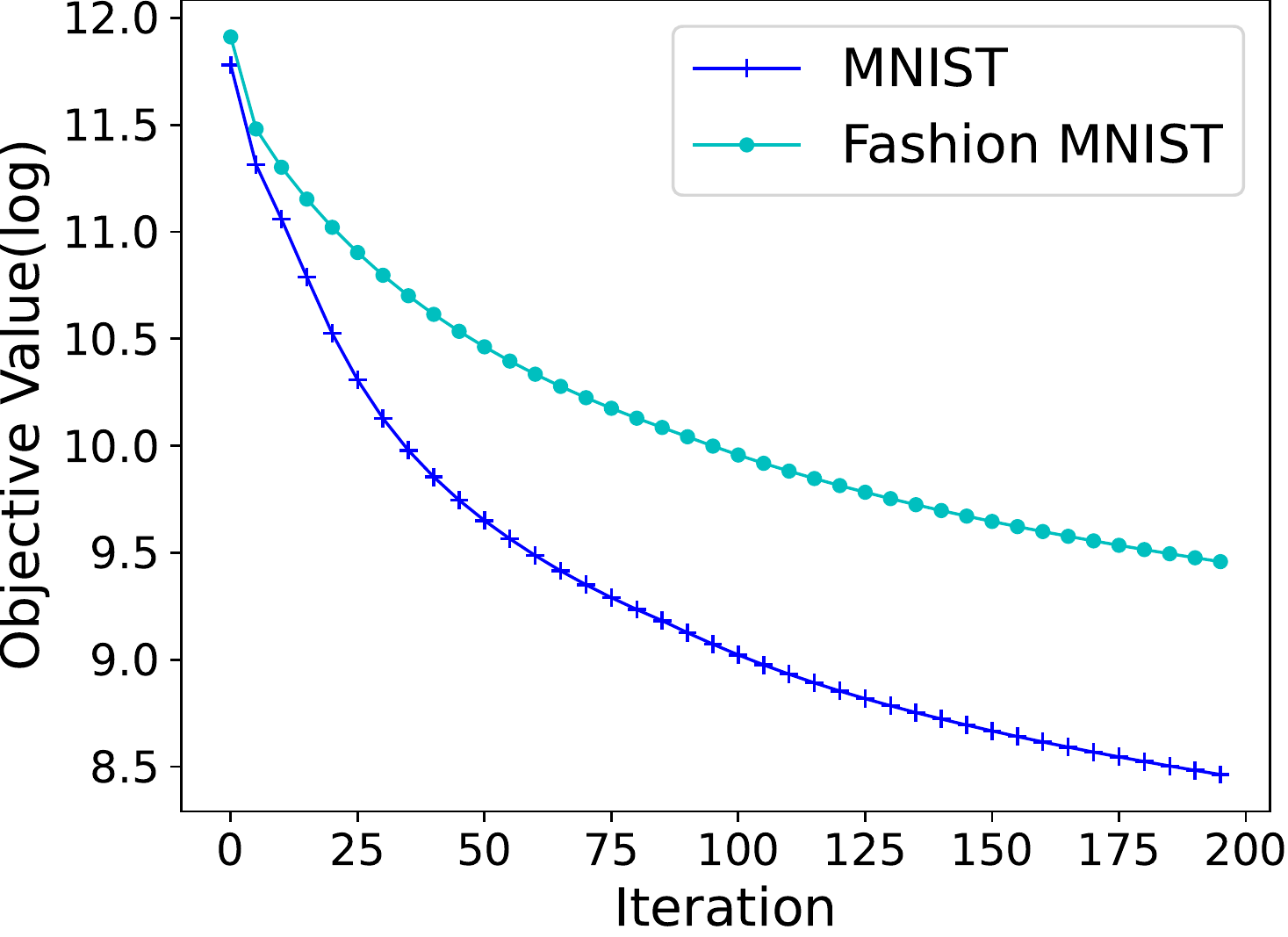}}
\centerline{(a). Objective value}
\end{minipage}
\hfill
\begin{minipage}
{0.49\linewidth}
\centerline{\includegraphics[width=\columnwidth]
{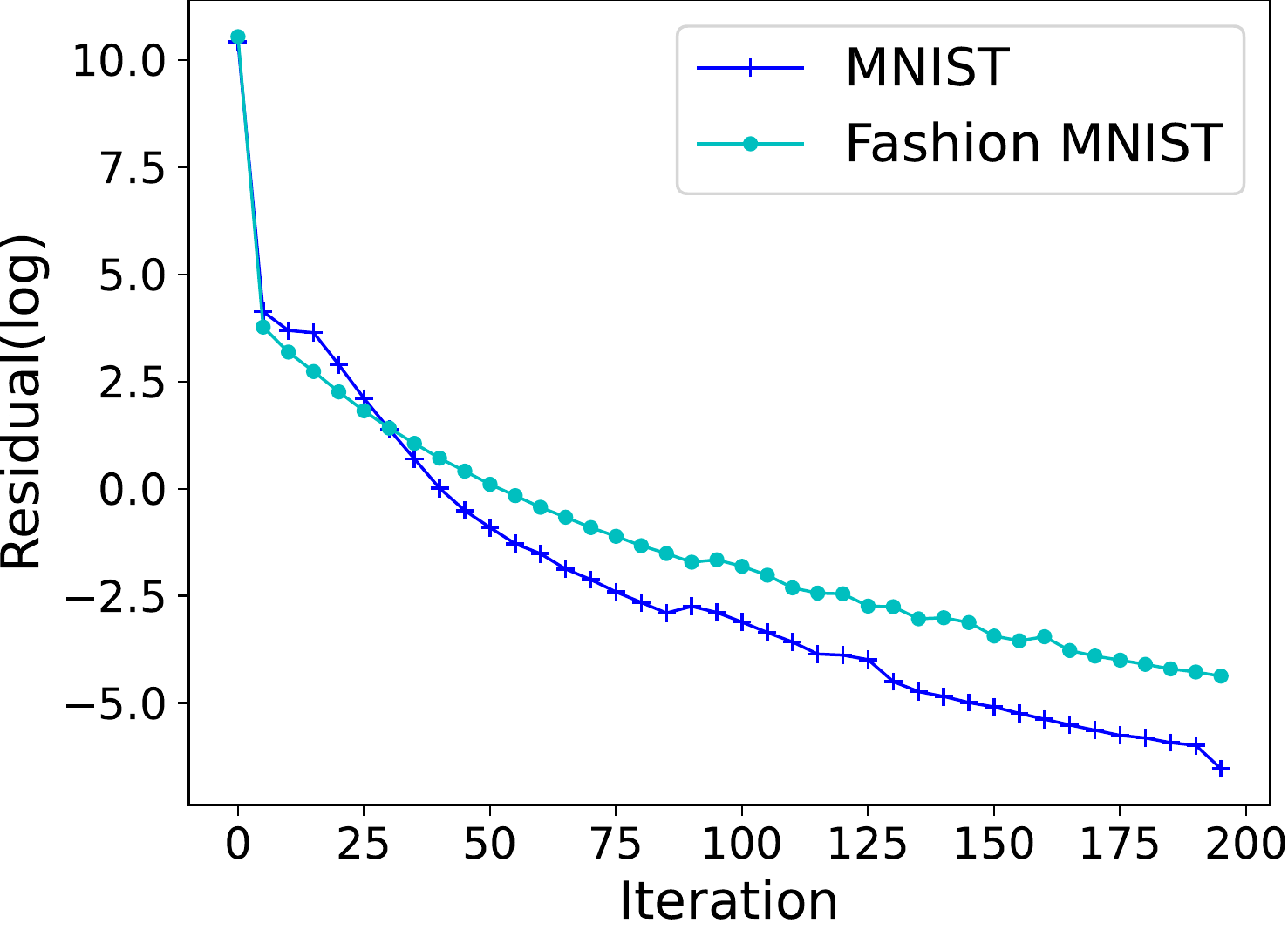}}
\centerline{(b).  Residual}
\end{minipage}
  \caption{Convergence curves of the proposed dlADMM on two datasets when $\rho=1$.}
  \label{fig:convergence}
\end{figure}
\begin{figure}[h]
  \centering
\begin{minipage}
{0.49\linewidth}
\centerline{\includegraphics[width=\textwidth]
{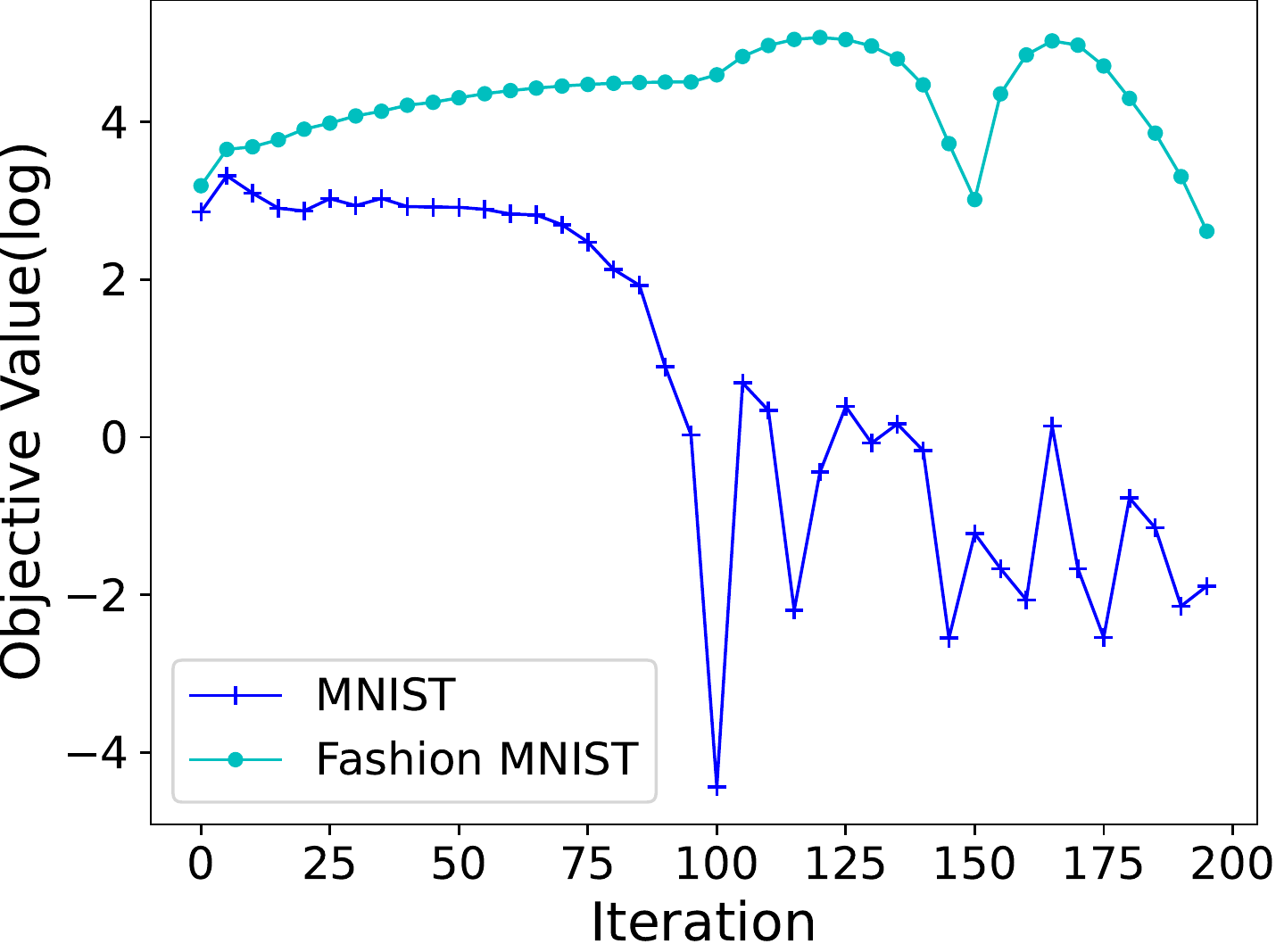}}
\centerline{(a). Objective value}
\end{minipage}
\hfill
\begin{minipage}
{0.49\linewidth}
\centerline{\includegraphics[width=\textwidth]
{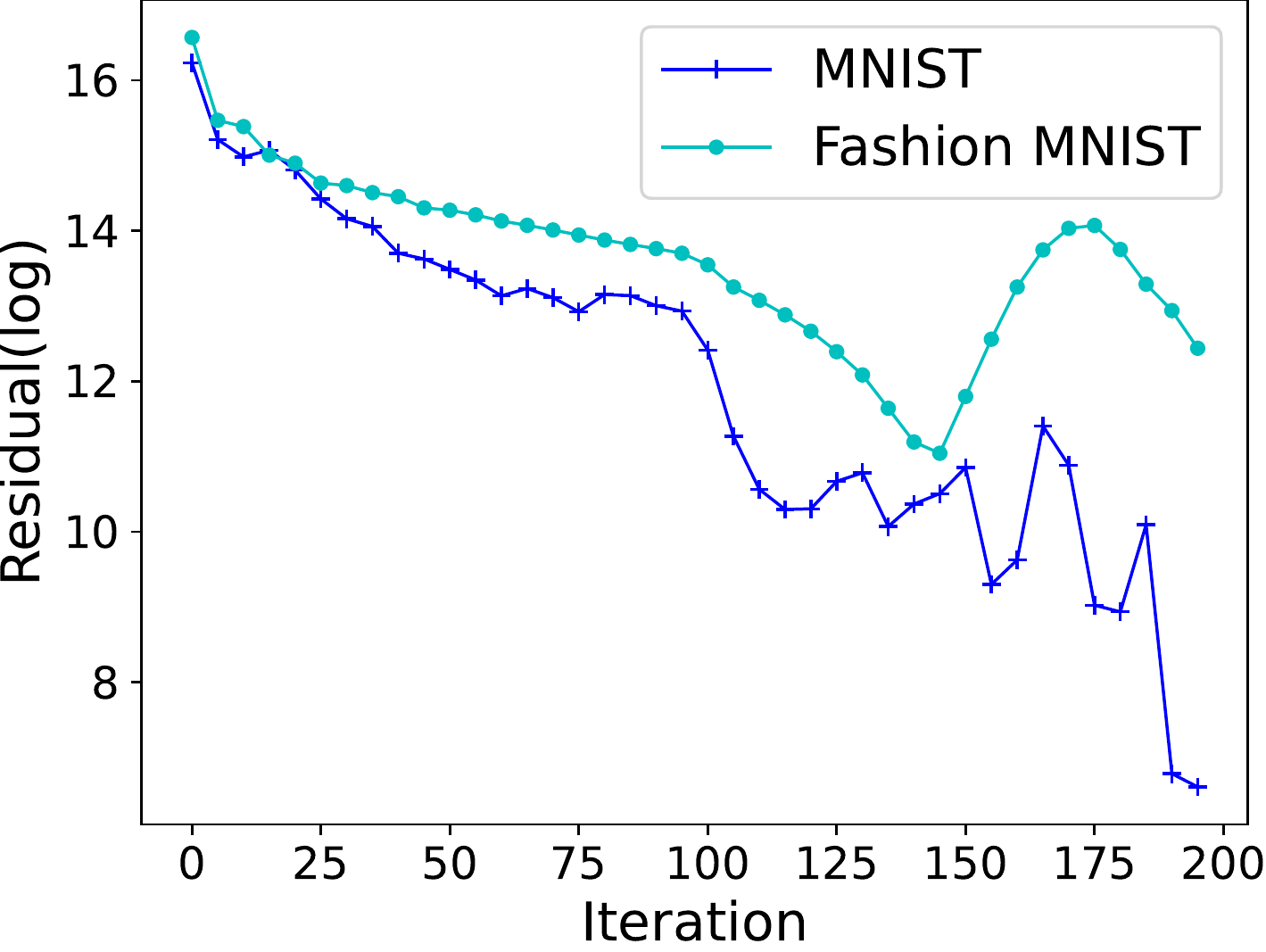}}
\centerline{(b). Residual}
\end{minipage}
  \caption{Divergence curves of the proposed dlADMM on two datasets when $\rho=10^{-6}$.}
  \label{fig:divergence}
\end{figure}
\begin{figure}[h]
  \centering
\begin{minipage}
{0.49\linewidth}
\centerline{\includegraphics[width=\textwidth]
{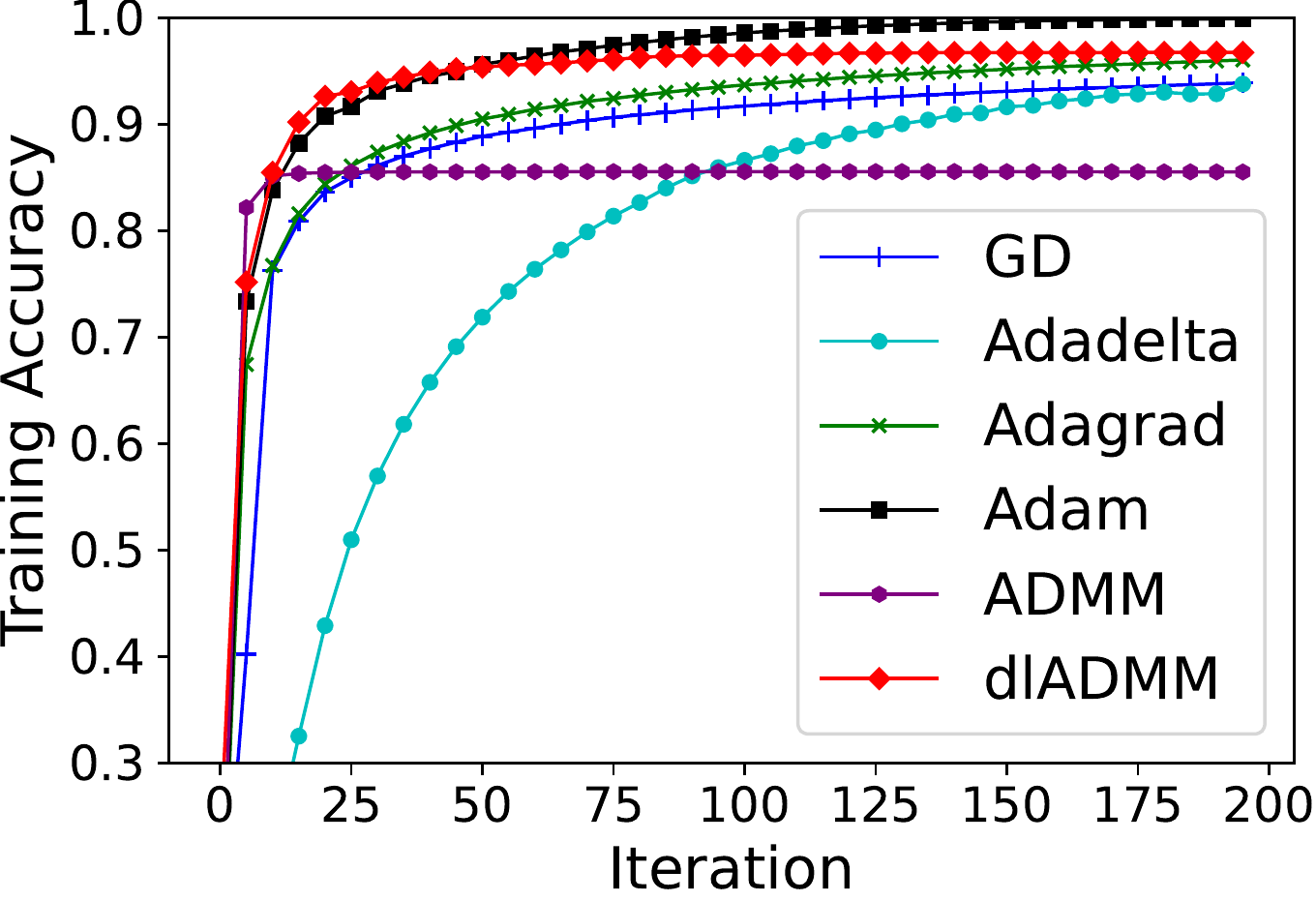}}
\centerline{(a). MNIST}
\end{minipage}
\hfill
\begin{minipage}
{0.49\linewidth}
\centerline{\includegraphics[width=\textwidth]
{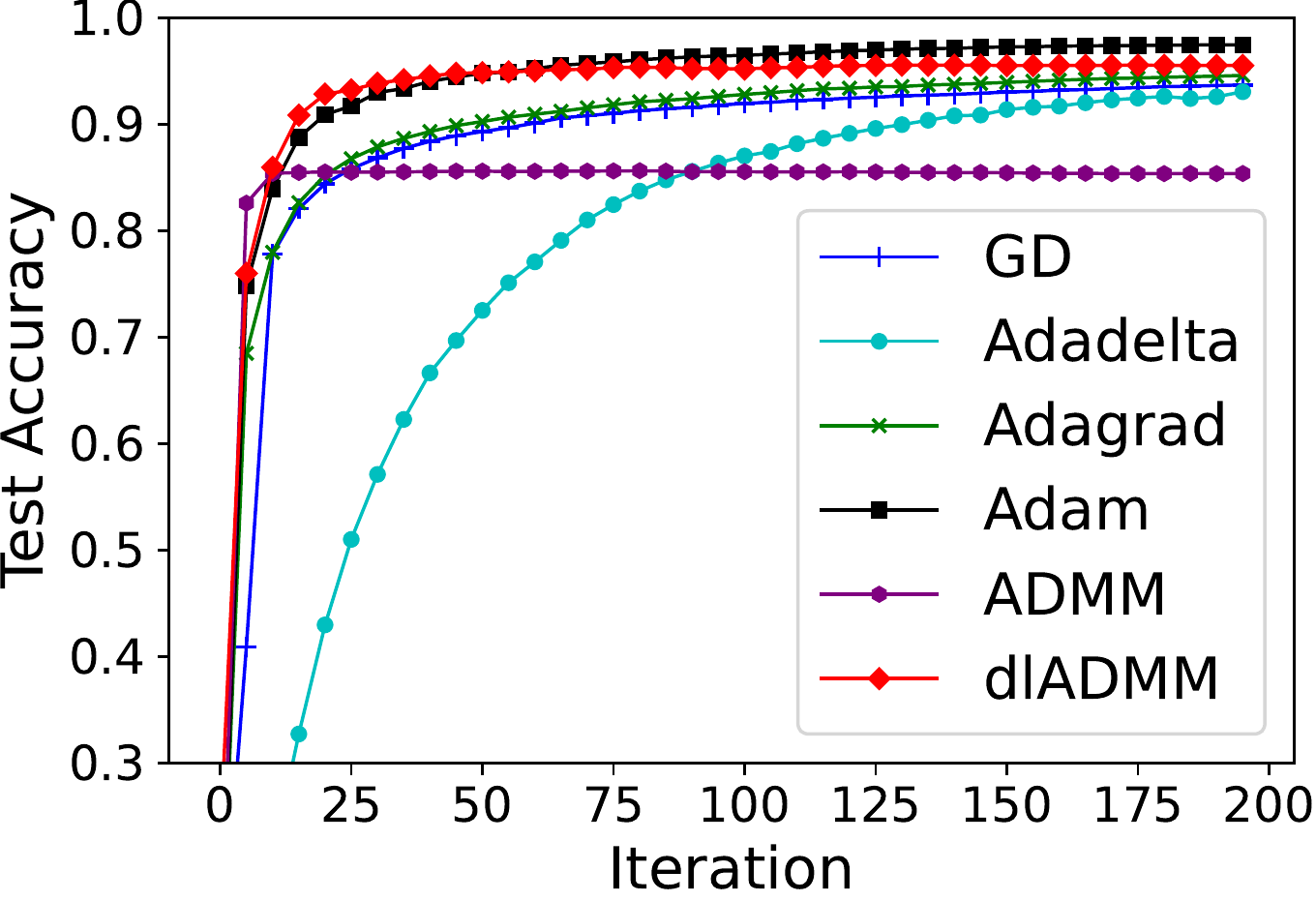}}
\centerline{(b). Fashion MNIST}
\end{minipage}
  \caption{Test Performance of all methods on two datasets: the dlADMM outperformed most of the comparison methods.}
  \label{fig:MLP test}
\end{figure}
\begin{table}[!hbp]

\centering
\begin{tabular}{r|r|r|r|r|r}
\hline\hline
\multicolumn{6}{c}{MNIST dataset: From 200 to 1,000 neurons}\\
\hline
\diagbox{$\rho$}{neuron} &200&400 &600  &800 &1000 \\\hline
$10^{-6}$&1.9025&2.7750&3.6615&4.5709&5.7988\\\hline
 $10^{-5}$&2.8778&4.6197&6.3620&8.2563&10.0323\\\hline
 $10^{-4}$&2.2761&3.9745&5.8645&7.6656&9.9221\\\hline
 $10^{-3}$&2.4361&4.3284&6.5651&8.7357&11.3736\\\hline
 $10^{-2}$&2.7912&5.1383&7.8249&10.0300&13.4485\\
\hline\hline
\multicolumn{6}{c}{Fashion MNIST dataset: From 200 to 1,000 neurons }\\
\hline
\diagbox{$\rho$}{neuron} &200&400&600&800&1000\\
\hline
$10^{-6}$&2.0069&2.8694&4.0506&5.1438&6.7406\\\hline
$10^{-5}$&3.3445&5.4190&7.3785&9.0813&11.0531\\\hline
$10^{-4}$&2.4974&4.3729&6.4257&8.3520&10.0728\\\hline
$10^{-3}$&2.7108&4.7236&7.1507&9.4534&12.3326\\\hline
$10^{-2}$&2.9577&5.4173&8.2518&10.0945&14.3465
\\\hline\hline
\end{tabular}
\caption{The relationship between running time per epoch (in second) and the number of neurons for each layer as well as value of $\rho$ when the training size was fixed: generally, the running time increased as the number of neurons and the value of $\rho$ became larger.}
\label{tab:running time 1}
\end{table}
\begin{table}[!hbp]
\centering
\begin{tabular}{r|r|r|r|r|r}
\hline\hline
\multicolumn{6}{c}{MNIST dataset: From 11,000  to 55,000 training samples}\\
\hline
\diagbox{$\rho$}{size} &11,000&22,000 &33,000  &44,000 &55,000 \\\hline
$10^{-6}$&1.0670&2.0682&3.3089&4.6546&5.7709\\\hline
$10^{-5}$&2.3981&3.9086&6.2175&7.9188&10.2741\\\hline
 $10^{-4}$&2.1290&3.7891&5.6843&7.7625&9.8843\\\hline
 $10^{-3}$&2.1295&4.1939&6.5039&8.8835&11.3368\\\hline
 $10^{-2}$&2.5154&4.9638&7.6606&10.4580&13.4021\\
\hline\hline
\multicolumn{6}{c}{Fashion MNIST dataset: From 12,000  to 60,000 training samples }\\
\hline
\diagbox{$\rho$}{size} &12,000&24,000&36,000&48,000&60,000\\
\hline
$10^{-6}$&1.2163&2.3376&3.7053&5.1491&6.7298\\\hline
$10^{-5}$&2.5772&4.3417&6.6681&8.3763&11.0292\\\hline
$10^{-4}$&2.3216&4.1163&6.2355&8.3819&10.7120\\\hline
$10^{-3}$&2.3149&4.5250&6.9834&9.5853&12.3232\\\hline
$10^{-2}$&2.7381&5.3373&8.1585&11.1992&14.2487
\\\hline\hline
\end{tabular}
\caption{The relationship between running time per epoch (in second) and the size of  training samples as well as value of $\rho$ when the number of neurons is fixed: generally, the running time increased as the training sample and the value of $\rho$ became larger.}
\label{tab:running time 2}
\end{table}
\indent Experimental results of the proposed dlADMM are analyzed against comparison methods as follows:\\
\textbf{Convergence}: Firstly, to demonstrate Theorem \ref{thero: theorem 2}, we show that our proposed dlADMM converges when $\rho$ is sufficiently large and diverges when $\rho$ is small. The convergence and divergence of dlADMM are shown in Figures \ref{fig:convergence} and \ref{fig:divergence} when $\rho=1$ and $\rho=10^{-6}$, respectively. In Figure \ref{fig:convergence}, both objective values and residuals decrease monotonically on two datasets. Moreover, Figure \ref{fig:divergence} illustrates that  both objective values and residuals diverge when $\rho=10^{-6}$. Their curves fluctuate drastically on the objective values. Even though there is a decreasing trend for residuals, they still fluctuate irregularly.
\\
\textbf{Performance}: Figure \ref{fig:MLP test} shows the test accuracy of our proposed dlADMM and all comparison methods on two datasets. Overall,  our proposed dlADMM outperforms most of them on both training accuracy and test accuracy on two datasets. Specifically, the curves of our proposed dlADMM soar to $0.8$ at the early stage and then rise steadily towards more than $0.9$. The curves of the most GD-related methods, such as GD, Adadelta, and Adagrad, climb more slowly than our proposed dlADMM. The curves of the proposed ADMM also rocket to around $0.8$, but decrease slightly since then. Only the  Adam performs better than the proposed dlADMM marginally by around $4\%$.\\
\textbf{Efficiency Analysis}: Finally, the relationship between running time per epoch of our proposed dlADMM and three potential factors, namely, the value of $\rho$, the size of training samples, and the number of neurons was explored. The running time was calculated by the average of 200 iterations. \\
\indent Firstly, when the training size was fixed, the running time per epoch on two datasets is shown in Table \ref{tab:running time 1}. The number of neurons for each layer ranged from 200 to 1,000, with an increase of 200 each time. The value of $\rho$ ranged from $10^{-6}$ to $10^{-2}$, with being multiplied by 10  each time. Generally, the running time increases with the increase of the number of neurons and the value of $\rho$. However, there are a few exceptions: for example, when there are 200 neurons on the MNIST dataset, and $\rho$ increases from $10^{-5}$ to $10^{-4}$, the running time per epoch drops from  $2.8778$ seconds to $2.2761$ seconds.\\
\indent Secondly, we fixed the number of neurons for each layer as $1,000$. The relationship between running time per epoch, the training size, and the value of $\rho$ is shown in Table \ref{tab:running time 2}. The value of $\rho$ ranged from $10^{-6}$ to $10^{-2}$, with being multiplied by 10  each time. The training size of the MNIST dataset ranged from $11,000$ to $55,000$, with an increase of $11,000$ each time. The training size of the Fashion MNIST dataset ranged from $12,000$ to $60,000$, with an increase of $12,000$ each time. Similar to Table \ref{tab:running time 2}, the running time increases generally as the training sample and the value of $\rho$ become larger.
\subsection{Experiments on GCN Models}
\begin{table}
    \scriptsize
    \centering
    \begin{tabular}{c|c|c|c|c|c }
    \hline\hline
         Dataset&\tabincell{c}{ Node\#}&\tabincell{c}{ Edge\#}&\tabincell{c}{ Class\#}&
{Feature\#}&\tabincell{c}{ Label\\Rate}\\\hline 
\tabincell{c}{Cora} & 2708 & 10556 & 7 & 1433 &5.17\%\\\hline 
\tabincell{c}{PubMed} & 19717  & 88648 &  3  & 500 &0.30\%\\\hline 
\tabincell{c}{Citeseer}& 3327 & 9104  & 6 & 3703 &3.61\%\\\hline 
\tabincell{c}{Coauthor  \\ CS}& 18333 & 163788 & 15 & 6805 &8.18\%\\\hline 
\tabincell{c}{Coauthor  \\ Physics}& 34493 & 495924 & 5 & 8415 &1.45\%\\\hline\hline
    \end{tabular}
    \caption{Statistics of five benchmark datasets on GCN models.}
    \label{tab:gcn dataset}
        \vspace{-1cm}
\end{table}
\subsubsection{Experiment Setup}
\indent For GCN models, five benchmark datasets were used for performance evaluation, which are shown in Table \ref{tab:gcn dataset}. All of them are outlined as follows:\\
1. Cora \cite{sen2008collective}. The Cora dataset consists of 2708 scientific publications classified into one of seven classes.\\
2. PubMed \cite{sen2008collective}. PubMed comprises 30M+ citations for biomedical literature that have been collected from sources such as MEDLINE, life science journals, and published online e-books.\\
3. Citeseer \cite{sen2008collective}. The Citeseer dataset was collected from the Tagged.com social network website. The original task on the dataset is to identify (i.e., classify) the spammer users based on their relational and non-relational features. \\
4. Coauthor CS and Coauthor Physics \cite{shchur2018pitfalls}. They are co-authorship graphs based on the Microsoft Academic Graph from the KDD Cup 2016 challenge 3. Here, nodes are authors, that are connected by an edge if they co-authored a paper.\\
\indent We set up a GCN model which contained one hidden layer with $128$ neurons. We choose this shallow model because the performance of GCN models deteriorates as they go deeper due to the over-smoothing problem \cite{Oono2020Graph}. The ReLU was used for the activation function. The loss function was set as the cross-entropy loss. $\nu$ and $\rho$ were both set to $10^{-3}$ based on the optimal training accuracy. The number of epoch was set to $500$.\\
\indent GD \cite{bottou2010large}, Adagrad \cite{duchi2011adaptive}, Adadelta \cite{zeiler2012adadelta}, and Adam \cite{kingma2014adam} are utilized to compare performance. The full batch dataset was used for training models. All parameters were chosen by the accuracy of the training dataset, and the learning rates for GD, Adagrad, Adadelta and Adam were set to $0.1$, $0.001$, $0.001$ and $0.01$, respectively.
\subsubsection{Experimental Results}
\begin{figure}[h]
  \centering
\begin{minipage}
{0.49\linewidth}
\centerline{\includegraphics[width=\columnwidth]
{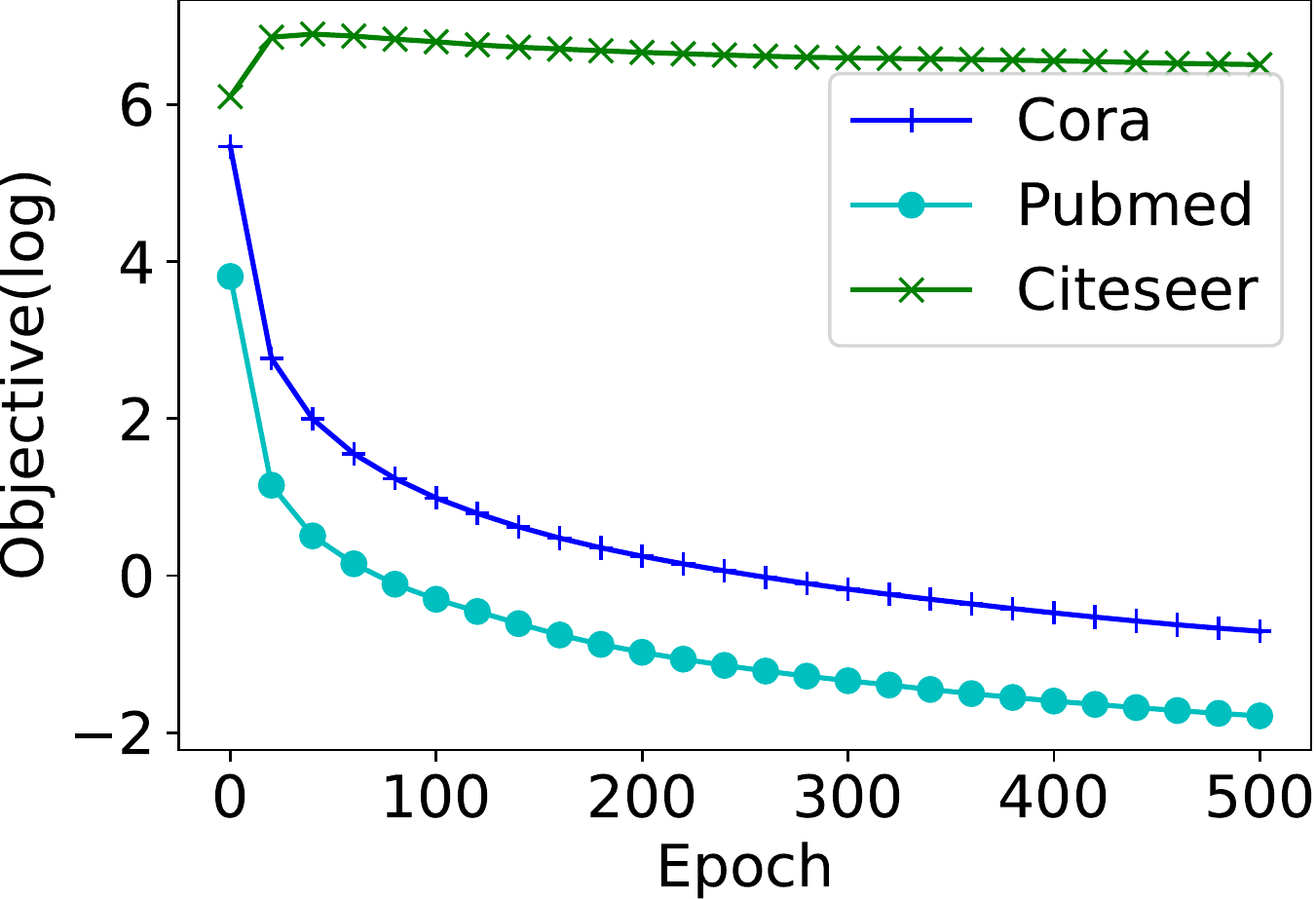}}
\centerline{(a). Objective value}
\end{minipage}
\hfill
\begin{minipage}
{0.49\linewidth}
\centerline{\includegraphics[width=\columnwidth]
{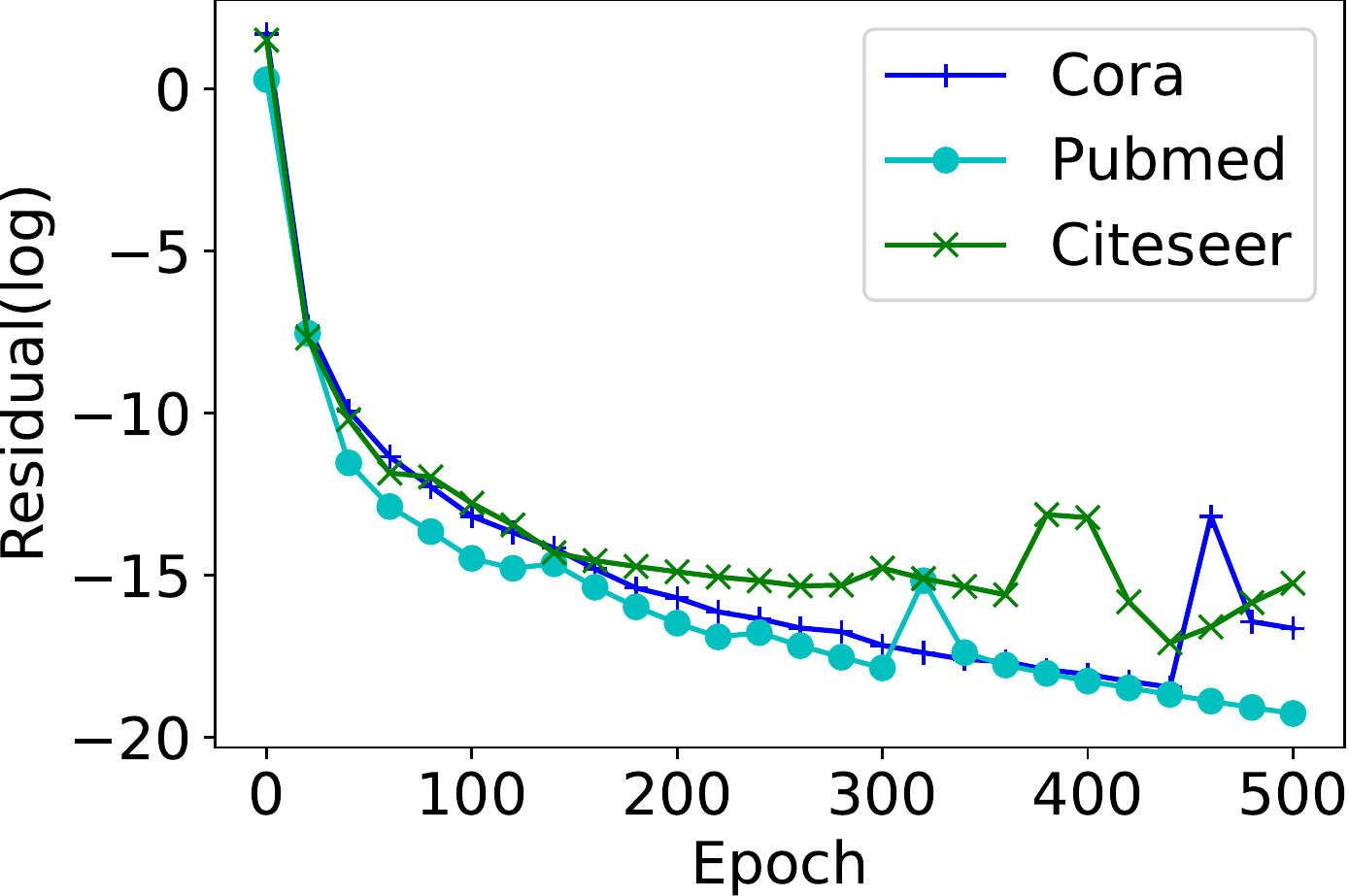}}
\centerline{(b).  Residual}
\end{minipage}
  \caption{Convergence curves of the proposed dlADMM on three datasets when $\rho=1$.}
      \vspace{-0.5cm}

  \label{fig:gcn convergence}
\end{figure}
\begin{figure}[h]
  \centering
\begin{minipage}
{0.49\linewidth}
\centerline{\includegraphics[width=\columnwidth]
{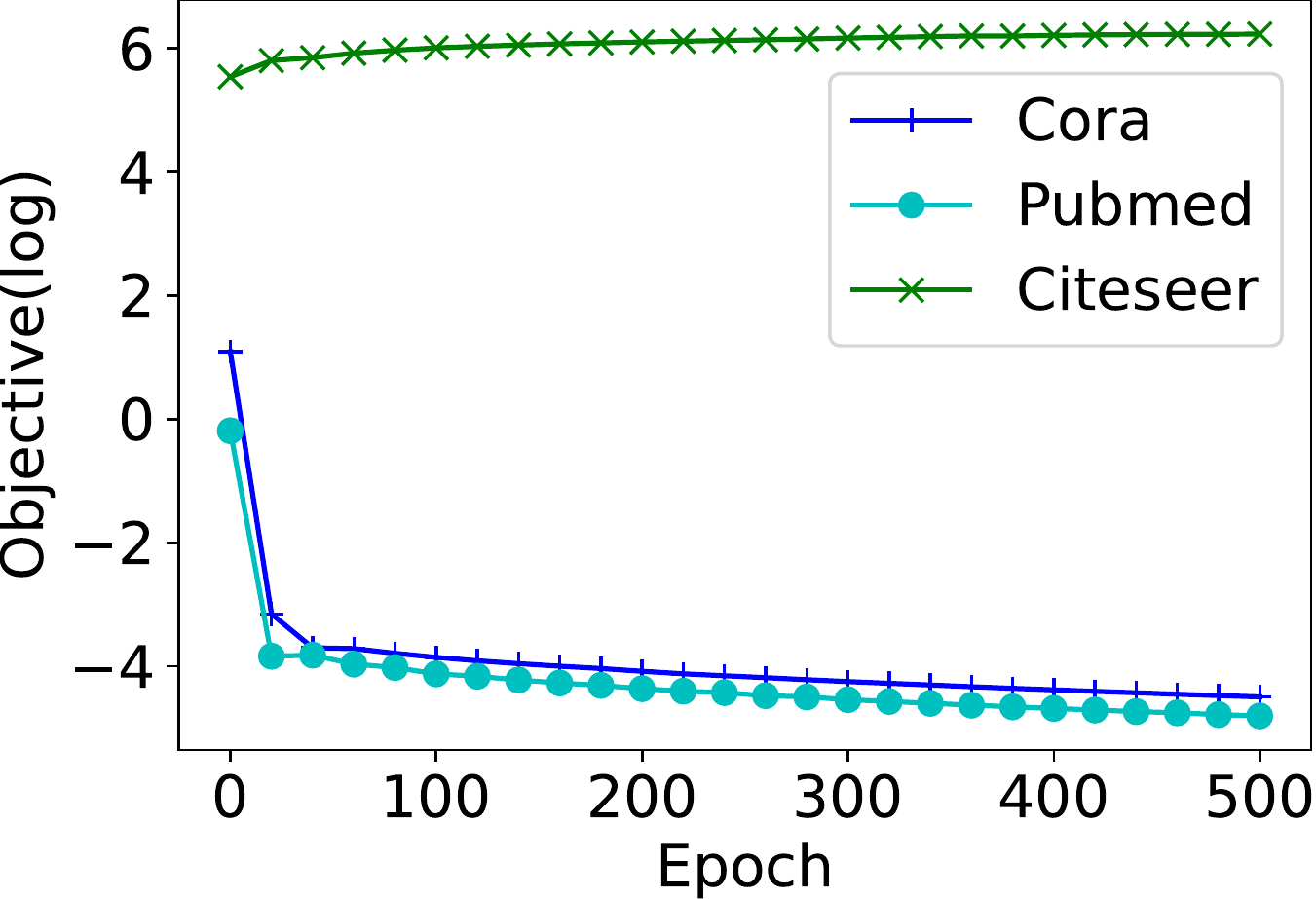}}
\centerline{(a). Objective value}
\end{minipage}
\hfill
\begin{minipage}
{0.49\linewidth}
\centerline{\includegraphics[width=\columnwidth]
{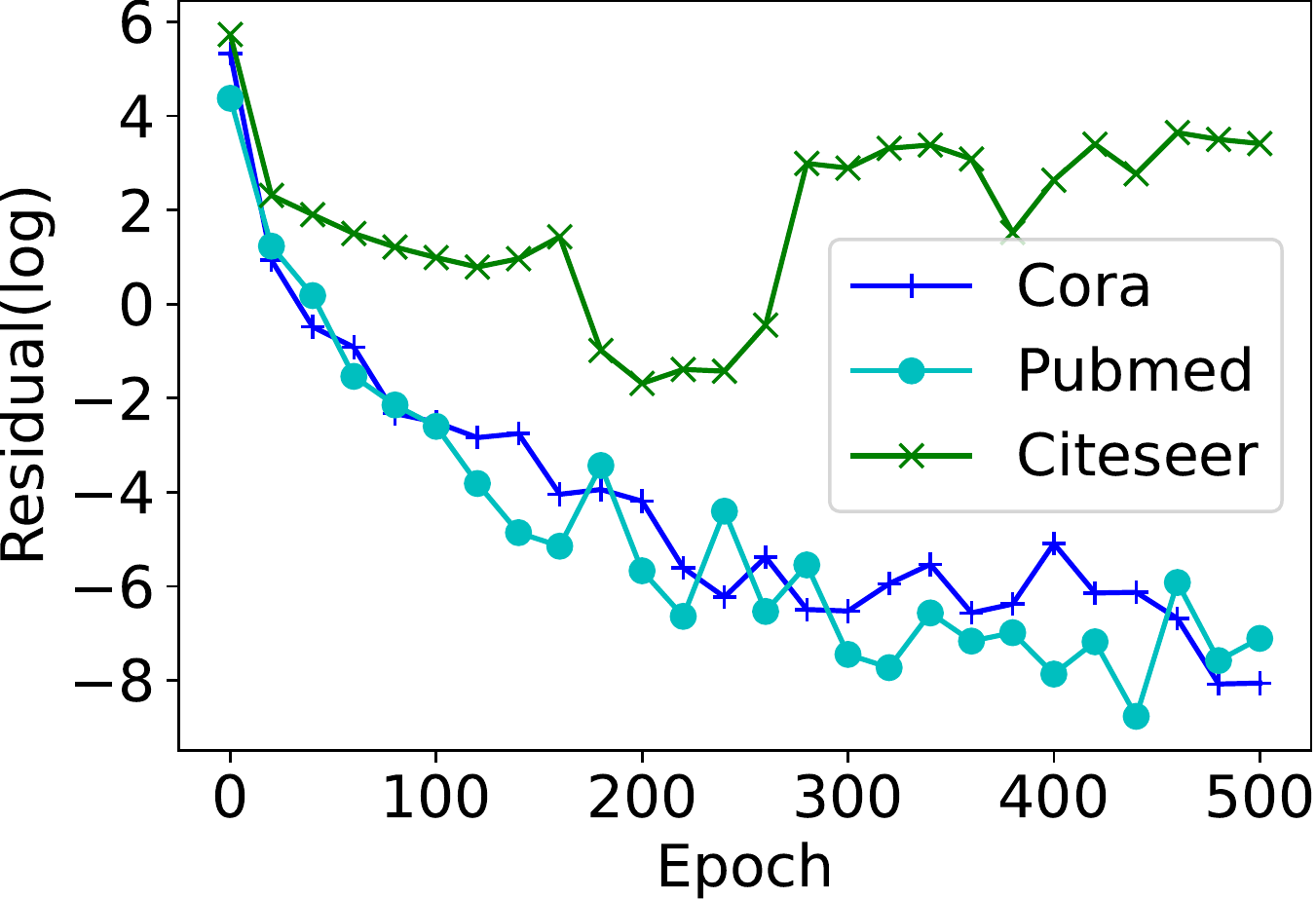}}
\centerline{(b).  Residual}
\end{minipage}
  \caption{ Curves of the dlADMM algorithm on three datasets when $\rho=10^{-3}$: it converges on the PubMed and Citeseer datasets, but diverges on the Cora dataset.}
  \label{fig:gcn divergence}
\end{figure}
\begin{table*}[]
    \centering
    \begin{tabular}{c|c|c|c|c|c}
    \hline\hline
     \diagbox{Method}{Dataset}&Cora&PubMed&Citeseer&Coauthor CS&Coauthor Physics  \\
     \hline
         Adam&$0.7814\pm 0.0034$&$0.7533\pm	0.0027$&$0.6364	\pm 0.0130$&$0.9160\pm 0.0019$&$0.9261\pm	0.0024$
\\
         
   \hline
         Adagrad&$0.7929\pm 0.0064$&$0.7429	\pm 0.0055$&$0.6498	\pm 0.0093$&$0.9169\pm 0.0026
$&$0.9352\pm 0.0009$

\\
         \hline
         GD&$0.8103\pm 0.0031$&$0.6108\pm0.0513$&$\boldsymbol{0.6986 \pm	0.0034}$&$0.9220\pm 0.0016$&$0.9336\pm	0.0011$
 \\
         \hline
   Adadelta&$0.7236\pm0.0555$&$0.6987	\pm 0.0272$&$0.6416\pm 0.0173$&$0.9138\pm 0.0028$&$0.9335\pm	0.0032$

\\\hline
         dlADMM&$\boldsymbol {0.8130\pm0.0106}$&$\boldsymbol{0.7631\pm	0.0060}$&$0.6972 \pm 0.0070$&$\boldsymbol{0.9261\pm 0.0016}$&$\boldsymbol{0.9376	\pm 0.0011}$ \\\hline\hline
    \end{tabular}
    \caption{Performance on five datasets averaged by 10 initializations: the proposed dlADMM performs the best on most of them.}
    \label{tab:gcn performance}
    \vspace{-1cm}
\end{table*}

\begin{table}[!hbp]

\centering
\begin{tabular}{r|r|r|r|r|r}
\hline\hline
\multicolumn{6}{c}{Cora: from 16 to 256 neurons}\\
\hline
\diagbox{$\rho$}{neuron} &16&32 &64  &128 &256 \\\hline
$10^{-6}$&0.1848&0.1756&0.1710&0.1806&0.1860\\\hline
 $10^{-5}$&0.2082&0.2056&0.2081&0.2121&0.2191\\\hline
 $10^{-4}$&0.2483&0.2439&0.2374&0.2495&0.2538\\\hline
 $10^{-3}$&0.2836&0.2791&0.2780&0.2877&0.2944\\\hline
 $10^{-2}$&0.2964&0.2964&0.3058&0.3210&0.3335\\\hline\hline
\multicolumn{6}{c}{PubMed: From 16 to 256 neurons}\\
\hline
\diagbox{$\rho$}{neuron} &16&32 &64  &128 &256\\
\hline
$10^{-6}$&0.1690&0.1711&0.1733&0.1740&0.1790\\\hline
$10^{-5}$&0.1894&0.1990&0.2011&0.2042&0.2087\\\hline
$10^{-4}$&0.2206&0.2280&0.2224&0.2289&0.2347\\\hline
$10^{-3}$&0.2546&0.2668&0.2707&0.2671&0.2761\\\hline
$10^{-2}$&0.2794&0.2897&0.2887&0.2948&0.3030\\\hline\hline
\multicolumn{6}{c}{Citeseer: from 16 to 256 neurons}\\
\hline
\diagbox{$\rho$}{neuron} &16&32 &64  &128 &256\\
\hline
$10^{-6}$&0.8083&0.8157&0.8247&0.8457&0.8448\\\hline
$10^{-5}$&0.9120&0.9846&0.9291&0.9331&0.9547\\\hline
$10^{-4}$&1.0560&1.0862&1.0894&1.0832&1.0784\\\hline
$10^{-3}$&1.1179&1.1366&1.1564&1.1760&1.2041\\\hline
$10^{-2}$&1.1749&1.2078&1.1609&1.2149&1.1972\\\hline\hline
\multicolumn{6}{c}{Coauthor CS: from 16 to 256 neurons}\\
\hline
\diagbox{$\rho$}{neuron} &16&32 &64  &128 &256\\
\hline
$10^{-6}$&1.1163&1.0684&1.0826&1.1041&1.1182\\\hline
$10^{-5}$&1.2663&1.2495&1.1517&1.1334&1.1328\\\hline
$10^{-4}$&1.2480&1.2380&1.1963&1.2013&1.2165\\\hline
$10^{-3}$&1.2401&1.2445&1.3253&1.2728&1.3261\\\hline
$10^{-2}$&1.2106&1.1952&1.2381&1.2728&1.3147\\\hline\hline
\multicolumn{6}{c}{Coauthor Physics: from 16 to 256 neurons }\\
\hline
\diagbox{$\rho$}{neuron} &16&32 &64  &128 &256\\
\hline
$10^{-6}$&0.8890&0.8926&0.9154&0.8955&0.9585\\\hline
$10^{-5}$&0.9929&0.9945&1.0649&1.0766&1.1329\\\hline
$10^{-4}$&1.1891&1.1159&1.2319&1.1663&1.2309\\\hline
$10^{-3}$&1.2210&1.1574&1.2737&1.2491&1.3408\\\hline
$10^{-2}$&1.2052&1.1653&1.2782&1.2573&1.5391\\\hline\hline
\end{tabular}
\caption{The relationship between running time per epoch (in second) and the number of neurons for each layer as well as value of $\rho$: generally, the running time increases with the increase of the value of $\rho$.}
\label{tab:running time 3}
\end{table}
 \begin{figure}
     \centering
    \includegraphics[width=0.8\linewidth]{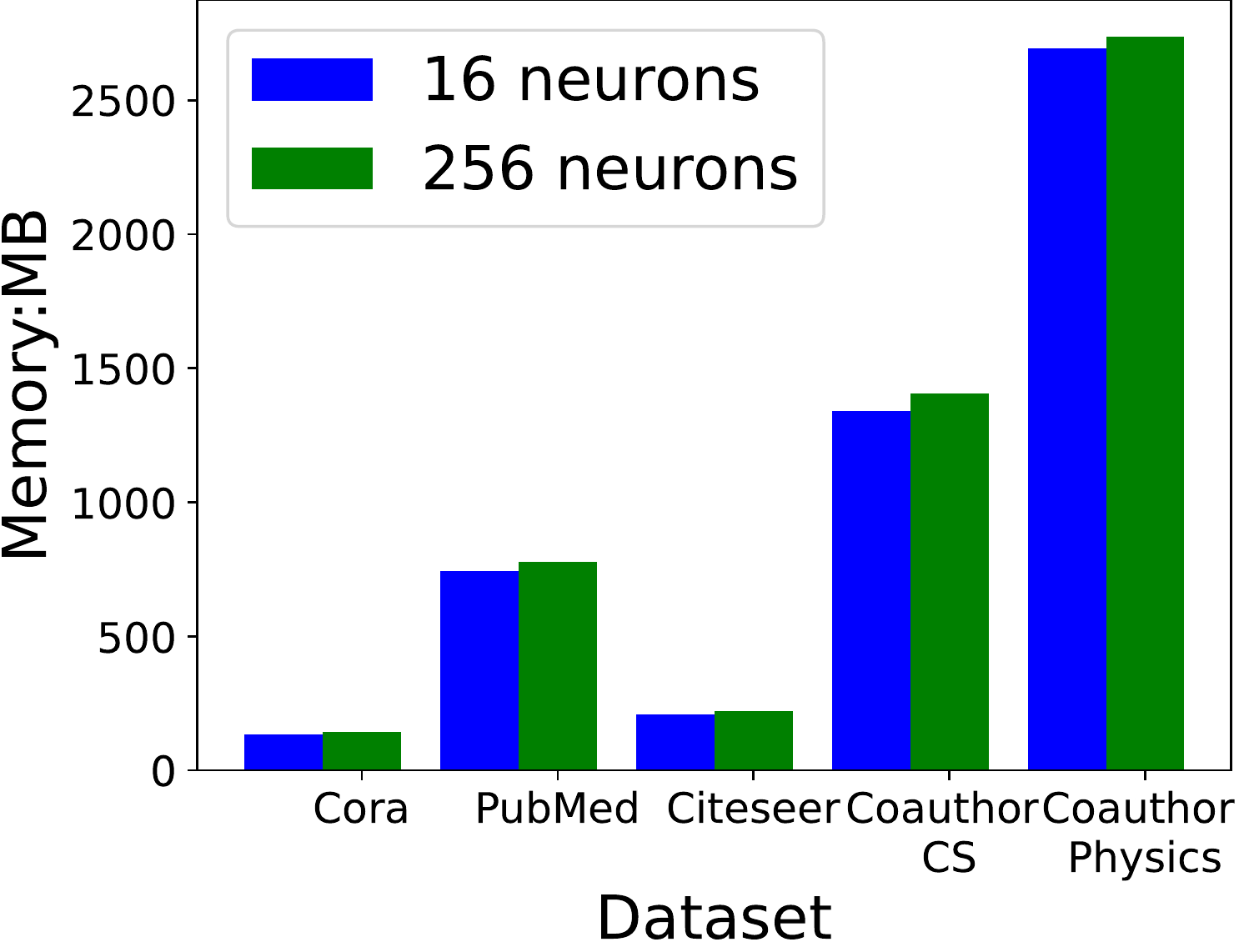}
     \centering
    \caption{Memory usages of the proposed dlADMM on all datasets: the number of neurons has little effects on the memory usages. }
    \label{fig:gcn memory}
 \vspace{-1cm}

\end{figure}

\indent The experimental results on GCN models are analyzed in this section.\\
\textbf{Convergence}: Figures \ref{fig:gcn convergence} and \ref{fig:gcn divergence} show convergence curves and divergence curves of the proposed dlADMM on three datasets when $\rho=1$ and $\rho=10^{-3}$, respectively. when $\rho=1$ (i.e. in Figure \ref{fig:gcn convergence}), the objectives tumble down monotonously and smoothly, and the residuals converge to 0, while the objective and the residual on the Cora dataset fluctuate when $\rho=10^{-3}$ (i.e. in Figure \ref{fig:gcn divergence}). Specifically, in Figure \ref{fig:gcn convergence}(a), the objective on the Cora dataset keeps decreasing, whereas those on the PubMed and Citeseer datasets converge within 100 epochs. The residuals on three datasets converge to 0 before 100 epochs, as demonstrated in Figure \ref{fig:gcn convergence}(b). In Figure \ref{fig:gcn divergence}(a), the objective on the Cora dataset increases with the increase of epochs, while it decreases on the other two datasets. For the residuals, Figure \ref{fig:gcn divergence}(b) shows that the fluctuation on the Cora dataset is more drastic than that on the PubMed and Citeseer datasets.\\
\textbf{Performance:} Table \ref{tab:gcn performance} shows the test accuracy of the proposed dlADMM and all comparison methods on five datasets. They were averaged by 10 random initializations. Overall, our proposed dlADMM outperforms others marginally on four out of five datasets. Specifically, it performs $0.76$ on the PubMed dataset, $1\%$ better than Adam, which is the best of all comparison methods. For other datasets such as Coauthor CS and Coauthor Physics, while all methods perform well, our proposed dlADMM achieves marginally better performance by about $0.4\%$. For the Citeseer dataset, GD is the best optimizer, followed by the proposed dlADMM, which is $0.4\%$ inferior in performance. The GD performs differently on five datasets: it is inferior to others on the PubMed dataset, but performs competitively on the Cora dataset. Compared with other optimizers, the Adadelta performs poorly in general: its performance on the Cora, PubMed, and Citeseer datasets is $9\%$, $7\%$ and $5\%$ inferior to that of the proposed dlADMM, respectively. Last but not least, the standard deviations of all methods are small, which suggests that they are resistant to random noises.\\
\textbf{Efficiency Analysis:} Next, we explore the relationship between running time per epoch of our proposed dlADMM and two potential factors, namely, the value of $\rho$, and the number of neurons. The running time was calculated by the average of 500 epochs. \\
\indent  The running time on five datasets is shown in Table \ref{tab:running time 3}. The number of neurons for each layer ranged from 16 to 256. The value of $\rho$ ranged from $10^{-6}$ to $10^{-2}$, with being multiplied by 10  each time. Generally, the running time increases in proportion to the value of $\rho$. For example, the running time per epoch rises from 0.8 to 1.17 when the number of neurons is 16 on the Citeseer dataset, and it climbs from 0.167 to 0.28 on the Cora dataset correspondingly. Moreover, we also find that the running time increases with the increase of the number of neurons on some datasets. As an instance, when $\rho=10^{-2}$, the running time grows from 1.21 to 1.54 as the number of neurons increases from 16 to 256. However, the running time per epoch remains constant on other datasets. As an example, the running time keeps around 0.18 when $\rho=10^{-6}$ on the Cora and PubMed datasets.\\
\indent \textbf{Memory Analysis:} Finally, we investigate the memory usages of our proposed dlADMM on all datasets, which is shown in Figure \ref{fig:gcn memory}. A blue bar and a green bar denote 16 neurons and 256 neurons, respectively. The Coauthor-Physics consumes the most memory of all datasets. It occupies more than 2500MB, whereas the Cora dataset uses less than 100MB. Compare Figure \ref{fig:gcn memory} with Table \ref{tab:gcn dataset}, we find that the used memories are approximately in proportion to the number of nodes on five datasets. Moreover, the effect of the number of neurons is slim on the memory usages: the memory used on the same dataset is almost the same for 16 neurons and 256 neurons. Therefore, we infer datasets, instead of model architectures, are the main component of memory usages.
\section{Conclusion}
\label{sec:conclusion}
\indent Alternating Direction Method of Multipliers (ADMM) is a good alternative to Gradient Descent (GD) for deep learning problems. In this paper, we propose a novel deep learning Alternating Direction Method of Multipliers (dlADMM) to address some previously mentioned challenges. Firstly, the dlADMM updates parameters from backward to forward in order to transmit parameter information more efficiently. The time complexity is successfully reduced from $O(n^3)$ to $O(n^2)$  by iterative quadratic approximations and backtracking. Finally, the dlADMM is guaranteed to converge to a critical point under mild conditions with a sublinear convergence rate $o(1/k)$. Experiments on MLP and GCN models on seven benchmark datasets demonstrate not only the convergence and efficiency of our proposed dlADMM algorithm, but also its outstanding performance against comparison methods.
\bibliography{example_paper}
\bibliographystyle{IEEEtran}
\iffalse
\begin{IEEEbiography}[{\includegraphics[width=\linewidth]{}}]{Junxiang Wang} received the B.S degree and the Master degree from East China Normal University, Shanghai, China in 2012, and George Mason University, Virginia, United States in 2020, respectively. He is a Ph.D. candidate in the Department of Computer Science at Emory University supervised by Professor Liang Zhao. His research focuses on data mining in social media and nonconvex optimization in deep learning.
\end{IEEEbiography}
\vspace{-1cm}
\begin{IEEEbiography}[{\includegraphics[width=\linewidth]{Hongyi_Li}}]{Hongyi Li} received the B.E. degree in Telecommunication Engineering from Xidian University, Xi'an, China, in 2019. She is a Ph.D. student at the State Key Laboratory of ISN in Xidian University, China. Her research interests include graph learning, optimization algorithms, and their applications in wireless communication systems.
\end{IEEEbiography}
\vspace{-1cm}
\begin{IEEEbiography}[{\includegraphics[width=\linewidth]{}}]{Liang Zhao} is an assistant professor at the Department of Computer Science at Emory University. He obtained his Ph.D. degree in 2016 from Computer Science Department at Virginia Tech in the United States. His research interests include data mining, artificial intelligence, and machine learning, with special interests in spatiotemporal and network data mining, deep learning on graphs, nonconvex optimization, and interpretable machine learning.
\end{IEEEbiography}
\fi
\onecolumn
\newpage
\large{Supplementary Materials}
\begin{appendix}
\small
\section*{Algorithms to update $W^{k+1}_l$ and $a^{k+1}_l$}
\label{sec:same algorithm}
\indent The algorithms to update $W^{k+1}_l$ and $a^{k+1}_l$ are described in the Algorithms \ref{algo:theta update} and \ref{algo:tau update}, respectively.
\begin{algorithm} %算法开始
\caption{The Backtracking Algorithm  to update ${W}^{k+1}_{l}$ } %算法的题目 
\begin{algorithmic}[1]
\label{algo:theta update}%此处的[1]控制一下算法中的每句前面都有标号 
\REQUIRE ${\textbf{W}}^{k+1}_{l-1}$,${\textbf{b}}^{k+1}_{l-1}$, ${\textbf{z}}^{k+1}_{l-1}$,${\textbf{a}}^{k+1}_{l-1}$,$u^k$, $\rho$, some constant ${\gamma}>1$. %输入条件(此处的REQUIRE默认关键字为Require) 
\ENSURE ${\theta}^{k+1}_l$,${{W}}^{k+1}_{l}$. %输出结果(此处的ENSURE默认关键字为Ensure) 
\STATE Pick up ${\alpha}$ and ${\zeta}=\overline{W}^{k+1}_l-\nabla_{\overline{W}^{k+1}_l}\phi/{\alpha}$.
\WHILE{$\phi(\{W^{k+1}_i\}_{i=1}^{l-1},{\zeta},\{\overline{W}^{k+1}_i\}_{i=l+1}^{L},{\textbf{b}}^{k+1}_{l},{\textbf{z}}^{k+1}_{l},{\textbf{a}}^{k+1}_{l},u^k)>{P}_l({\zeta};{\alpha})$}
\STATE ${\alpha}\leftarrow {\alpha}\ {\gamma}$.\\
\STATE Solve ${\zeta}$ by Equation \eqref{eq:update W}.\\
\ENDWHILE
\STATE Output ${\theta}^{k+1}_l \leftarrow {\alpha} $.\\
\STATE Output ${W}^{k+1}_{l}\leftarrow {\zeta}$.
\end{algorithmic}
\end{algorithm}
\begin{algorithm} %算法开始
\caption{The Backtracking Algorithm  to update ${a}^{k+1}_{l}$ } %算法的题目 
\begin{algorithmic}[1]
\label{algo:tau update}%此处的[1]控制一下算法中的每句前面都有标号 
\REQUIRE ${\textbf{W}}^{k+1}_{l}$,${\textbf{b}}^{k+1}_{l}$, ${\textbf{z}}^{k+1}_{l}$,${\textbf{a}}^{k+1}_{l-1}$,$u^k$, $\rho$, some constant ${\eta}>1$. %输入条件(此处的REQUIRE默认关键字为Require) 
\ENSURE ${\tau}^{k+1}_l$,${{a}}^{k+1}_{l}$. %输出结果(此处的ENSURE默认关键字为Ensure) 
\STATE Pick up ${t}$ and ${\beta}=\overline{a}^{k+1}_l-\nabla_{\overline{a}^{k+1}_l}\phi/{t}$
\WHILE{$\phi({\textbf{W}}^{k+1}_{l+1},{\textbf{b}}^{k+1}_{l+1},{\textbf{z}}^{k+1}_{l+1},\{a^{k+1}_i\}_{i=1}^{l-1},{\beta},\{\overline{a}^{k+1}_i\}_{i=l+1}^{L-1},u^k)>{Q}_l({\beta};{t})$}
\STATE ${t}\leftarrow {t}{\eta}$.\\
\STATE ${\beta}\leftarrow \overline{a}^{k+1}_{l}-\nabla_{\overline{a}^{k+1}_{l}}\phi/{t}$.\\
\ENDWHILE
\STATE Output ${\tau}^{k+1}_l \leftarrow {t} $.\\
\STATE Output ${a}^{k+1}_{l}\leftarrow {\beta}$.
\end{algorithmic}
\end{algorithm}
\section*{Lemmas for the Proofs of Properties}
\label{sec:proofs}
The following several lemmas are preliminary results.
\begin{lemma}
\label{lemma:lemma 1}
Equation \eqref{eq:update overline W} holds if and only if there exists $\overline{s}\in \partial\Omega_l(\overline{W}^{k+1}_l)$, the subgradient of $\Omega_l(\overline{W}^{k+1}_l)$  such that
\begin{align*}
    &\nabla_{{W}^{k}_l}\phi({\overline{\textbf{W}}}^{k+1}_{l+1},\overline{\textbf{b}}^{k+1}_{l},\overline{\textbf{z}}^{k+1}_{l},\overline{\textbf{a}}^{k+1}_{l},u^k) +\overline{\theta}^{k+1}_l \circ(\overline{W}^{k+1}_l-{W}^{k}_l)+\overline{s}=0
\end{align*}
Likewise, Equation \eqref{eq:update W} holds if and only if there exists $s\in \partial\Omega_l(W^{k+1}_l)$, the subgradient of $\Omega_l(W^{k+1}_l)$  such that
\begin{align*}
    &\nabla_{\overline{W}^{k+1}_l}\phi({\textbf{W}}^{k+1}_{l-1},\textbf{b}^{k+1}_{l-1},\textbf{z}^{k+1}_{l-1},{\textbf{a}}^{k+1}_{l-1},u^k) +\theta^{k+1}_l \circ(W^{k+1}_l-\overline{W}^{k+1}_l)+s=0
\end{align*}
\end{lemma}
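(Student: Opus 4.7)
The plan is to recognize that both claims are simply Fermat's rule (first-order optimality) applied to the convex minimization problems defining $\overline{W}^{k+1}_l$ and $W^{k+1}_l$. The key observation is that $\overline{P}_l(W_l;\overline{\theta}^{k+1}_l)$ is a differentiable convex function of $W_l$ (it is a separable quadratic with strictly positive weights $\overline{\theta}^{k+1}_l>0$), and $\Omega_l(W_l)$ is convex, continuous, and proper by the standing assumption on $\Omega_l$ made in the setup of Problem~2. Consequently, $\overline{P}_l(\cdot;\overline{\theta}^{k+1}_l)+\Omega_l(\cdot)$ is a proper convex function, and its minimizer is characterized exactly by $0$ lying in its subdifferential.

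First, I would compute the gradient of the quadratic approximation explicitly: since $\overline{P}_l(W_l;\overline{\theta}^{k+1}_l)$ is a first-order Taylor expansion of $\phi$ at $W^k_l$ plus the separable quadratic penalty $\tfrac{1}{2}\|\overline{\theta}^{k+1}_l\circ(W_l-W^k_l)^{\circ 2}\|_1$, a direct differentiation gives
\begin{align*}
\nabla_{W_l}\overline{P}_l(W_l;\overline{\theta}^{k+1}_l)=\nabla_{W^k_l}\phi(\overline{\textbf{W}}^{k+1}_{l+1},\overline{\textbf{b}}^{k+1}_l,\overline{\textbf{z}}^{k+1}_l,\overline{\textbf{a}}^{k+1}_l,u^k)+\overline{\theta}^{k+1}_l\circ(W_l-W^k_l).
\end{align*}
Next, because $\overline{P}_l$ is differentiable, the subdifferential sum rule (no constraint qualification is needed since one summand is finite-valued and smooth) gives
$\partial(\overline{P}_l+\Omega_l)(W_l)=\nabla_{W_l}\overline{P}_l(W_l;\overline{\theta}^{k+1}_l)+\partial\Omega_l(W_l)$.
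Applying Fermat's rule to Equation~\eqref{eq:update overline W} then yields the stated condition: there exists $\overline{s}\in\partial\Omega_l(\overline{W}^{k+1}_l)$ with $\nabla_{W^k_l}\phi+\overline{\theta}^{k+1}_l\circ(\overline{W}^{k+1}_l-W^k_l)+\overline{s}=0$. Convexity of both terms ensures this condition is not just necessary but also sufficient, giving the ``if and only if.''

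The argument for $W^{k+1}_l$ and Equation~\eqref{eq:update W} is structurally identical: one only substitutes the expansion point $W^k_l$ by $\overline{W}^{k+1}_l$, replaces $\overline{\theta}^{k+1}_l$ by $\theta^{k+1}_l$, and evaluates $\phi$ and its gradient at the forward-pass iterates $({\textbf{W}}^{k+1}_{l-1},\textbf{b}^{k+1}_{l-1},\textbf{z}^{k+1}_{l-1},\textbf{a}^{k+1}_{l-1},u^k)$. The same convexity and differentiability arguments apply verbatim.

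There is essentially no obstacle here since the lemma is a textbook application of Fermat's rule for a smooth-plus-nonsmooth convex sum. The only minor care needed is to verify that the subdifferential sum rule applies without qualification issues, which is guaranteed because $\overline{P}_l$ is finite and differentiable everywhere, and to verify that both objectives are indeed convex — this relies on $\overline{\theta}^{k+1}_l,\theta^{k+1}_l>0$ (ensuring positive-definite quadratic approximations) and on the convexity of $\Omega_l$ assumed in the problem statement.
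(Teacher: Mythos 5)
Your proposal is correct and follows the same route as the paper, which simply invokes the first-order optimality conditions of Equations \eqref{eq:update overline W} and \eqref{eq:update W}; you merely spell out the details (explicit gradient of the quadratic surrogate, the smooth-plus-convex subdifferential sum rule, and Fermat's rule for the equivalence). Nothing is missing.
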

\begin{proof}
    These can be obtained by directly applying the optimality conditions of Equation \eqref{eq:update overline W} and Equation \eqref{eq:update W}, respectively.
\end{proof}
\begin{lemma}
$\nabla_{z_L^k} R(z_L^k;y)+u^k=0$ for all $k\in \mathbb{N}$.
\label{lemma:z_l optimality}
\end{lemma}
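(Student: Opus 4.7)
The plan is to read off the identity directly from the first-order optimality condition of the forward $z_L$-update (Equation \eqref{eq:update zl}) and then combine it with the dual-variable update on Line 22 of Algorithm \ref{algo:dlADMM mlp}. I would proceed by induction on $k$, with the base case handled by initialization.

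For the inductive step, I would write out the subproblem defining $z_L^{k+1}$: it is the minimizer of $R(z_L;y)+\phi(\textbf{W}^{k+1}_L,\textbf{b}^{k+1}_L,\{z_i^{k+1}\}_{i=1}^{L-1},z_L,\textbf{a}^{k}_{L-1},u^k)$. Inspecting the definition of $\phi$, the only pieces depending on $z_L$ at the last layer are the linear coupling $u^{k\,T}(z_L-W_L^{k+1}a_{L-1}^{k+1}-b_L^{k+1})$ and the quadratic penalty $(\rho/2)\Vert z_L-W_L^{k+1}a_{L-1}^{k+1}-b_L^{k+1}\Vert_2^2$. Since $R$ is (Lipschitz) differentiable by Assumption \ref{ass:assumption 2}, the optimality condition is an equality, not an inclusion:
\begin{align*}
\nabla_{z_L^{k+1}} R(z_L^{k+1};y) + u^k + \rho\bigl(z_L^{k+1} - W_L^{k+1} a_{L-1}^{k+1} - b_L^{k+1}\bigr) = 0.
\end{align*}
Identifying the parenthesized quantity with the primal residual $r^{k+1}$ and then using the dual update $u^{k+1} = u^k + \rho\, r^{k+1}$, this collapses exactly to $\nabla_{z_L^{k+1}} R(z_L^{k+1};y) + u^{k+1} = 0$, which is the statement at index $k+1$.

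For the base case $k=0$, I would simply assume the initialization chooses $u^0$ satisfying $\nabla_{z_L^0} R(z_L^0;y)+u^0=0$; this is always achievable because $u$ is a free variable and can be initialized as $u^0:=-\nabla_{z_L^0} R(z_L^0;y)$ once $z_L^0$ is picked. Combining the base case with the inductive step yields the claim for every $k\in\mathbb{N}$.

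There is no real technical obstacle here: the proof is a bookkeeping exercise. The only point that needs care is ensuring the first-order condition is expressed as an equality in $\nabla R$ rather than an inclusion in $\partial R$, which is why Assumption \ref{ass:assumption 2} (Lipschitz differentiability of $R$) is invoked, and ensuring that the $\rho$-quadratic and $u^k$-linear terms from $\phi$ reassemble precisely into $u^{k+1}$ via the dual-update rule.
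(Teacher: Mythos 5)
Your proposal is correct and follows essentially the same route as the paper: read off the first-order optimality condition of the $z_L$-update in Equation \eqref{eq:update zl}, recognize the residual term, and absorb it into $u$ via the dual update $u^{k}=u^{k-1}+\rho r^{k}$. The induction wrapper is unnecessary (your ``inductive step'' never uses the hypothesis, so the argument is direct for every $k\geq 1$), but your explicit handling of the $k=0$ case via the initialization $u^0:=-\nabla_{z_L^0}R(z_L^0;y)$ is a small point of care that the paper's proof silently skips.
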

\begin{proof}
The optimality condition of $z^k_L$ in Equation \eqref{eq:update zl} gives rise to 
\begin{align*}
    \nabla_{z^k_L} R(z^{k}_L;y)+\rho(z^{k}_L-W^{k}_La^{k}_{L-1}-b^{k}_L)+u^{k-1}=0
\end{align*}
Because $u^k=u^{k-1}+\rho(z^{k}_L-W^{k}_La^{k}_{L-1}-b^{k}_L)$, then we have $\nabla_{z_L^k} R(z_L^k;y)+u^k=0$.
\end{proof}
\begin{lemma}
\label{lemma: R(z_l) lipschitz}
It holds that $\forall z_{L,1},z_{L,2}\in\mathbb{R}^{n_L}$,
\begin{align*}
    &R(z_{L,1};y)\leq R(z_{L,2};y)+\nabla_{z_{L,2}} R^T(z_{L,2};y)(z_{L,1}-z_{L,2})+(H/2)\Vert z_{L,1}-z_{L,2}\Vert^2\\&-R(z_{L,1};y)\leq -R(z_{L,2};y)-\nabla_{z_{L,2}} R^T(z_{L,2};y)(z_{L,1}-z_{L,2})+(H/2)\Vert z_{L,1}-z_{L,2}\Vert^2
\end{align*}
\end{lemma}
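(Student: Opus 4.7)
The plan is to derive both inequalities as the standard descent lemma (a.k.a.\ the quadratic upper bound) applied to $R(\cdot;y)$ and to $-R(\cdot;y)$, using only Assumption \ref{ass:assumption 2}, which guarantees that $R(z_L;y)$ has an $H$--Lipschitz gradient in $z_L$.

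First I would write $R(z_{L,1};y)-R(z_{L,2};y)$ via the fundamental theorem of calculus along the segment $z(t) = z_{L,2} + t(z_{L,1}-z_{L,2})$, $t\in[0,1]$, giving
\begin{equation*}
R(z_{L,1};y)-R(z_{L,2};y) = \int_0^1 \nabla R(z(t);y)^T (z_{L,1}-z_{L,2})\,dt.
\end{equation*}
Subtracting $\nabla R(z_{L,2};y)^T(z_{L,1}-z_{L,2})$ from both sides, the right-hand side becomes $\int_0^1 \bigl[\nabla R(z(t);y)-\nabla R(z_{L,2};y)\bigr]^T(z_{L,1}-z_{L,2})\,dt$. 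Taking absolute values and applying Cauchy--Schwarz together with the $H$--Lipschitz property $\|\nabla R(z(t);y)-\nabla R(z_{L,2};y)\|\le H\,t\,\|z_{L,1}-z_{L,2}\|$ yields
\begin{equation*}
\bigl|R(z_{L,1};y)-R(z_{L,2};y)-\nabla R(z_{L,2};y)^T(z_{L,1}-z_{L,2})\bigr| \le \tfrac{H}{2}\|z_{L,1}-z_{L,2}\|^2,
\end{equation*}
after integrating $t$ from $0$ to $1$.

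The first inequality of the lemma is obtained by dropping the absolute value on the upper side, and the second inequality is obtained by taking the lower side and multiplying through by $-1$. Since every step is a direct consequence of Assumption \ref{ass:assumption 2}, there is no real obstacle here; the only care needed is to keep track of the $t$ factor that arises from the segment parameterization so that the integral $\int_0^1 H t\,dt = H/2$ produces the correct constant.
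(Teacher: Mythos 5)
Your proof is correct and follows essentially the same route as the paper: both reduce the claim to the standard descent lemma applied to $R(\cdot;y)$ and $-R(\cdot;y)$, the only difference being that the paper simply cites Lemma 2.1 of Beck and Teboulle while you write out its standard integral-form proof. Your accounting of the factor $t$ in $\int_0^1 Ht\,dt = H/2$ is the right detail to check, and it is handled correctly.
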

\begin{proof}
Because $R(z_L;y)$ is Lipschitz differentiable by Assumption \ref{ass:assumption 2}, so is $-R(z_L;y)$. Therefore, this lemma is proven exactly as same as Lemma 2.1 in \cite{beck2009fast}.
\end{proof}
\begin{lemma}
For Equations \eqref{eq:update overline b} and \eqref{eq:update b}, if $\overline{B},B\geq \nu$,then the following inequalities hold:
\begin{align}
&\overline{U}_{l}(\overline{\textbf{b}}^{k+1}_{l};\overline{B})\geqslant \phi(\overline{\textbf{W}}^{k+1}_{ l+1},\overline{\textbf{b}}^{k+1}_{l},\overline{\textbf{z}}^{k+1}_{l},\overline{\textbf{a}}^{k+1}_{l},u^k)  \label{eq:lipschitz b back}\\&
    U_{l}(\textbf{b}^{k+1}_{l};B)\geqslant \phi(\textbf{W}^{k+1}_{l},\textbf{b}^{k+1}_{l},\textbf{z}^{k+1}_{l-1},\textbf{a}^{k+1}_{l-1},u^k)  \label{eq:lipschitz b forward}
\end{align} 
\label{lemma:lemma 2}

\end{lemma}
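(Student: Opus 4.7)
The plan is to exploit the fact that, with all other block variables held fixed, the map $b_l \mapsto \phi$ is \emph{exactly} quadratic (not merely smooth with bounded Hessian). Indeed, inspecting the definition of $\phi$, the only occurrences of $b_l$ are in $(\nu/2)\Vert z_l - W_l a_{l-1} - b_l\Vert_2^2$ for $l < L$, and in $u^T(z_L - W_L a_{L-1} - b_L) + (\rho/2)\Vert z_L - W_L a_{L-1} - b_L\Vert_2^2$ for $l = L$. In both cases the partial Hessian with respect to $b_l$ is a constant multiple of the identity, namely $H_l I$ with $H_l = \nu$ for $l < L$ and $H_l = \rho$ for $l = L$. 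Consequently Taylor's formula holds with equality at the second-order level, rather than as an inequality.

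From this observation, both inequalities reduce to a single algebraic computation. For the first inequality, the exact Taylor expansion of $\phi$ in the variable $b_l$ about the anchor $b_l^k$, evaluated at $\overline{b}_l^{k+1}$, yields the identity $\phi(\overline{\textbf{W}}^{k+1}_{l+1},\overline{\textbf{b}}^{k+1}_l,\overline{\textbf{z}}^{k+1}_l,\overline{\textbf{a}}^{k+1}_l,u^k) = \phi(\overline{\textbf{W}}^{k+1}_{l+1},\overline{\textbf{b}}^{k+1}_{l+1},\overline{\textbf{z}}^{k+1}_l,\overline{\textbf{a}}^{k+1}_l,u^k) + (\nabla_{b_l^k}\phi)^T(\overline{b}_l^{k+1} - b_l^k) + (H_l/2)\Vert \overline{b}_l^{k+1} - b_l^k\Vert_2^2$, where I have used that the only difference between the two $\phi$-evaluations is the $l$-th slot of the $b$-block changing from $b_l^k$ to $\overline{b}_l^{k+1}$. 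Subtracting this identity from the definition of $\overline{U}_l(\overline{b}_l^{k+1};\overline{B})$ leaves the residual $((\overline{B} - H_l)/2)\Vert \overline{b}_l^{k+1} - b_l^k\Vert_2^2$, which is non-negative whenever $\overline{B} \geq H_l$. The second inequality follows by an identical argument, this time anchoring the Taylor expansion at $\overline{b}_l^{k+1}$ rather than $b_l^k$ and invoking $B \geq H_l$.

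No serious obstacle arises; the only bookkeeping subtlety is that the condition ``$\overline{B}, B \geq \nu$'' in the statement must be read in conjunction with the parameter convention introduced just before Equations \eqref{eq:update overline b}--\eqref{eq:update overline bl}, namely $\overline{B} = B = \nu$ for $l < L$ and $\overline{B} = B = \rho$ for $l = L$; in the $l = L$ case the operative hypothesis is really $\overline{B}, B \geq \rho$. Notably, unlike a standard descent lemma of the Beck--Teboulle type, no Lipschitz-gradient argument is invoked here --- the claim follows purely from $\phi$ being \emph{exactly} quadratic in $b_l$, so the majorization gap is a clean non-negative quadratic in $\overline{b}_l^{k+1}-b_l^k$ (resp.\ $b_l^{k+1}-\overline{b}_l^{k+1}$).
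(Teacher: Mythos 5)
Your proof is correct, and it reaches the same majorization inequality as the paper but by a slightly different (more self-contained) route. The paper's own proof is a one-liner: it asserts that $\phi$ is Lipschitz differentiable in $\textbf{b}$ with coefficient $\nu$ and cites the descent lemma (Lemma 2.1 of Beck--Teboulle) to get both inequalities. You instead observe that $\phi$ is \emph{exactly} quadratic in $b_l$ with constant Hessian $H_l I$ ($H_l=\nu$ for $l<L$, $H_l=\rho$ for $l=L$), write the second-order Taylor identity with equality, and read off the gap $((\overline{B}-H_l)/2)\Vert \overline{b}^{k+1}_l-b^k_l\Vert_2^2\ge 0$. The two arguments are mathematically equivalent --- the descent lemma applied to a quadratic is precisely your computation --- but your version buys two things: it avoids any appeal to an external lemma, and it makes explicit the constant that the paper's phrasing blurs, namely that for $l=L$ the curvature is $\rho$ (not $\nu$), so the operative hypothesis there is $\overline{B},B\ge\rho$, consistent with the parameter choice $\overline{B}=B=\rho$ made just before Equations \eqref{eq:update overline bl} and \eqref{eq:update bl} and with the paper's earlier remark that $\overline{B}\ge\rho$ is required for $l=L$. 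Your bookkeeping of the anchor points ($b_l^k$ for the backward update, $\overline{b}_l^{k+1}$ for the forward update) matches the definitions of $\overline{U}_l$ and $U_l$ and the index conventions $\overline{\textbf{b}}^{k+1}_{l+1}$ versus $\overline{\textbf{b}}^{k+1}_{l}$ and $\textbf{b}^{k+1}_{l-1}$ versus $\textbf{b}^{k+1}_{l}$, so no gap remains.
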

\begin{proof}
Because $\phi(\textbf{W},\textbf{b},\textbf{z},\textbf{a},u)$ is Lipschitz differentiable with respect to $\textbf{b}$ with Lipschitz
coefficient $\nu$ (the definition of Lipschitz differentiablity can be found in \cite{beck2009fast}), we directly apply Lemma 2.1 in \cite{beck2009fast} to $\phi$ to obtain Equations \eqref{eq:lipschitz b back} and \eqref{eq:lipschitz b forward}, respectively.
\end{proof}
\begin{lemma}
\label{lemma:lemma 3}
It holds that for $\forall k\in \mathbb{N}$,
\begin{align}
& L_\rho(\overline{\textbf{W}}^{k+1}_{l+1},\overline{\textbf{b}}^{k+1}_{l+1},\overline{\textbf{z}}^{k+1}_{l+1},\overline{\textbf{a}}^{k+1}_{l+1},u^k)-L_\rho(\overline{\textbf{W}}^{k+1}_{l+1},\overline{\textbf{b}}^{k+1}_{l+1},\overline{\textbf{z}}^{k+1}_{l+1},\overline{\textbf{a}}^{k+1}_l,u^k)\geq \Vert\overline{\tau}_l^{k+1}\circ (\overline{a}^{k+1}_l-a_l^k)^{\circ 2}\Vert_{1}/2(l=1,\cdots,L-1)
\label{eq:overline a optimality}\\
&L_\rho(\overline{\textbf{W}}^{k+1}_{l+1},\overline{\textbf{b}}^{k+1}_{l+1},\overline{\textbf{z}}^{k+1}_{l+1},\overline{\textbf{a}}^{k+1}_l,u^k)\geq L_\rho(\overline{\textbf{W}}^{k+1}_{l+1},\overline{\textbf{b}}^{k+1}_{l+1},\overline{\textbf{z}}^{k+1}_l,\overline{\textbf{a}}^{k+1}_l,u^k)(l=1,\cdots,L-1)\label{eq:overline z optimality}\\
&L_\rho({\textbf{W}}^{k},{\textbf{b}}^{k},{\textbf{z}}^{k},{\textbf{a}}^{k},u^k)- L_\rho({\textbf{W}}^{k},{\textbf{b}}^{k},\overline{\textbf{z}}^{k+1}_L,{\textbf{a}}^k,u^k)\geq (\rho/2)\Vert \overline{z}^{k+1}_L-z^k_L\Vert^2_2 \label{eq:overline zl optimality}\\
& L_\rho(\overline{\textbf{W}}^{k+1}_{l+1},\overline{\textbf{b}}^{k+1}_{l+1},\overline{\textbf{z}}^{k+1}_l,\overline{\textbf{a}}^{k+1}_l,u^k)-L_\rho(\overline{\textbf{W}}^{k+1}_{l+1},\overline{\textbf{b}}^{k+1}_l,\overline{\textbf{z}}^{k+1}_l,\overline{\textbf{a}}^{k+1}_l,u^k)\geq (\nu/2)\Vert \overline{b}^{k+1}_l-b_l^k\Vert^2_2(l=1,\cdots,L-1)\label{eq:overline b optimality}\\
 &L_\rho({\textbf{W}}^{k},{\textbf{b}}^{k},\overline{\textbf{z}}^{k+1}_L,{\textbf{a}}^{k},u^k)- L_\rho({\textbf{W}}^{k},\overline{\textbf{b}}^{k+1}_L,\overline{\textbf{z}}^{k+1}_L,{\textbf{a}}^k,u^k)\geq (\rho/2)\Vert \overline{b}^{k+1}_L-b^k_L\Vert^2_2\label{eq:overline bl optimality}\\
& L_\rho(\overline{\textbf{W}}^{k+1}_{l+1},\overline{\textbf{b}}^{k+1}_l,\overline{\textbf{z}}^{k+1}_l,\overline{\textbf{a}}^{k+1}_l,u^k)-L_\rho(\overline{\textbf{W}}^{k+1}_l,\overline{\textbf{b}}^{k+1}_l,\overline{\textbf{z}}^{k+1}_l,\overline{\textbf{a}}^{k+1}_l,u^k)\geq \Vert\overline{\theta}_l^{k+1}\circ (\overline{W}^{k+1}_l-W_l^k)^{\circ 2}\Vert_{1}/2(l=1,\cdots,L)\label{eq:overline W optimality}\\
& L_\rho({\textbf{W}}^{k+1}_{l-1},{\textbf{b}}^{k+1}_{l-1},{\textbf{z}}^{k+1}_{l-1},{\textbf{a}}^{k+1}_{l-1},u^k)-L_\rho({\textbf{W}}^{k+1}_{l},{\textbf{b}}^{k+1}_{l-1},{\textbf{z}}^{k+1}_{l-1},{\textbf{a}}^{k+1}_{l-1},u^k)\geq \Vert{\theta}_l^{k+1}\circ ({W}^{k+1}_l-\overline{W}_l^{k+1})^{\circ 2}\Vert_{1}/2(l=1,\cdots,L)\label{eq:W optimality}\\
& L_\rho({\textbf{W}}^{k+1}_{l},{\textbf{b}}^{k+1}_{l-1},{\textbf{z}}^{k+1}_{l-1},{\textbf{a}}^{k+1}_{l-1},u^k)-L_\rho({\textbf{W}}^{k+1}_{l},{\textbf{b}}^{k+1}_{l},{\textbf{z}}^{k+1}_{l-1},{\textbf{a}}^{k+1}_{l-1},u^k)\geq (\nu/2)\Vert {b}^{k+1}_l-\overline{b}_l^{k+1}\Vert^2_2(l=1,\cdots,L-1)\label{eq:b optimality}\\
& L_\rho({\textbf{W}}^{k+1},{\textbf{b}}^{k+1}_{L-1},{\textbf{z}}^{k+1}_{L-1},{\textbf{a}}^{k+1},u^k)-L_\rho({\textbf{W}}^{k+1},{\textbf{b}}^{k+1},{\textbf{z}}^{k+1}_{L-1},{\textbf{a}}^{k+1},u^k)\geq (\rho/2)\Vert {b}^{k+1}_L-\overline{b}_L^{k+1}\Vert^2_2\label{eq:bl optimality}\\
 &L_\rho({\textbf{W}}^{k+1}_{l},{\textbf{b}}^{k+1}_{l},{\textbf{z}}^{k+1}_{l-1},{\textbf{a}}^{k+1}_{l-1},u^k)\geq L_\rho({\textbf{W}}^{k+1}_{l},{\textbf{b}}^{k+1}_{l},{\textbf{z}}^{k+1}_l,{\textbf{a}}^{k+1}_{l-1},u^k)(l=1,\cdots,L-1)\label{eq:z optimality}\\
 & L_\rho({\textbf{W}}^{k+1}_{l},{\textbf{b}}^{k+1}_{l},{\textbf{z}}^{k+1}_{l},{\textbf{a}}^{k+1}_{l-1},u^k)-L_\rho({\textbf{W}}^{k+1}_{l},{\textbf{b}}^{k+1}_{l},{\textbf{z}}^{k+1}_{l},{\textbf{a}}^{k+1}_l,u^k)\geq \Vert{\tau}_l^{k+1}\circ ({a}^{k+1}_l-\overline{a}_{l}^{k+1})^{\circ 2}\Vert_{1}/2(l=1,\cdots,L-1)
\label{eq:a optimality}
\end{align}\end{lemma}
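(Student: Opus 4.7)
The eleven inequalities break naturally into four families according to which step of Algorithm~\ref{algo:dlADMM mlp} produced the corresponding update, and each family is handled by a short self-contained argument. The plan is to show, for each primal variable, that going from its pre-update value to its post-update value decreases $L_\rho$ by at least a quadratic amount in the step, because $L_\rho$ only differs from $\phi$ (plus the convex regularizer $\Omega_l$ or the Lipschitz-differentiable convex $R$) in the coordinate being updated. The two "easy" families are the $z_l$ updates. For $l<L$, equations \eqref{eq:overline z optimality} and \eqref{eq:z optimality} are immediate: $\overline{z}^{k+1}_l$ and $z^{k+1}_l$ are exact minimizers of $L_\rho$ restricted to $z_l$ (that is, Equations \eqref{eq:update overline z} and \eqref{eq:update z}), so the stated inequalities hold with zero on the right-hand side in the strong form and hence also without it. For $l=L$, the same restriction is $\rho$-strongly convex in $z_L$ because of the penalty term $(\rho/2)\|z_L-W_La_{L-1}-b_L\|_2^2$, while $R$ is convex by Assumption~\ref{ass:assumption 2}; since $\overline{z}^{k+1}_L$ (resp.\ $z^{k+1}_L$) minimizes that sum, standard strong-convexity gives \eqref{eq:overline zl optimality} with descent $(\rho/2)\|\overline{z}^{k+1}_L-z^k_L\|_2^2$ and likewise for the forward sweep.

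For the $b_l$ updates (equations \eqref{eq:overline b optimality}, \eqref{eq:overline bl optimality}, \eqref{eq:b optimality}, \eqref{eq:bl optimality}), I would invoke Lemma~\ref{lemma:lemma 2} directly: it gives $\overline{U}_l(\overline{b}^{k+1}_l;\overline{B}) \geq \phi(\ldots,\overline{b}^{k+1}_l,\ldots)$ for $\overline{B}=\nu$ when $l<L$ and $\overline{B}=\rho$ when $l=L$. Combine this with the explicit minimizer identity for the strongly convex quadratic $\overline{U}_l(\cdot;\overline{B})$, namely
\begin{align*}
\overline{U}_l(b^k_l;\overline{B}) - \overline{U}_l(\overline{b}^{k+1}_l;\overline{B}) = (\overline{B}/2)\,\|\overline{b}^{k+1}_l - b^k_l\|_2^2,
\end{align*}
and the matching anchor $\overline{U}_l(b^k_l;\overline{B})=\phi(\ldots,b^k_l,\ldots)$, to chain
$\phi(\overline{b}^{k+1}_l)\leq \overline{U}_l(\overline{b}^{k+1}_l;\overline{B})=\phi(b^k_l)-(\overline{B}/2)\|\overline{b}^{k+1}_l-b^k_l\|_2^2$. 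Because only $b_l$ changes and the Lagrange multiplier and regularizer terms do not depend on $b_l$, the $\phi$-descent lifts to an $L_\rho$-descent, yielding each of \eqref{eq:overline b optimality}--\eqref{eq:bl optimality}. The forward-sweep analogues \eqref{eq:b optimality} and \eqref{eq:bl optimality} follow by the identical argument applied to the forward anchor $\overline{b}^{k+1}_l$ instead of $b^k_l$.

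The remaining family, the $a_l$ and $W_l$ updates (\eqref{eq:overline a optimality}, \eqref{eq:overline W optimality}, \eqref{eq:W optimality}, \eqref{eq:a optimality}), is where the backtracking search pays off. For $\overline{a}^{k+1}_l$, the termination condition of Algorithm~\ref{algo:overline tau update} guarantees $\phi(\overline{a}^{k+1}_l)\leq \overline{Q}_l(\overline{a}^{k+1}_l;\overline{\tau}^{k+1}_l)$, while the anchoring identity $\overline{Q}_l(a^k_l;\overline{\tau}^{k+1}_l)=\phi(\ldots,a^k_l,\ldots)$ and the closed-form minimizer of the Hadamard-weighted quadratic together give $\overline{Q}_l(a^k_l;\overline{\tau}^{k+1}_l)-\overline{Q}_l(\overline{a}^{k+1}_l;\overline{\tau}^{k+1}_l)=\tfrac{1}{2}\|\overline{\tau}^{k+1}_l\circ(\overline{a}^{k+1}_l-a^k_l)^{\circ 2}\|_1$. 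Chaining these three facts delivers \eqref{eq:overline a optimality}, and the same template (with $\overline{a}$ replaced by $a$ and anchor $a^k_l$ replaced by $\overline{a}^{k+1}_l$) yields \eqref{eq:a optimality}. The $W_l$ updates are identical in spirit but additionally carry the convex regularizer $\Omega_l$; the key observation is that $\overline{P}_l(\cdot;\overline{\theta}^{k+1}_l)+\Omega_l$ is strongly convex (its smooth part is), so its minimizer $\overline{W}^{k+1}_l$ satisfies the weighted strong-convexity inequality, and then Lemma~\ref{lemma:lemma 1} together with the backtracking guarantee translates the $\overline{P}_l+\Omega_l$-descent into a $\phi+\Omega_l$-descent, which is exactly $L_\rho$-descent in $W_l$. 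The main obstacle I anticipate is purely bookkeeping: the surrogates $\overline{Q}_l$, $\overline{P}_l$, $\overline{U}_l$ (and their forward counterparts) are each anchored at a specific reference point with the other coordinates frozen at a specific mix of $k$ and $k{+}1$ indices, and the Hadamard-vector step-sizes $\overline{\tau}^{k+1}_l$, $\overline{\theta}^{k+1}_l$ require the element-wise version of the ``descent lemma'' rather than a scalar one. Making all the indices line up so that the chain $\phi\leq \mathrm{surrogate} = \phi_{\text{anchor}}-\tfrac{1}{2}\|\cdot\|$ closes correctly is the only real care point.
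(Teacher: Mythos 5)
Your proposal is correct and follows essentially the same route as the paper: exact minimization for the $z_l$ ($l<L$) steps, the optimality condition plus convexity of $R$ (equivalently, $\rho$-strong convexity of the restricted objective) for the $z_L$ steps, the descent lemma (Lemma~\ref{lemma:lemma 2}) combined with the closed-form minimizer identity for the $b_l$ steps, and the backtracking termination condition chained with the surrogate's anchoring and minimizer properties (via Lemma~\ref{lemma:lemma 1} for the $\Omega_l$ term) for the $a_l$ and $W_l$ steps. The paper likewise proves only one representative inequality per family and notes the rest follow the same template, so your grouping and bookkeeping concerns match its structure exactly.
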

\begin{proof}
Essentially, all inequalities can be obtained by applying optimality conditions of updating $\overline{a}^{k+1}_l$, $\overline{z}^{k+1}_l$, $\overline{b}^{k+1}_l$, $\overline{W}^{k+1}_l$, ${W}^{k+1}_l$, ${b}^{k+1}_l$, ${z}^{k+1}_l$ and ${a}^{k+1}_l$, respectively. We only prove Inequality \eqref{eq:overline zl optimality}, \eqref{eq:W optimality}, \eqref{eq:b optimality} and \eqref{eq:z optimality} .This is because Inequalities \eqref{eq:overline a optimality}, \eqref{eq:overline W optimality} and \eqref{eq:a optimality} follow the routine of Inequality \eqref{eq:W optimality}, Inequalities \eqref{eq:overline b optimality}, \eqref{eq:overline bl optimality} and \eqref{eq:bl optimality} follow the routine of Inequality \eqref{eq:b optimality}, and  Inequality \eqref{eq:overline z optimality} follows the routine of Inequality \eqref{eq:z optimality}.\\
\indent Firstly, we focus on Inequality \eqref{eq:overline zl optimality}.
\begin{align}
    \nonumber &L_\rho({\textbf{W}}^{k},{\textbf{b}}^{k},{\textbf{z}}^{k},{\textbf{a}}^{k},u^k)- L_\rho({\textbf{W}}^{k},{\textbf{b}}^{k},\overline{\textbf{z}}^{k+1}_L,{\textbf{a}}^k,u^k)\\\nonumber&=R(z^k_L;y)+(u^k)^T(z^k_L-W^k_La^k_{L-1}-b^k_L)+(\rho/2)\Vert z^k_L-W^k_La^k_{L-1}-b^k_L\Vert^2_2\\\nonumber &-R(\overline{z}^{k+1}_L;y)-(u^k)^T(\overline{z}^{k+1}_L-W^k_La^k_{L-1}-b^k_L)-(\rho/2)\Vert \overline{z}^{k+1}_L-W^k_La^k_{L-1}-b^k_L\Vert^2_2\\&\nonumber=R(z^k_L;y)-R(\overline{z}^{k+1}_L;y)+(u^k)^T(z^k_L-\overline{z}^{k+1}_L)+(\rho/2)\Vert \overline{z}^{k+1}_L-z^k_L\Vert^2_2\\&+ \rho(\overline{z}^{k+1}_L-W^k_La^k_{L-1}-b^k_L)^T(z^k_L-\overline{z}^{k+1}_L)\label{eq:cosine rule}
\end{align}
where the second equality follows from the cosine rule $\Vert z^k_L-W^k_La^k_{L-1}-b^k_L\Vert^2_2-\Vert \overline{z}^{k+1}_L-W^k_La^k_{L-1}-b^k_L\Vert^2_2=\Vert \overline{z}^{k+1}_L-z^k_L\Vert^2_2+(\overline{z}^{k+1}_L-W^k_La^k_{L-1}-b^k_L)^T(z^k_L-\overline{z}^{k+1}_L)$.\\
\indent According to the optimality condition of Equation \eqref{eq:update overline zl},
we have 
$\nabla_{\overline{z}^{k+1}_L}R(\overline{z}^{k+1}_L;y)+u^k+\rho(\overline{z}^{k+1}_L-W^k_La^k_{L-1}-b^k_L)=0$. Because $R(z_L;y)$ is convex and differentiable with regard to $z_L$, its subgradient is also its gradient. According to the definition of subgradient, we have
\begin{align}
    \nonumber R(z^k_L;y)&\geq R(\overline{z}^{k+1}_L;y)+\nabla_{\overline{z}^{k+1}_L}R^T(\overline{z}^{k+1}_L;y)(z^k_L-\overline{z}^{k+1}_L)\\&=R(\overline{z}^{k+1}_L;y)-(u^k+\rho(\overline{z}^{k+1}_L-W^k_La^k_{L-1}-b^k_L))^T(z^k_L-\overline{z}^{k+1}_L)
    \label{eq:zl subgradient}
\end{align}
We introduce Equation \eqref{eq:zl subgradient} into Equation \eqref{eq:cosine rule} to obtain Equation \eqref{eq:overline zl optimality}.\\
\indent Secondly, we focus on Inequality \eqref{eq:W optimality}. The stopping criterion of Algorithm \ref{algo:theta update} shows that
\begin{align}
    \phi(\textbf{W}^{k+1}_l,\textbf{b}^{k+1}_{l-1},\textbf{z}^{k+1}_{l-1},\textbf{a}^{k+1}_{l-1},u^k)\leq P_l(W^{k+1}_{l};\theta^{k+1}_{l}).\label{ineq:ineq1}
\end{align}
Because $\Omega_{W_l}(W_l)$ is convex, according to the definition of subgradient, we have
\begin{align}
  \Omega_l(\overline{W}^{k+1}_{l})\geq  \Omega_l({W}^{k+1}_l)+s^T(\overline{W}_l^{k+1}-W_l^{k+1}). \label{ineq:ineq2}
\end{align}
where $s$ is defined in the premise of Lemma \ref{lemma:lemma 1}. Therefore, we have
\begin{align*}
    & L_\rho({\textbf{W}}^{k+1}_{l-1},{\textbf{b}}^{k+1}_{l-1},{\textbf{z}}^{k+1}_{l-1},{\textbf{a}}^{k+1}_{l-1},u^k)-L_\rho({\textbf{W}}^{k+1}_{l},{\textbf{b}}^{k+1}_{l-1},{\textbf{z}}^{k+1}_{l-1},{\textbf{a}}^{k+1}_{l-1},u^k)\\&=\phi({\textbf{W}}^{k+1}_{l-1},{\textbf{b}}^{k+1}_{l-1},{\textbf{z}}^{k+1}_{l-1},{\textbf{a}}^{k+1}_{l-1},u^k)+\Omega_l(\overline{W}^{k+1}_l)-\phi({\textbf{W}}^{k+1}_{l},{\textbf{b}}^{k+1}_{l-1},{\textbf{z}}^{k+1}_{l-1},{\textbf{a}}^{k+1}_{l-1},u^k)-\Omega_l(W^{k+1}_{l})\ \text{(Definition of $L_\rho$)}\\&\geq \Omega_l(\overline{W}^{k+1}_l)-\Omega_l(W^{k+1}_{l})-\nabla \phi^T_{\overline{W}^{k+1}_l}(W^{k+1}_l-\overline{W}^{k+1}_l)-\Vert\theta_l^{k+1}\circ (W^{k+1}_l-\overline{W}_l^{k+1})^{\circ 2}\Vert_{1}/2(\text{Equation \eqref{ineq:ineq1}})\\&\geq s^T(\overline{W}^{k+1}_l-{W}^{k+1}_l)-\nabla \phi^T_{\overline{W}^{k+1}_l}(W^{k+1}_l-\overline{W}^{k+1}_l)-\Vert\theta_l^{k+1}\circ (W^{k+1}_l-\overline{W}_l^{k+1})^{\circ 2}\Vert_{1}/2(\text{Equation \eqref{ineq:ineq2}})\\ &=(s^T+\nabla \phi^T_{\overline{W}^{k+1}_l})(\overline{W}^{k+1}_l-W^{k+1}_l)-\Vert\theta_l^{k+1}\circ (W^{k+1}_l-\overline{W}_l^{k+1})^{\circ 2}\Vert_{1}/2\\&=\Vert\theta_l^{k+1}\circ (W^{k+1}_l-\overline{W}_l^{k+1})^{\circ 2}\Vert_{1}/2\ (\text{Lemma \ref{lemma:lemma 1}}).
\end{align*}
Thirdly, we focus on Inequality \eqref{eq:b optimality}.
\begin{align*}
    &L_\rho({\textbf{W}}^{k+1}_{l},{\textbf{b}}^{k+1}_{l-1},{\textbf{z}}^{k+1}_{l-1},{\textbf{a}}^{k+1}_{l-1},u^k)-L_\rho({\textbf{W}}^{k+1}_{l},{\textbf{b}}^{k+1}_{l},{\textbf{z}}^{k+1}_{l-1},{\textbf{a}}^{k+1}_{l-1},u^k)\\&=\phi({\textbf{W}}^{k+1}_{l},{\textbf{b}}^{k+1}_{l-1},{\textbf{z}}^{k+1}_{l-1},{\textbf{a}}^{k+1}_{l-1},u^k)-\phi({\textbf{W}}^{k+1}_{l},{\textbf{b}}^{k+1}_{l},{\textbf{z}}^{k+1}_{l-1},{\textbf{a}}^{k+1}_{l-1},u^k)\\&\geq (\nu/2)\Vert {b}^{k+1}_l-\overline{b}_l^{k+1}\Vert^2_2. (\text{Lemma \ref{lemma:lemma 2}})
\end{align*}
Finally, we focus on Equation \eqref{eq:z optimality}. This follows directly  from the optimality of $z^{k+1}_l$ in Equation \eqref{eq:update z}.
\end{proof}
\begin{lemma}
If $\rho>2H$  so that $C_1=\rho/2-H/2-H^2/\rho>0$, then it holds that
\begin{align}
\nonumber &L_\rho(\textbf{W}^{k+1},\textbf{b}^{k+1},\textbf{z}^{k+1}_{L-1},\textbf{a}^{k+1},u^k)-L_\rho(\textbf{W}^{k+1},\textbf{b}^{k+1},\textbf{z}^{k+1},\textbf{a}^{k+1},u^{k+1})\\&\geq  C_1\Vert z_L^{k+1}-\overline{z}_L^{k+1}\Vert^2_2-(H^2/\rho)\Vert\overline{z}_L^{k+1}-{z}_L^{k}\Vert^2_2. \label{eq: lemma4}  
\end{align}
\label{lemma:lemma 4}
\end{lemma}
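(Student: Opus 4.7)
The plan is to split the left-hand side according to the two sub-steps separating the two Lagrangian evaluations: the $z_L$-update from $\overline{z}_L^{k+1}$ to $z_L^{k+1}$ at frozen dual $u^k$, followed by the dual update $u^k\to u^{k+1}$ at frozen primal variables. Accordingly, write the left-hand side as $A+B$, where
\begin{align*}
A &:= L_\rho(\textbf{W}^{k+1},\textbf{b}^{k+1},\textbf{z}^{k+1}_{L-1},\textbf{a}^{k+1},u^k) - L_\rho(\textbf{W}^{k+1},\textbf{b}^{k+1},\textbf{z}^{k+1},\textbf{a}^{k+1},u^k),\\
B &:= L_\rho(\textbf{W}^{k+1},\textbf{b}^{k+1},\textbf{z}^{k+1},\textbf{a}^{k+1},u^k) - L_\rho(\textbf{W}^{k+1},\textbf{b}^{k+1},\textbf{z}^{k+1},\textbf{a}^{k+1},u^{k+1}).
\end{align*}
The first term $A$ is analysed through the optimality of the $z_L$-subproblem in Equation \eqref{eq:update zl}, while $B$ is analysed through the dual-update identity $u^{k+1}-u^k=\rho r^{k+1}$ together with Lemma \ref{lemma:z_l optimality}.

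For $A$, I would expand the two Lagrangian values, isolate the $z_L$-dependent terms, and apply the cosine rule to the penalty difference $\Vert\overline{z}_L^{k+1}-W_L^{k+1}a_{L-1}^{k+1}-b_L^{k+1}\Vert_2^2-\Vert r^{k+1}\Vert_2^2$, where $r^{k+1}=z_L^{k+1}-W_L^{k+1}a_{L-1}^{k+1}-b_L^{k+1}$. The linear cross-term produced by the cosine rule is annihilated using the optimality condition $\nabla R(z_L^{k+1};y)+u^k+\rho r^{k+1}=0$, leaving $A = R(\overline{z}_L^{k+1};y) - R(z_L^{k+1};y) - \nabla R(z_L^{k+1};y)^T(\overline{z}_L^{k+1}-z_L^{k+1}) + \tfrac{\rho}{2}\Vert\overline{z}_L^{k+1}-z_L^{k+1}\Vert_2^2$. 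Invoking the second inequality of Lemma \ref{lemma: R(z_l) lipschitz} to lower-bound the $R$-difference by $-(H/2)\Vert\overline{z}_L^{k+1}-z_L^{k+1}\Vert_2^2$ then yields $A\geq\tfrac{\rho-H}{2}\Vert z_L^{k+1}-\overline{z}_L^{k+1}\Vert_2^2$, which supplies the $\rho/2-H/2$ contribution to the target coefficient $C_1$. For $B$, substituting $u^{k+1}-u^k=\rho r^{k+1}$ yields $B=-(u^{k+1}-u^k)^Tr^{k+1}=-\Vert u^{k+1}-u^k\Vert_2^2/\rho$; Lemma \ref{lemma:z_l optimality} identifies $u^j=-\nabla R(z_L^j;y)$ for $j\in\{k,k+1\}$, so the $H$-Lipschitz gradient in Assumption \ref{ass:assumption 2} provides $\Vert u^{k+1}-u^k\Vert_2\leq H\Vert z_L^{k+1}-z_L^k\Vert_2$. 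Splitting $z_L^{k+1}-z_L^k=(z_L^{k+1}-\overline{z}_L^{k+1})+(\overline{z}_L^{k+1}-z_L^k)$ and collecting the resulting squared norms with their prefactors into the bound on $A$ completes the derivation.

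The principal obstacle is the constant accounting in the decomposition of $\Vert z_L^{k+1}-z_L^k\Vert_2^2$: a blunt triangle inequality of the form $\Vert a+b\Vert_2^2\leq 2\Vert a\Vert_2^2+2\Vert b\Vert_2^2$ would inflate the coefficient of $\Vert\overline{z}_L^{k+1}-z_L^k\Vert_2^2$ to $-2H^2/\rho$ rather than the target $-H^2/\rho$, so the argument has to either invoke the co-coercivity form $\Vert\nabla R(z_1)-\nabla R(z_2)\Vert_2^2\leq H\langle\nabla R(z_1)-\nabla R(z_2),z_1-z_2\rangle$ of the Lipschitz gradient or use a finely tuned Young inequality against the slack afforded by $\rho>2H$, i.e., by the factorisation $(\rho-2H)(\rho+H)>0$ equivalent to $C_1>0$. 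Aside from this constant balancing, the remainder of the proof is a routine combination of the optimality conditions with the convexity and Lipschitz properties of $R$.
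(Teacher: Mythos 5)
Your decomposition and every estimate up to the final step reproduce the paper's own proof: the paper likewise reduces the left-hand side to $R(\overline{z}_L^{k+1};y)-R(z_L^{k+1};y)+(u^{k+1})^T(\overline{z}_L^{k+1}-z_L^{k+1})+(\rho/2)\Vert z_L^{k+1}-\overline{z}_L^{k+1}\Vert^2_2-(1/\rho)\Vert u^{k+1}-u^k\Vert^2_2$ via the cosine rule and the dual-update identity, eliminates $u^{k+1}$ through Lemma \ref{lemma:z_l optimality}, applies the descent inequality of Lemma \ref{lemma: R(z_l) lipschitz} for $-R$ to obtain the $(\rho/2-H/2)\Vert z_L^{k+1}-\overline{z}_L^{k+1}\Vert^2_2$ term, and bounds $\Vert u^{k+1}-u^k\Vert\leq H\Vert z_L^{k+1}-z_L^k\Vert$ by Lipschitz differentiability. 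So your $A+B$ split, your formula for $A$, and your identity $B=-(1/\rho)\Vert u^{k+1}-u^k\Vert^2_2$ are exactly what the paper does, just organized as two separate differences rather than one.

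The ``principal obstacle'' you flag is real, and it is worth being explicit about how the paper handles it: the paper simply asserts
$\Vert z_L^{k+1}-z_L^k\Vert^2_2\leq\Vert z_L^{k+1}-\overline{z}_L^{k+1}\Vert^2_2+\Vert\overline{z}_L^{k+1}-z_L^k\Vert^2_2$
and labels it ``triangle inequality.'' That inequality is false for general vectors (it requires the cross term $\langle z_L^{k+1}-\overline{z}_L^{k+1},\,\overline{z}_L^{k+1}-z_L^k\rangle$ to be nonpositive), so the paper's own derivation of the exact constants $C_1$ and $H^2/\rho$ is not rigorous. Your instinct that a correct argument must pay a factor somewhere is right; but note that neither of your proposed repairs closes the gap as stated: any Young inequality $\Vert a+b\Vert^2\leq(1+\epsilon)\Vert a\Vert^2+(1+1/\epsilon)\Vert b\Vert^2$ necessarily puts a coefficient strictly larger than $1$ on $\Vert\overline{z}_L^{k+1}-z_L^k\Vert^2_2$, so it can only prove the lemma with $-(1+1/\epsilon)H^2/\rho$ in place of $-H^2/\rho$ (and a correspondingly smaller $C_1$), and you do not carry out the co-coercivity route far enough to see whether it rescues the exact constants. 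In short: your proof is the paper's proof, and it stalls at precisely the step where the paper's proof is itself invalid; to make either argument airtight one should either weaken the constants in the lemma (which still suffices for Property \ref{pro:property 2} after enlarging $\rho$) or supply a genuinely different bound on the cross term.
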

\begin{proof}
\begin{align*}
    &L_\rho(\textbf{W}^{k+1},\textbf{b}^{k+1},\textbf{z}^{k+1}_{L-1},\textbf{a}^{k+1},u^k)-L_\rho(\textbf{W}^{k+1},\textbf{b}^{k+1},\textbf{z}^{k+1},\textbf{a}^{k+1},u^{k+1})\\&=R(\overline{z}_L^{k+1};y)-R({z}_L^{k+1};y)+(u^{k+1})^T(\overline{z}_L^{k+1}-z_L^{k+1})+(\rho/2)\Vert z_L^{k+1}-\overline{z}_L^{k+1}\Vert^2_2-(1/\rho)\Vert u^{k+1}-u^k\Vert^2_2\\&=R(\overline{z}_L^{k+1};y)-R(z_L^{k+1};y)+\nabla_{z_L^{k+1}} R(z_L^{k+1};y)^T(z_L^{k+1}-\overline{z}_L^{k+1})+(\rho/2)\Vert z_L^{k+1}-\overline{z}_L^{k+1}\Vert^2_2-(1/\rho)\Vert u^{k+1}-u^k\Vert^2_2\text{( Lemma \ref{lemma:z_l optimality})}\\&\geq (-H/2)\Vert z_L^{k+1}-\overline{z}_L^{k+1}\Vert^2_2+(\rho/2)\Vert z_L^{k+1}-\overline{z}_L^{k+1}\Vert^2_2-(1/\rho)\Vert \nabla_{z_L^{k+1}} R(z_L^{k+1};y)-\nabla_{z_L^{k}} R(z_L^{k};y)\Vert^2_2\\&\text{($-R(z_L;y)$ is Lipschitz differentiable, Lemmas \ref{lemma:z_l optimality} and \ref{lemma: R(z_l) lipschitz})}\\&\geq (-H/2)\Vert z_L^{k+1}-\overline{z}_L^{k+1}\Vert^2_2+(\rho/2)\Vert z_L^{k+1}-\overline{z}_L^{k+1}\Vert^2_2-(H^2/\rho)\Vert z_L^{k+1}-{z}_L^{k}\Vert^2_2\text{(Assumption \ref{ass:assumption 2})}\\&\geq(-H/2)\Vert z_L^{k+1}-\overline{z}_L^{k+1}\Vert^2_2+(\rho/2)\Vert z_L^{k+1}-\overline{z}_L^{k+1}\Vert^2_2-(H^2/\rho)\Vert z_L^{k+1}-\overline{z}_L^{k+1}\Vert^2_2-(H^2/\rho)\Vert \overline{z}_L^{k+1}-{z}_L^{k}\Vert^2_2\text{(triangle inequality)}\\&=C_1\Vert z_L^{k+1}-\overline{z}_L^{k+1}\Vert^2_2-(H^2/\rho)\Vert\overline{z}_L^{k+1}-{z}_L^{k}\Vert^2_2.
\end{align*}
We choose $\rho>2H$ to make $C_1>0$.
\end{proof}
\section*{Proof of Theorem \ref{thero: theorem 1}}
\label{sec: proof of theorem 1}
Proving Theorem \ref{thero: theorem 1} is equal to proving jointly Theorem \ref{thero: property 1}, \ref{thero: property 2}, and \ref{thero: property 3}, which are elaborated in the following.
\begin{theorem}
Given that Assumptions \ref{ass:assumption 1} and \ref{ass:assumption 2} hold, the dlADMM satisfies Property \ref{pro:property 1}.
\label{thero: property 1}
\end{theorem}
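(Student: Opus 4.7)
The plan is to leverage the key identity from Lemma 2, namely $u^k = -\nabla_{z_L^k} R(z_L^k; y)$, to eliminate the dual variable from the augmented Lagrangian and reduce the question of lower boundedness to an application of Assumption 2 (coercivity of $F$ on $G$). Concretely, I will first substitute $u^k = -\nabla R(z_L^k; y)$ into the cross term $(u^k)^T(z_L^k - W_L^k a_{L-1}^k - b_L^k)$ of $L_\rho^k$. Then, invoking the Lipschitz differentiability of $R$ from Assumption 2 (via Lemma 3 applied to $-R$), I can bound
\[
R(z_L^k;y) - \nabla R(z_L^k;y)^T(z_L^k - W_L^k a_{L-1}^k - b_L^k) \;\geq\; R(W_L^k a_{L-1}^k + b_L^k;y) - (H/2)\Vert z_L^k - W_L^k a_{L-1}^k - b_L^k\Vert^2_2.
\]
Absorbing the $-(H/2)$ term into the $(\rho/2)$ penalty yields a surplus coefficient $(\rho-H)/2>0$ (since $\rho>2H>H$), and the remaining expression is exactly $F$ evaluated at the feasible point $(\textbf{W}^k,\textbf{b}^k,\{z_l^k\}_{l<L},\widehat{z}_L^k,\textbf{a}^k)\in G$, where $\widehat{z}_L^k := W_L^k a_{L-1}^k + b_L^k$.

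Next, I will use the fact that $F$ is coercive on $G$ (Assumption 2), which in particular implies $F$ is bounded below on $G$ by some constant $F_\star>-\infty$. This immediately gives $L_\rho(\textbf{W}^k,\textbf{b}^k,\textbf{z}^k,\textbf{a}^k,u^k)\geq F_\star$, establishing lower boundedness.

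For boundedness of the iterates, I will combine the derived lower bound with the sufficient descent of $L_\rho$ from Property 2 (proved separately as part of Theorem 1), which ensures $L_\rho^k \leq L_\rho^0$ is a finite constant for all $k$. Thus the quantity
\[
F(\textbf{W}^k,\textbf{b}^k,\{z_l^k\}_{l<L},\widehat{z}_L^k,\textbf{a}^k) + \tfrac{\rho-H}{2}\Vert z_L^k - \widehat{z}_L^k\Vert^2_2 \;\leq\; L_\rho^0
\]
is uniformly bounded above. Since the point $(\textbf{W}^k,\textbf{b}^k,\{z_l^k\}_{l<L},\widehat{z}_L^k,\textbf{a}^k)$ lies in $G$ and $F$ is coercive on $G$, its norm stays bounded, which yields boundedness of $\textbf{W}^k,\textbf{b}^k,\textbf{a}^k,\{z_l^k\}_{l<L},\widehat{z}_L^k$. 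Then $\Vert z_L^k - \widehat{z}_L^k\Vert_2$ is also bounded (from the surplus penalty term), so $z_L^k$ itself is bounded, and finally $u^k = -\nabla R(z_L^k;y)$ is bounded by continuity of $\nabla R$ on a bounded set (or simply by Lipschitz differentiability).

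The main obstacle is handling the dual term cleanly: in generic nonconvex ADMM proofs, the dual variable $u^k$ is the principal source of difficulty because it has no a priori bound. Here the proof hinges on exploiting the specific optimality condition for $z_L^k$ (Lemma 2) to tie $u^k$ directly to $\nabla R(z_L^k;y)$; without this identity, the cross term cannot be absorbed and the requirement $\rho > 2H$ (rather than just $\rho > 0$) would have no anchor. The other subtlety is making sure the argument is noncircular — Property 2 is invoked but is established independently from $L_\rho$'s descent inequality, not from Property 1 — so the three properties can be proved in the order Property 2, Property 1, Property 3 within the joint statement of Theorem 1.
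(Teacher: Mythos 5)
Your proposal is correct and follows essentially the same route as the paper's proof: it uses the optimality identity $u^k=-\nabla_{z_L^k}R(z_L^k;y)$ to rewrite the dual cross term, applies the descent lemma for the Lipschitz-differentiable $R$ at the feasible point $\widehat z_L^k=W_L^ka_{L-1}^k+b_L^k$ to absorb the $-(H/2)$ term into the $(\rho/2)$ penalty, invokes coercivity of $F$ on $G$ for the lower bound, and then combines with the (independently proved) descent of $L_\rho$ to bound the iterates and finally $u^k$. The non-circularity point you raise is exactly how the paper organizes the argument.
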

\begin{proof}
There exists $z_L^{'}$ such that $z_L^{'}-W^k_La^k_{L-1}-b^k_L=0$. By Assumption \ref{ass:assumption 2}, we have
\begin{align*}
    &F(\textbf{W}^k,\textbf{b}^k,\{z_l^k\}_{l=1}^{L-1},z^{'}_L,\textbf{a}^k)\geq \min S  > -\infty
\end{align*}
where $S=\{F(\textbf{W},\textbf{b},\textbf{z},\textbf{a}): z_L-W_La_{L-1}-b_L=0\}$. Then we have
\begin{align*}
    &L_\rho(\textbf{W}^k,\textbf{b}^k,\textbf{z}^k,\textbf{a}^k,u^k)\\&=F(\textbf{W}^k,\textbf{b}^k,\textbf{z}^k,\textbf{a}^k)+(u^k)^T(z^k_L-W^k_La^k_{L-1}-b^k_{L})+(\rho/2)\Vert z^k_L-W^k_La^k_{L-1}-b^k_{L}\Vert^2_2\\&=F(\textbf{W}^k,\textbf{b}^k,\textbf{z}^k,\textbf{a}^k)+(u^k)^T(z^k_L-z^{'}_L)+(\rho/2)\Vert z^k_L-W^k_La^k_{L-1}-b^k_{L}\Vert^2_2 \  \text{$(z_L^{'}-W^k_La^k_{L-1}-b^k_L=0)$}\\
    &=F(\textbf{W}^k,\textbf{b}^k,\textbf{z}^k,\textbf{a}^k)+\nabla_{z^k_L} R^T(z^k_L;y)(z^{'}_L-z^k_L)+(\rho/2)\Vert z^k_L-W^k_La^k_{L-1}-b^k_{L}\Vert^2_2\text{(Lemma \ref{lemma:z_l optimality})}\\&=\sum\nolimits_{l=1}^L \Omega_l(W^{k}_l)+(\nu/2)\sum\nolimits_{l=1}^{L-1}(\Vert z^k_l-W^k_la^k_{l-1}-b^k_l\Vert^2_2+\Vert a^k_l-f(z^k_l)\Vert^2_2)+R(z^k_L;y)+\nabla_{z^k_L} R^T(z^k_L;y)(z^{'}_L-z^k_L)\\&+(\rho/2)\Vert z^k_L-W^k_La^k_{L-1}-b^k_{L}\Vert^2_2\text{(The definition of $F$)}\\&\geq\sum\nolimits_{l=1}^L \Omega_l(W^{k}_l)+(\nu/2)\sum\nolimits_{l=1}^{L-1}(\Vert z^k_l-W^k_la^k_{l-1}-b^k_l\Vert^2_2+\Vert a^k_l-f(z^k_l)\Vert^2_2)+R(z^{'}_L;y)+(\rho-H/2)\Vert z^k_L-W^k_La^k_{L-1}-b^k_{L}\Vert^2_2\\&\text{(Lemmas \ref{lemma:z_l optimality} and \ref{lemma: R(z_l) lipschitz}, $R(z_L;y)$ is Lipschitz differentiable)}\\&> -\infty 
\end{align*}
It concludes from Lemma \ref{lemma:lemma 3} and Lemma \ref{lemma:lemma 4} that $L_\rho(\textbf{W}^k,\textbf{b}^k,\textbf{z}^k,\textbf{a}^k,u^k)$ is upper bounded by $L_\rho(\textbf{W}^0,\textbf{b}^0,\textbf{z}^0,\textbf{a}^0,\textbf{u}^0)$ and so are $\sum\nolimits_{l=1}^L \Omega_l(W^{k}_l)+(\nu/2)\sum\nolimits_{l=1}^{L-1}(\Vert z^k_l-W^k_la^k_{l-1}-b^k_l\Vert^2_2+\Vert a^k_l-f(z^k_l)\Vert^2_2)$ and $\Vert z^k_L-W^k_La^k_{L-1}-b^k_L\Vert^2_2$. By Assumption \ref{ass:assumption 2}, $(\textbf{W}^k,\textbf{b}^k,\textbf{z}^k,\textbf{a}^k)$ is bounded. By Lemma \ref{lemma:z_l optimality}, it is obvious that $u^k$ is bounded as well.
\end{proof}
\begin{theorem}
Given that Assumptions \ref{ass:assumption 1} and \ref{ass:assumption 2} hold, the dlADMM satisfies Property \ref{pro:property 2}.
\label{thero: property 2}
\end{theorem}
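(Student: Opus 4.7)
The plan is to obtain the sufficient descent by telescoping $L_\rho^k - L_\rho^{k+1}$ across the sequence of subproblem updates that make up one outer iteration (backward pass, then forward pass, then dual ascent), bounding each elementary update's contribution via Lemma \ref{lemma:lemma 3} and handling the $z_L$ update together with the dual update jointly via Lemma \ref{lemma:lemma 4}. The key observation is that $L_\rho^k$ and $L_\rho^{k+1}$ are endpoints of a chain of intermediate Lagrangian values --- one per subproblem solved in Algorithm \ref{algo:dlADMM mlp} --- and the total descent is the sum of all intermediate descents, each of which is nonnegative and quadratic in the change of the corresponding block variable.

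First I would handle the backward pass, iterating from $l=L$ down to $l=1$. For $l=L$, Inequalities \eqref{eq:overline zl optimality}, \eqref{eq:overline bl optimality}, and \eqref{eq:overline W optimality} yield descents lower bounded by $(\rho/2)\Vert\overline{z}_L^{k+1}-z_L^k\Vert_2^2$, $(\rho/2)\Vert\overline{b}_L^{k+1}-b_L^k\Vert_2^2$, and $\Vert\overline{\theta}_L^{k+1}\circ(\overline{W}_L^{k+1}-W_L^k)^{\circ 2}\Vert_1/2$. For $l=L-1,\dots,1$, the four successive updates $\overline{a}_l, \overline{z}_l, \overline{b}_l, \overline{W}_l$ contribute via Inequalities \eqref{eq:overline a optimality}, \eqref{eq:overline z optimality}, \eqref{eq:overline b optimality}, and \eqref{eq:overline W optimality} respectively. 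I would then treat the forward pass symmetrically: for $l=1,\dots,L-1$ the four updates $W_l, b_l, z_l, a_l$ yield descents from \eqref{eq:W optimality}, \eqref{eq:b optimality}, \eqref{eq:z optimality}, \eqref{eq:a optimality}, while for $l=L$ the $W_L, b_L$ updates are covered by \eqref{eq:W optimality} and \eqref{eq:bl optimality}.

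The delicate step is the final $z_L$ subproblem combined with the dual ascent, because the dual step on its own raises the Lagrangian. Rather than bounding them separately, I would invoke Lemma \ref{lemma:lemma 4}, which directly furnishes a descent of $C_1\Vert z_L^{k+1}-\overline{z}_L^{k+1}\Vert_2^2 - (H^2/\rho)\Vert\overline{z}_L^{k+1}-z_L^k\Vert_2^2$. The negative remainder is absorbed by the $(\rho/2)\Vert\overline{z}_L^{k+1}-z_L^k\Vert_2^2$ term already harvested from \eqref{eq:overline zl optimality}, leaving an effective coefficient $\rho/2-H^2/\rho \geq C_1$ on $\Vert\overline{z}_L^{k+1}-z_L^k\Vert_2^2$. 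Since $\rho > 2H$ forces $C_1>0$, every accumulated coefficient is strictly positive. Taking $C_2$ to be the minimum of $\nu/2$, $C_1$, and the elementwise minima of the parameter vectors $\overline{\tau}_l^{k+1}, \tau_l^{k+1}, \overline{\theta}_l^{k+1}, \theta_l^{k+1}$, and using $\Vert v \circ x^{\circ 2}\Vert_1 \geq (\min_i v_i)\Vert x\Vert_2^2$ to convert each Hadamard-product term into a standard $\ell_2^2$ term, gives exactly the right-hand side of Equation \eqref{eq: property2}.

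The principal obstacle is the bookkeeping of intermediate Lagrangian values: each inequality in Lemma \ref{lemma:lemma 3} is stated for a very specific configuration of blocks (e.g.\ $\overline{\textbf{W}}_{l+1}^{k+1}, \overline{\textbf{b}}_l^{k+1}$), and for the sum to telescope I must verify that the post-state of one subproblem matches the pre-state of the next along both passes. Once this chain is verified, summing all inequalities and Lemma \ref{lemma:lemma 4}, then collecting coefficients by $C_2$, completes the proof.
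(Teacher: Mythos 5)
Your proposal is correct and follows essentially the same route as the paper, whose proof of this theorem is simply the statement that it follows from Lemma \ref{lemma:lemma 3} and Lemma \ref{lemma:lemma 4}; you have filled in exactly the telescoping, the absorption of the $-(H^2/\rho)\Vert\overline{z}_L^{k+1}-z_L^k\Vert_2^2$ term into the $(\rho/2)\Vert\overline{z}_L^{k+1}-z_L^k\Vert_2^2$ descent from Inequality \eqref{eq:overline zl optimality}, and the collection of coefficients into $C_2$ that the paper leaves implicit.
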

\begin{proof}
This follows directly from Lemma \ref{lemma:lemma 3} and Lemma \ref{lemma:lemma 4}.
\end{proof}
\begin{theorem}
Given that Assumptions \ref{ass:assumption 1} and \ref{ass:assumption 2} hold, the dlADMM satisfies Property \ref{pro:property 3}.
\label{thero: property 3}
\end{theorem}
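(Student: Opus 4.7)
The plan is to exhibit, for each block of primal/dual variables, one coordinate of a subgradient $g\in\partial L_\rho(\textbf{W}^{k+1},\textbf{b}^{k+1},\textbf{z}^{k+1},\textbf{a}^{k+1},u^{k+1})$, and to show separately that each coordinate is bounded by the right-hand side of \eqref{eq: property3}. Throughout I shall use Property~\ref{pro:property 1}, which guarantees that all iterates lie in a bounded set, so that the gradient $\nabla\phi$ of the smooth augmented part is locally Lipschitz with some uniform constant $M$ on that set; I shall also use that the backtracking Algorithms~\ref{algo:overline tau update}, \ref{algo:overline theta update}, \ref{algo:theta update}, and \ref{algo:tau update} terminate at parameter values $\overline{\theta}^{k+1}_l,\theta^{k+1}_l,\overline{\tau}^{k+1}_l,\tau^{k+1}_l$ that are upper bounded by a multiple of the same local Lipschitz constants (a standard fact: once the backtracking parameter exceeds the local Lipschitz constant of the relevant partial gradient, the quadratic majorant holds).

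First, consider the $W_l$ block. By Lemma~\ref{lemma:lemma 1} applied to the forward update \eqref{eq:update W}, there is $s_l\in\partial\Omega_l(W^{k+1}_l)$ with
\[
\nabla_{\overline{W}^{k+1}_l}\phi(\textbf{W}^{k+1}_{l-1},\textbf{b}^{k+1}_{l-1},\textbf{z}^{k+1}_{l-1},\textbf{a}^{k+1}_{l-1},u^k)+\theta^{k+1}_l\circ(W^{k+1}_l-\overline{W}^{k+1}_l)+s_l=0.
\]
The natural $W_l$-coordinate of $g$ is $g_{W_l}:=\nabla_{W_l}\phi(\textbf{W}^{k+1},\textbf{b}^{k+1},\textbf{z}^{k+1},\textbf{a}^{k+1},u^{k+1})+s_l$. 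Substituting the display above and adding/subtracting the intermediate gradient,
\[
g_{W_l}=\bigl[\nabla_{W_l}\phi(\cdot^{k+1},u^{k+1})-\nabla_{\overline{W}^{k+1}_l}\phi(\cdot,u^k)\bigr]-\theta^{k+1}_l\circ(W^{k+1}_l-\overline{W}^{k+1}_l),
\]
and Lipschitz continuity of $\nabla\phi$ together with boundedness of $\theta^{k+1}_l$ yields
$\Vert g_{W_l}\Vert\le C'(\Vert\textbf{W}^{k+1}-\overline{\textbf{W}}^{k+1}\Vert+\Vert\textbf{b}^{k+1}-\overline{\textbf{b}}^{k+1}\Vert+\Vert\textbf{z}^{k+1}-\overline{\textbf{z}}^{k+1}\Vert+\Vert\textbf{a}^{k+1}-\overline{\textbf{a}}^{k+1}\Vert+\Vert u^{k+1}-u^k\Vert)$, and the dual increment in turn is controlled by $H\Vert z^{k+1}_L-z^k_L\Vert$ via Lemma~\ref{lemma:z_l optimality} and Assumption~\ref{ass:assumption 2}. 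An identical argument, replacing $\theta^{k+1}_l$ by $\nu$ or $\rho$ and $s_l$ by $0$, handles the $b_l$ and $a_l$ blocks using \eqref{eq:update b}, \eqref{eq:update bl}, and \eqref{eq:update a}.

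Next, the $z_l$ block splits into two cases. For $l<L$, Assumption~\ref{ass:assumption 1} gives $z^{k+1}_l=h(\textbf{W}^{k+1}_l,\textbf{b}^{k+1}_l,\textbf{a}^{k+1}_{l-1})$, so $z^{k+1}_l$ is a minimizer of $\phi$ with respect to $z_l$ and the $0$-element of the $z_l$-subdifferential of $L_\rho$ at the forward iterate is obtained by the same add-and-subtract trick. For $l=L$, the optimality condition for \eqref{eq:update zl} is exactly $\nabla R(z^{k+1}_L;y)+u^k+\rho(z^{k+1}_L-W^{k+1}_La^{k+1}_{L-1}-b^{k+1}_L)=0$, i.e.\ $\nabla R(z^{k+1}_L;y)+u^{k+1}=0$ (Lemma~\ref{lemma:z_l optimality}); therefore the $z_L$-coordinate of the subgradient vanishes. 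The dual coordinate satisfies $\nabla_u L_\rho=z^{k+1}_L-W^{k+1}_La^{k+1}_{L-1}-b^{k+1}_L=(u^{k+1}-u^k)/\rho$, again bounded by $(H/\rho)\Vert z^{k+1}_L-z^k_L\Vert$.

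Collecting all the bounds and choosing $C$ to be the maximum of the per-block constants gives \eqref{eq: property3}. The main obstacle in carrying out this plan carefully is the uniform bound on the backtracking parameters $\{\overline{\theta}^{k+1}_l,\theta^{k+1}_l,\overline{\tau}^{k+1}_l,\tau^{k+1}_l\}$: one must argue that the while-loops in Algorithms~\ref{algo:overline tau update}--\ref{algo:tau update} and \ref{algo:theta update}--\ref{algo:tau update} terminate at values at most a fixed multiple $\overline{\eta}L_\phi$ (resp.\ $\overline{\gamma}L_\phi$) of the local Lipschitz constant $L_\phi$ of the appropriate partial gradient of $\phi$ on the bounded region supplied by Property~\ref{pro:property 1}; everything else is a routine application of Lipschitz estimates and the triangle inequality.
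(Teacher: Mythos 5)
Your proposal follows essentially the same route as the paper's proof: for each block you use the optimality condition of the corresponding subproblem to identify an element of the subdifferential, add and subtract the gradient of $\phi$ at the intermediate (backward) iterate, and bound the resulting difference using the boundedness of the iterates from Property \ref{pro:property 1}. The differences are largely stylistic. Where you invoke a uniform local Lipschitz constant for $\nabla\phi$ on the bounded set, the paper expands each gradient difference explicitly (e.g.\ $\nu(W^{k+1}_l-\overline{W}^{k+1}_l)a^{k+1}_{l-1}(a^{k+1}_{l-1})^T+\cdots$) and applies the triangle and Cauchy--Schwarz inequalities term by term together with the boundedness of $\Vert a^{k+1}_{l-1}\Vert$, $\Vert W^{k+1}_{l}\Vert$, etc.; your abstraction is cleaner but must be applied with care to the $z_l$ blocks ($l<L$), where $\phi$ is not differentiable (it contains $\Vert a_l-f_l(z_l)\Vert^2_2$ with $f_l$ possibly ReLU) --- the paper instead bounds $\nu\,\partial f_l(z^{k+1}_l)\circ(\overline{a}^{k+1}_l-a^{k+1}_l)$ using the boundedness of $\partial f_l$ on the bounded iterate set. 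Your explicit concern about upper-bounding the backtracking parameters $\overline{\theta}^{k+1}_l,\theta^{k+1}_l,\overline{\tau}^{k+1}_l,\tau^{k+1}_l$ is well placed and is actually left implicit in the paper. One small correction: the $z_L$-coordinate of the subgradient does not vanish --- combining the optimality condition of \eqref{eq:update zl} with the dual update it equals $u^{k+1}-u^k$, which is then bounded by $H\Vert z^{k+1}_L-z^k_L\Vert$ exactly as you do for the dual coordinate, so the final estimate \eqref{eq: property3} is unaffected.
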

\begin{proof}
We know that 
\begin{align*}
&\partial L_\rho(\textbf{W}^{k+1},\textbf{b}^{k+1},\textbf{z}^{k+1},\textbf{a}^{k+1},u^{k+1})\\&=(\partial_{\textbf{W}^{k+1}} L_\rho,\nabla_{\textbf{b}^{k+1}} L_\rho,\partial_{\textbf{z}^{k+1}} L_\rho,\nabla_{\textbf{a}^{k+1}}L_\rho,\nabla_{u^{k+1}} L_\rho)
\end{align*}
where $\partial_{\textbf{W}^{k+1}} L_\rho=\{\partial_{{W_l}^{k+1}} L_\rho\}_{l=1}^L$, $\nabla_{\textbf{b}^{k+1}} L_\rho=\{\nabla_{{b_l}^{k+1}}L_\rho\}_{l=1}^L$, $\partial_{\textbf{z}^{k+1}} L_\rho=\{\partial_{{z_l}^{k+1}} L_\rho\}_{l=1}^L$, and $\nabla_{\textbf{a}^{k+1}} L_\rho=\{\nabla_{{a_l}^{k+1}} L_\rho\}_{l=1}^{L-1}$ . To prove Property \ref{pro:property 3}, we need to give an upper bound of $\partial_{\textbf{W}^{k+1}} L_\rho,\nabla_{\textbf{b}^{k+1}} L_\rho,\partial_{\textbf{z}^{k+1}} L_\rho,\nabla_{\textbf{a}^{k+1}}L_\rho$ and $\nabla_{u^{k+1}} L_\rho$ by a linear combination of $\Vert \textbf{W}^{k+1}-\overline{\textbf{W}}^{k+1}\Vert$,  $\Vert\textbf{b}^{k+1}-\overline{\textbf{b}}^{k+1}\Vert$, $\Vert\textbf{z}^{k+1}-\overline{\textbf{z}}^{k+1}\Vert$, $\Vert\textbf{a}^{k+1}-\overline{\textbf{a}}^{k+1}\Vert$ and $\Vert \textbf{z}_l^{k+1}-\textbf{z}_l^{k}\Vert$.\\
\indent For $W^{k+1}_l(l<L)$,
\begin{align*}
    \partial_{W^{k+1}_l} L_\rho&=\partial\Omega_l(W^{k+1}_l)+\nabla _{W_l^{k+1}}\phi(\textbf{W}^{k+1},\textbf{b}^{k+1},\textbf{z}^{k+1},\textbf{a}^{k+1},u^{k+1})\\&=\partial\Omega_l(W^{k+1}_l)+\nu(W^{k+1}_la^{k+1}_{l-1}+b^{k+1}_l-z^{k+1}_l)(a^{k+1}_{l-1})^T\\&=\partial\Omega_l(W^{k+1}_l)+\nabla _{W_l^{k+1}}\phi(\textbf{W}^{k+1}_l,\textbf{b}^{k+1}_{l},\textbf{z}^{k+1}_{l},\textbf{a}^{k+1}_{l-1},u^{k})\\&=\partial\Omega_l(W^{k+1}_l)+\nabla _{\overline{W}_l^{k+1}}\phi(\textbf{W}^{k+1}_{l-1},\textbf{b}^{k+1}_{l-1},\textbf{z}^{k+1}_{l-1},\textbf{a}^{k+1}_{l-1},u^{k})+\theta^{k+1}_l\circ(W^{k+1}_l-\overline{W}^{k+1}_l)- \theta^{k+1}_l\circ(W^{k+1}_l-\overline{W}^{k+1}_l)\\&-\nabla _{\overline{W}_l^{k+1}}\phi(\textbf{W}^{k+1}_{l-1},\textbf{b}^{k+1}_{l-1},\textbf{z}^{k+1}_{l-1},\textbf{a}^{k+1}_{l-1},u^{k})+\nabla_{W_l^{k+1}}\phi(\textbf{W}^{k+1}_l,\textbf{b}^{k+1}_{l},\textbf{z}^{k+1}_{l},\textbf{a}^{k+1}_{l-1},u^{k})
    \\&=\partial\Omega_l(W^{k+1}_l)+\nabla_{\overline{W}_l^{k+1}}\phi(\textbf{W}^{k+1}_{l-1},\textbf{b}^{k+1}_{l-1},\textbf{z}^{k+1}_{l-1},\textbf{a}^{k+1}_{l-1},u^{k})+\theta^{k+1}_l\circ(W^{k+1}_l-\overline{W}^{k+1}_l)- \theta^{k+1}_l\circ(W^{k+1}_l-\overline{W}^{k+1}_l)\\&+\nu(W^{k+1}_la^{k+1}_{l-1}+b^{k+1}_l-z^{k+1}_l)(a^{k+1}_{l-1})^T-\nu(\overline{W}^{k+1}_{l}a^{k+1}_{l-1}+\overline{b}^{k+1}_l-\overline{z}^{k+1}_l)(a^{k+1}_{l-1})^T
\end{align*}
Because
\begin{align*}
&\Vert - \theta^{k+1}_l\circ(W^{k+1}_l-\overline{W}^{k+1}_l)+\nu(W^{k+1}_la^{k+1}_{l-1}+b^{k+1}_l-z^{k+1}_l)(a^{k+1}_{l-1})^T-\nu(\overline{W}^{k+1}_{l}a^{k+1}_{l-1}+\overline{b}^{k+1}_l-\overline{z}^{k+1}_l)(a^{k+1}_{l-1})^T\Vert\\&=\Vert - \theta^{k+1}_l\circ(W^{k+1}_l-\overline{W}^{k+1}_l)+\nu(W^{k+1}_l-\overline{W}^{k+1}_l)a^{k+1}_{l-1}(a^{k+1}_{l-1})^T+\nu(b^{k+1}_l-\overline{b}^{k+1}_l)(a^{k+1}_{l-1})^T-\nu(z^{k+1}_l-\overline{z}^{k+1}_l)(a^{k+1}_{l-1})^T\Vert\\&\leq\Vert \theta^{k+1}_l\circ(W^{k+1}_l-\overline{W}^{k+1}_l)\Vert+\nu\Vert(W^{k+1}_l-\overline{W}^{k+1}_l)a^{k+1}_{l-1}(a^{k+1}_{l-1})^T\Vert+\nu\Vert(b^{k+1}_l-\overline{b}^{k+1}_l)(a^{k+1}_{l-1})^T\Vert+\nu\Vert(z^{k+1}_l-\overline{z}^{k+1}_l)(a^{k+1}_{l-1})^T\Vert\\&\text{(triangle inequality)}\\&\leq\Vert \theta^{k+1}_l\circ (W^{k+1}_l-\overline{W}^{k+1}_l)\Vert+\nu\Vert W^{k+1}_l-\overline{W}^{k+1}_l\Vert\Vert a^{k+1}_{l-1}\Vert\Vert a^{k+1}_{l-1}\Vert+\nu\Vert b^{k+1}_l-\overline{b}^{k+1}_l\Vert\Vert a^{k+1}_{l-1}\Vert+\nu\Vert z^{k+1}_l-\overline{z}^{k+1}_l\Vert\Vert a^{k+1}_{l-1}\Vert\\& \text{(Cauchy-Schwarz inequality)}
\end{align*}
and the optimality condition of Equation \eqref{eq:update W} yields
\begin{align*}
    0&\in\partial\Omega_l(W^{k+1}_l)+\nabla_{\overline{W}_l^{k+1}}\phi(\textbf{W}^{k+1}_{l-1},\textbf{b}^{k+1}_{l-1},\textbf{z}^{k+1}_{l-1},\textbf{a}^{k+1}_{l-1},u^{k})+\theta^{k+1}_l\circ(W^{k+1}_l-\overline{W}^{k+1}_l)
\end{align*}
Because  $a^{k+1}_{l-1}$ is bounded by Property \ref{pro:property 1}, $\Vert\partial _{W_l^{k+1} }L_\rho\Vert$ can be upper bounded by a linear combination of  $\Vert W^{k+1}_l-\overline{W}^{k+1}_l\Vert$, $\Vert b^{k+1}_l-\overline{b}^{k+1}_l\Vert$ and $\Vert z^{k+1}_l-\overline{z}^{k+1}_l\Vert$.\\
\indent For $W^{k+1}_L$,
\begin{align*}
    \partial_{W^{k+1}_L} L_\rho&=\partial\Omega_L(W^{k+1}_L)+\nabla_{W_L^{k+1}}\phi(\textbf{W}^{k+1},\textbf{b}^{k+1},\textbf{z}^{k+1},\textbf{a}^{k+1},u^{k+1})\\&=\partial\Omega_L(W^{k+1}_L)+\nabla _{\overline{W}_L^{k+1}}\phi(\textbf{W}^{k+1}_{L-1},\textbf{b}^{k+1}_{L-1},\textbf{z}^{k+1}_{L-1},\textbf{a}^{k+1},u^{k})+\theta^{k+1}_L\circ(W^{k+1}_L-\overline{W}^{k+1}_L)- \theta^{k+1}_L\circ(W^{k+1}_L-\overline{W}^{k+1}_L)\\&-\nabla _{\overline{W}_L^{k+1}}\phi(\textbf{W}^{k+1}_{L-1},\textbf{b}^{k+1}_{L-1},\textbf{z}^{k+1}_{L-1},\textbf{a}^{k+1},u^{k})+\nabla_{W_L^{k+1}}\phi(\textbf{W}^{k+1},\textbf{b}^{k+1},\textbf{z}^{k+1},\textbf{a}^{k+1},u^{k+1})
    \\&=\partial\Omega_L(W^{k+1}_L)+\nabla_{W_L^{k+1}}\phi(\textbf{W}^{k+1}_{L-1},\textbf{b}^{k+1}_{L-1},\textbf{z}^{k+1}_{L-1},\textbf{a}^{k+1},u^{k})+\theta^{k+1}_L\circ(W^{k+1}_L-\overline{W}^{k+1}_L)- \theta^{k+1}_L\circ(W^{k+1}_L-\overline{W}^{k+1}_L)\\&+\rho(W^{k+1}_La^{k+1}_{L-1}+b^{k+1}_L-z^{k+1}_L-u^{k+1}/\rho)(a^{k+1}_{L-1})^T-\rho(\overline{W}^{k+1}_{L}a^{k+1}_{L-1}+\overline{b}^{k+1}_L-\overline{z}^{k+1}_L-u^k/\rho)(a^{k+1}_{L-1})^T
\end{align*}
Because
\begin{align*}
&\Vert - \theta^{k+1}_L\circ(W^{k+1}_L-\overline{W}^{k+1}_L)+\rho(W^{k+1}_La^{k+1}_{L-1}+b^{k+1}_L-z^{k+1}_L-u^{k+1}/\rho)(a^{k+1}_{L-1})^T-\rho(\overline{W}^{k+1}_{L}a^{k+1}_{L-1}+\overline{b}^{k+1}_L-\overline{z}^{k+1}_L-u^k/\rho)(a^{k+1}_{L-1})^T\Vert\\&=\Vert - \theta^{k+1}_L\circ(W^{k+1}_L-\overline{W}^{k+1}_L)+\rho(W^{k+1}_L-\overline{W}^{k+1}_L)a^{k+1}_{L-1}(a^{k+1}_{L-1})^T+\rho(b^{k+1}_L-\overline{b}^{k+1}_L)(a^{k+1}_{L-1})^T-\rho(z^{k+1}_L-\overline{z}^{k+1}_L)(a^{k+1}_{L-1})^T-(u^{k+1}-{u}^{k})(a^{k+1}_{L-1})^T\Vert\\&\leq\Vert \theta^{k+1}_L\circ(W^{k+1}_L-\overline{W}^{k+1}_L)\Vert+\rho\Vert(W^{k+1}_L-\overline{W}^{k+1}_L)a^{k+1}_{L-1}(a^{k+1}_{L-1})^T\Vert+\rho\Vert(b^{k+1}_L-\overline{b}^{k+1}_L)(a^{k+1}_{L-1})^T\Vert+\rho\Vert(z^{k+1}_L-\overline{z}^{k+1}_L)(a^{k+1}_{L-1})^T\Vert\\&+\Vert(u^{k+1}-{u}^{k})(a^{k+1}_{L-1})^T\Vert\text{(triangle inequality)}\\&\leq\Vert \theta^{k+1}_L\circ (W^{k+1}_L-\overline{W}^{k+1}_L)\Vert+\rho\Vert W^{k+1}_L-\overline{W}^{k+1}_L\Vert\Vert a^{k+1}_{L-1}\Vert\Vert a^{k+1}_{L-1}\Vert+\rho\Vert b^{k+1}_L-\overline{b}^{k+1}_L\Vert\Vert a^{k+1}_{L-1}\Vert+\rho\Vert z^{k+1}_L-\overline{z}^{k+1}_L\Vert\Vert a^{k+1}_{L-1}\Vert+H\Vert z_L^{k+1}-z_L^k\Vert \Vert a^{k+1}_{L-1}\Vert\\& \text{(Cauchy-Schwarz inequality, Lemma \ref{lemma:z_l optimality}, $R(z_L;y)$ is Lipschitz differentiable)}
\end{align*}
and the optimality condition of Equation \eqref{eq:update W} yields
\begin{align*}
    0&\in\partial\Omega_L(W^{k+1}_L)+\nabla \phi_{\overline{W}_L^{k+1}}(\textbf{W}^{k+1}_{L-1},\textbf{b}^{k+1}_{L-1},\textbf{z}^{k+1}_{L-1},\textbf{a}^{k+1},u^{k})+\theta^{k+1}_L\circ(W^{k+1}_L-\overline{W}^{k+1}_L)
\end{align*}
Because  $a^{k+1}_{L-1}$ is bounded by Property \ref{pro:property 1}, $\Vert\partial _{W_L^{k+1} }L_\rho\Vert$ can be upper bounded by a linear combination of  $\Vert W^{k+1}_L-\overline{W}^{k+1}_L\Vert$, $\Vert b^{k+1}_L-\overline{b}^{k+1}_L\Vert$, $\Vert z^{k+1}_L-\overline{z}^{k+1}_L\Vert$ and $\Vert z^{k+1}_L-{z}^{k}_L\Vert$.\\
\indent For $b^{k+1}_l(l<L)$,
\begin{align*}
    \nabla _{b^{k+1}_l}L_\rho&=\nabla _{b^{k+1}_l}\phi(\textbf{W}^{k+1},\textbf{b}^{k+1},\textbf{z}^{k+1},\textbf{a}^{k+1},u^{k+1})\\&=\nu(W^{k+1}_{l}a^{k+1}_{l-1}+b^{k+1}_l-z^{k+1}_l)\\&=\nabla_{b^{k+1}_l}\phi(\textbf{W}^{k+1}_l,\textbf{b}^{k+1}_{l},\textbf{z}^{k+1}_{l},\textbf{a}^{k+1}_{l},u^k)\\&=\nabla_{\overline{b}^{k+1}_l}\phi({\textbf{W}}^{k+1}_{l},{\textbf{b}}^{k+1}_{l-1},{\textbf{z}}^{k+1}_{l-1},{\textbf{a}}^{k+1}_{l-1},u^k)+\nu(b^{k+1}_l-\overline{b}^{k+1}_l)-\nu(b^{k+1}_l-\overline{b}^{k+1}_l)-\nabla_{\overline{b}^{k+1}_l}\phi({\textbf{W}}^{k+1}_{l},{\textbf{b}}^{k+1}_{l-1},{\textbf{z}}^{k+1}_{l-1},{\textbf{a}}^{k+1}_{l-1},u^k)\\&+\nabla_{b^{k+1}_l}\phi(\textbf{W}^{k+1}_l,\textbf{b}^{k+1}_{l},\textbf{z}^{k+1}_{l},\textbf{a}^{k+1}_{l},u^k)\\&=\nabla_{\overline{b}^{k+1}_l}\phi({\textbf{W}}^{k+1}_{l},{\textbf{b}}^{k+1}_{l-1},{\textbf{z}}^{k+1}_{l-1},{\textbf{a}}^{k+1}_{l-1},u^k)+\nu(b^{k+1}_l-\overline{b}^{k+1}_l)-\nu(b^{k+1}_l-\overline{b}^{k+1}_l)-\nu({W}^{k+1}_{l}a^{k+1}_{l-1}+\overline{b}^{k+1}_l-\overline{z}^{k+1}_l)\\&+\nu(W^{k+1}_la^{k+1}_{l-1}+b^{k+1}_l-z^{k+1}_l)\\&
    =\nabla_{\overline{b}^{k+1}_l}\phi({\textbf{W}}^{k+1}_{l},{\textbf{b}}^{k+1}_{l-1},{\textbf{z}}^{k+1}_{l-1},{\textbf{a}}^{k+1}_{l-1},u^k)+\nu(b^{k+1}_l-\overline{b}^{k+1}_l)+\nu(\overline{z}^{k+1}_l-z^{k+1}_l)
    \end{align*}
    The optimality condition of Equation \eqref{eq:update b} yields
    \begin{align*}
        0\in \nabla_{\overline{b}^{k+1}_l}\phi({\textbf{W}}^{k+1}_{l},{\textbf{b}}^{k+1}_{l-1},{\textbf{z}}^{k+1}_{l-1},{\textbf{a}}^{k+1}_{l-1},u^k)+\nu(b^{k+1}_l-\overline{b}^{k+1}_l)
    \end{align*}
    Therefore, $\Vert\nabla_{b^{k+1}_l} L_\rho\Vert$ is linearly independent on $\Vert z^{k+1}_l-\overline{z}^{k+1}_l\Vert$. \\
    \indent For $b^{k+1}_{L}$,
    \begin{align*}
    \nabla _{b^{k+1}_L}L_\rho&=\nabla_{b^{k+1}_L}\phi(\textbf{W}^{k+1},\textbf{b}^{k+1},\textbf{z}^{k+1},\textbf{a}^{k+1},u^{k+1})\\&=\nabla_{\overline{b}^{k+1}_L}\phi({\textbf{W}}^{k+1},{\textbf{b}}^{k+1}_{L-1},{\textbf{z}}^{k+1}_{L-1},{\textbf{a}}^{k+1},u^k)+\rho(b^{k+1}_L-\overline{b}^{k+1}_L)-\rho(b^{k+1}_L-\overline{b}^{k+1}_L)-\nabla_{\overline{b}^{k+1}_L}\phi({\textbf{W}}^{k+1},{\textbf{b}}^{k+1}_{L-1},{\textbf{z}}^{k+1}_{L-1},{\textbf{a}}^{k+1},u^k)\\&+\nabla_{b^{k+1}_L}\phi(\textbf{W}^{k+1},\textbf{b}^{k+1},\textbf{z}^{k+1},\textbf{a}^{k+1},u^{k+1})\\&=\nabla_{\overline{b}^{k+1}_L}\phi({\textbf{W}}^{k+1},{\textbf{b}}^{k+1}_{L-1},{\textbf{z}}^{k+1}_{L-1},{\textbf{a}}^{k+1},u^k)+\rho(b^{k+1}_L-\overline{b}^{k+1}_L)-\rho(b^{k+1}_L-\overline{b}^{k+1}_L)-\rho(W^{k+1}_La^{k+1}_L+\overline{b}^{k+1}_L-\overline{z}^{k+1}_L-u^k/\rho)\\&+\rho(W^{k+1}_La^{k+1}_L+{b}^{k+1}_L-{z}^{k+1}_L-u^{k+1}/\rho)\\&=\nabla_{\overline{b}^{k+1}_L}\phi({\textbf{W}}^{k+1},{\textbf{b}}^{k+1}_{L-1},{\textbf{z}}^{k+1}_{L-1},{\textbf{a}}^{k+1},u^k)+\rho(b^{k+1}_L-\overline{b}^{k+1}_L)+\rho(\overline{z}^{k+1}_L-z^{k+1}_L)+u^k-u^{k+1}
    \end{align*}
    Because
    \begin{align*}
        &\Vert\rho(\overline{z}^{k+1}_L-z^{k+1}_L)+u^k-u^{k+1}\Vert\\&\leq \rho\Vert\overline{z}^{k+1}_L-z^{k+1}_L\Vert+\Vert u^k-u^{k+1}\Vert\text{(triangle inequality)}\\&=\rho\Vert\overline{z}^{k+1}_L-z^{k+1}_L\Vert+\Vert \nabla_{z_L^{k}}R(z_L^{k};y)-\nabla_{z_L^{k+1}}R(z_L^{k+1};y)\Vert\text{(Lemma \ref{lemma:z_l optimality})}\\&\leq \rho \Vert\overline{z}^{k+1}_L-z^{k+1}_L\Vert+H\Vert z_L^{k}-z_L^{k+1}\Vert\text{($R(z_L;y)$ is Lipschitz differentiable)}
    \end{align*}
    and the optimality condition of Equation \eqref{eq:update bl} yields
    \begin{align*}
        0\in \nabla_{\overline{b}^{k+1}_L}\phi({\textbf{W}}^{k+1},{\textbf{b}}^{k+1}_{L-1},{\textbf{z}}^{k+1}_{L-1},{\textbf{a}}^{k+1},u^k)+\rho(b^{k+1}_L-\overline{b}^{k+1}_L)
    \end{align*}
    Therefore, $\Vert\nabla_{b^{k+1}_L} L_\rho\Vert$ is upper bounded by a combination of $\Vert z^{k+1}_L-\overline{z}^{k+1}_L\Vert$ and $\Vert z^{k+1}_L-{z}^{k}_L\Vert$. \\
    \indent For $z^{k+1}_l(l< L)$,
    \begin{align*}
        \partial_{z^{k+1}_l} L_\rho&=\partial_{z^{k+1}_l}\phi(\textbf{W}^{k+1},\textbf{b}^{k+1},\textbf{z}^{k+1},\textbf{a}^{k+1},u^{k+1})\\&=\nu(z^{k+1}_l-W^{k+1}_la^{k+1}_{l-1}-b^{k+1}_l)+\nu \partial f_l(z^{k+1}_l)\circ(f(z^{k+1}_l)-a^{k+1}_l)\\&=\partial_{z^{k+1}_l}\phi(\textbf{W}_l^{k+1},\textbf{b}_l^{k+1},\textbf{z}_l^{k+1},\textbf{a}_l^{k+1},u^{k})\\&=\partial_{z^{k+1}_l}\phi(\textbf{W}_l^{k+1},\textbf{b}_l^{k+1},\textbf{z}_l^{k+1},\textbf{a}_l^{k+1},u^{k})-\partial_{z^{k+1}_l}\phi(\textbf{W}_l^{k+1},\textbf{b}_l^{k+1},\textbf{z}_l^{k+1},\textbf{a}_{l-1}^{k+1},u^{k})+\partial_{z^{k+1}_l}\phi(\textbf{W}_l^{k+1},\textbf{b}_l^{k+1},\textbf{z}_l^{k+1},\textbf{a}_{l-1}^{k+1},u^{k})\\&=\nu\partial f_l(z^{k+1}_l)\circ(\overline{a}_l^{k+1}-a^{k+1}_l)+\partial_{z^{k+1}_l}\phi(\textbf{W}_l^{k+1},\textbf{b}_l^{k+1},\textbf{z}_l^{k+1},\textbf{a}_{l-1}^{k+1},u^{k})
    \end{align*}
    because $z^{k+1}_l$ is bounded and $f_l(z_l)$ is continuous and hence $f_l(z^{k+1}_l)$ is bounded, and the optimality condition of Equation \eqref{eq:update z} yields
    \begin{align*}
        0\in \partial_{z^{k+1}_l}\phi(\textbf{W}_l^{k+1},\textbf{b}_l^{k+1},\textbf{z}_l^{k+1},\textbf{a}_{l-1}^{k+1},u^{k})
    \end{align*}
    \indent Therefore, $\Vert \partial_{z^{k+1}_l}L_\rho\Vert$ is upper bounded by $\Vert a^{k+1}_l-\overline{a}^{k+1}_l\Vert$.\\
    \indent For $z^{k+1}_L$,
    \begin{align*}
        \nabla_{z^{k+1}_L}L_\rho&=\nabla_{z^{k+1}_L} R(z^{k+1}_L;y)+\nabla_{z^{k+1}_L}\phi(\textbf{W}^{k+1},\textbf{b}^{k+1},\textbf{z}^{k+1},\textbf{a}^{k+1},u^{k+1})\\&=\nabla_{z^{k+1}_L} R(z^{k+1}_L;y)+\nabla_{z^{k+1}_L}\phi(\textbf{W}^{k+1},\textbf{b}^{k+1},\textbf{z}^{k+1},\textbf{a}^{k+1},u^{k})-\nabla_{z^{k+1}_L}\phi(\textbf{W}^{k+1},\textbf{b}^{k+1},\textbf{z}^{k+1},\textbf{a}^{k+1},u^{k})\\&+\nabla_{z^{k+1}_L}\phi(\textbf{W}^{k+1},\textbf{b}^{k+1},\textbf{z}^{k+1},\textbf{a}^{k+1},u^{k+1})\\&=\nabla_{z^{k+1}_L} R(z^{k+1}_L;y)+\nabla_{z^{k+1}_L}\phi(\textbf{W}^{k+1},\textbf{b}^{k+1},\textbf{z}^{k+1},\textbf{a}^{k+1},u^{k})+u^{k+1}-u^k
    \end{align*}
    The optimality condition of Equation \eqref{eq:update zl} yields
    \begin{align*}
        0\in\nabla_{z^{k+1}_L} R(z^{k+1}_L;y)+\nabla_{z^{k+1}_L}\phi(\textbf{W}^{k+1},\textbf{b}^{k+1},\textbf{z}^{k+1},\textbf{a}^{k+1},u^{k})
    \end{align*}
     and
     \begin{align*}
             &\Vert u^{k+1}-u^k\Vert= \Vert \nabla_{z^{k+1}_L}R(z^{k+1}_L;y)-\nabla_{z^{k}_L}R(z^{k}_L;y)\Vert \ \text{(Lemma \ref{lemma:z_l optimality})}\leq H \Vert z^{k+1}_L-z^k_L\Vert\\&\text{(Cauchy-Schwarz inequality, $R(z_L;y)$ is Lipschitz differentiable)}
     \end{align*}
 Therefore $\Vert\nabla_{z^{k+1}_L}L_\rho\Vert$ is upper bounded by $\Vert z^{k+1}_L-z^{k}_L\Vert$.\\
\indent For $a^{k+1}_l(l<L-1)$,
\begin{align*}
    &\nabla_{a^{k+1}_l}\phi(\textbf{W}^{k+1},\textbf{b}^{k+1},\textbf{z}^{k+1},\textbf{a}^{k+1},u^{k+1})\\&=\nu (W^{k+1}_{l+1})^T(W^{k+1}_{l+1}a^{k+1}_l+b^{k+1}_{l+1}-z^{k+1}_{l+1})+\nu(a^{k+1}_l-f_l(z^{k+1}_l))\\&=\nabla_{a^{k+1}_l}\phi(\textbf{W}_{l+1}^{k+1},\textbf{b}^{k+1}_{l+1},\textbf{z}^{k+1}_{l+1},\textbf{a}^{k+1}_{l},u^{k})\\&=\tau^{k+1}_l\circ(a^{k+1}_l-\overline{a}^{k+1}_l)+\nabla_{\overline{a}^{k+1}_l}\phi(\textbf{W}_{l}^{k+1},\textbf{b}^{k+1}_{l},\textbf{z}^{k+1}_{l},\textbf{a}^{k+1}_{l-1},u^{k})-\tau^{k+1}_l\circ(a^{k+1}_l-\overline{a}^{k+1}_l)-\nabla_{\overline{a}^{k+1}_l}\phi(\textbf{W}_{l}^{k+1},\textbf{b}^{k+1}_{l},\textbf{z}^{k+1}_{l},\textbf{a}^{k+1}_{l-1},u^{k})\\&+\nabla_{a^{k+1}_l}\phi(\textbf{W}_{l+1}^{k+1},\textbf{b}^{k+1}_{l+1},\textbf{z}^{k+1}_{l+1},\textbf{a}^{k+1}_{l},u^{k})\\&=\tau^{k+1}_l\circ(a^{k+1}_l-\overline{a}^{k+1}_l)+\nabla_{\overline{a}^{k+1}_l}\phi(\textbf{W}_{l}^{k+1},\textbf{b}^{k+1}_{l},\textbf{z}^{k+1}_{l},\textbf{a}^{k+1}_{l-1},u^{k})-\tau^{k+1}_l\circ(a^{k+1}_l-\overline{a}^{k+1}_l)-\nu (\overline{W}^{k+1}_{l+1})^T(\overline{W}^{k+1}_{l+1}\overline{a}^{k+1}_l+\overline{b}^{k+1}_{l+1}-\overline{z}^{k+1}_{l+1})\\&-\nu(\overline{a}^{k+1}_l-f_l(z^{k+1}_l))+\nu (W^{k+1}_{l+1})^T(W^{k+1}_{l+1}a^{k+1}_l+b^{k+1}_{l+1}-z^{k+1}_{l+1})+\nu(a^{k+1}_l-f_l(z^{k+1}_l))
\end{align*}
Because
\begin{align*}
    &\Vert-\tau^{k+1}_l\circ(a^{k+1}_l-\overline{a}^{k+1}_l)-\nu (\overline{W}^{k+1}_{l+1})^T(\overline{W}^{k+1}_{l+1}\overline{a}^{k+1}_l+\overline{b}^{k+1}_{l+1}-\overline{z}^{k+1}_{l+1})-\nu(\overline{a}^{k+1}_l-f_l(z^{k+1}_l))\\&+\nu (W^{k+1}_{l+1})^T(W^{k+1}_{l+1}a^{k+1}_l+b^{k+1}_{l+1}-z^{k+1}_{l+1})+\nu(a^{k+1}_l-f_l(z^{k+1}_l))\Vert\\&\leq \Vert\tau^{k+1}_l\circ(a^{k+1}_l-\overline{a}^{k+1}_l)\Vert+\nu\Vert a^{k+1}_l-\overline{a}^{k+1}_l\Vert+\nu\Vert (W^{k+1}_{l+1})^Tb^{k+1}_{l+1}-(\overline{W}^{k+1}_{l+1})^T\overline{b}^{k+1}_{l+1} \Vert\\&+\nu\Vert (W^{k+1}_{l+1})^Tz^{k+1}_{l+1}-(\overline{W}^{k+1}_{l+1})^T\overline{z}^{k+1}_{l+1}\Vert +\nu\Vert (W^{k+1}_{l+1})^TW^{k+1}_{l+1}a^{k+1}_l-(\overline{W}^{k+1}_{l+1})^T\overline{W}^{k+1}_{l+1}\overline{a}^{k+1}_l\Vert\text{(triangle inequality)}
\end{align*}
Then we need to show that $\nu\Vert (W^{k+1}_{l+1})^Tb^{k+1}_{l+1}-(\overline{W}^{k+1}_{l+1})^T\overline{b}^{k+1}_{l+1} \Vert$, $\nu\Vert (W^{k+1}_{l+1})^Tz^{k+1}_{l+1}-(\overline{W}^{k+1}_{l+1})^T\overline{z}^{k+1}_{l+1} \Vert$, and $\nu\Vert (W^{k+1}_{l+1})^TW^{k+1}_{l+1}a^{k+1}_l-(\overline{W}^{k+1}_{l+1})^T\overline{W}^{k+1}_{l+1}\overline{a}^{k+1}_l\Vert$ are upper bounded by $\Vert \textbf{W}^{k+1}-\overline{\textbf{W}}^{k+1}\Vert$,  $\Vert\textbf{b}^{k+1}-\overline{\textbf{b}}^{k+1}\Vert$, $\Vert\textbf{z}^{k+1}-\overline{\textbf{z}}^{k+1}\Vert$, and $\Vert\textbf{a}^{k+1}-\overline{\textbf{a}}^{k+1}\Vert$.
\begin{align*}
    &\nu\Vert (W^{k+1}_{l+1})^Tb^{k+1}_{l+1}-(\overline{W}^{k+1}_{l+1})^T\overline{b}^{k+1}_{l+1} \Vert\\&=\nu\Vert (W^{k+1}_{l+1})^Tb^{k+1}_{l+1}-(\overline{W}^{k+1}_{l+1})^Tb^{k+1}_{l+1}+(\overline{W}^{k+1}_{l+1})^Tb^{k+1}_{l+1}-(\overline{W}^{k+1}_{l+1})^T\overline{b}^{k+1}_{l+1} \Vert\\&\leq \nu\Vert b^{k+1}_{l+1}\Vert\Vert W^{k+1}_{l+1}-\overline{W}^{k+1}_{l+1}\Vert+\nu\Vert \overline{W}^{k+1}_{l+1}\Vert\Vert b^{k+1}_{l+1}-\overline{b}^{k+1}_{l+1}\Vert\\&\text{(triangle inequality, Cauchy-Schwarz inequality)}
\end{align*}
Because $\Vert b^{k+1}_{l+1}\Vert$ and $\Vert\overline{W}^{k+1}_{l+1}\Vert$ are upper bounded, $\nu\Vert (W^{k+1}_{l+1})^Tb^{k+1}_{l+1}-(\overline{W}^{k+1}_{l+1})^T\overline{b}^{k+1}_{l+1} \Vert$ is therefore upper bounded by a combination of $\Vert W^{k+1}_{l+1}-\overline{W}^{k+1}_{l+1}\Vert$ and $\Vert b^{k+1}_{l+1}-\overline{b}^{k+1}_{l+1}\Vert$.\\
Similarly, $\nu\Vert (W^{k+1}_{l+1})^Tz^{k+1}_{l+1}-(\overline{W}^{k+1}_{l+1})^T\overline{z}^{k+1}_{l+1} \Vert$ is  upper bounded by a combination of $\Vert W^{k+1}_{l+1}-\overline{W}^{k+1}_{l+1}\Vert$ and $\Vert z^{k+1}_{l+1}-\overline{z}^{k+1}_{l+1}\Vert$.\\
\begin{align*}
    &\nu\Vert (W^{k+1}_{l+1})^TW^{k+1}_{l+1}a^{k+1}_l-(\overline{W}^{k+1}_{l+1})^T\overline{W}^{k+1}_{l+1}\overline{a}^{k+1}_l\Vert\\&=\nu\Vert(W^{k+1}_{l+1})^TW^{k+1}_{l+1}a^{k+1}_l-(W^{k+1}_{l+1})^TW^{k+1}_{l+1}\overline{a}^{k+1}_l+(W^{k+1}_{l+1})^TW^{k+1}_{l+1}\overline{a}^{k+1}_l-(W^{k+1}_{l+1})^T\overline{W}^{k+1}_{l+1}\overline{a}^{k+1}_l+(W^{k+1}_{l+1})^T\overline{W}^{k+1}_{l+1}\overline{a}^{k+1}_l-(\overline{W}^{k+1}_{l+1})^T\overline{W}^{k+1}_{l+1}\overline{a}^{k+1}_l\Vert\\&\leq \nu\Vert(W^{k+1}_{l+1})^TW^{k+1}_{l+1}(a^{k+1}_l-\overline{a}^{k+1}_l)\Vert+\nu\Vert (W^{k+1}_{l+1})^T(W^{k+1}_{l+1}-\overline{W}^{k+1}_{l+1})\overline{a}^{k+1}_l\Vert+\nu\Vert(W^{k+1}_{l+1}-\overline{W}^{k+1}_{l+1})^T\overline{W}^{k+1}_{l+1}\overline{a}^{k+1}_l\Vert\text{(triangle inequality)}\\&\leq \nu\Vert W^{k+1}_{l+1}\Vert\Vert W^{k+1}_{l+1}\Vert\Vert a^{k+1}_l-\overline{a}^{k+1}_l\Vert+\nu\Vert W^{k+1}_{l+1}\Vert\Vert W^{k+1}_{l+1}-\overline{W}^{k+1}_{l+1}\Vert\Vert\overline{a}^{k+1}_l\Vert+\nu\Vert W^{k+1}_{l+1}-\overline{W}^{k+1}_{l+1}\Vert\Vert\overline{W}^{k+1}_{l+1}\Vert\Vert\overline{a}^{k+1}_l\Vert\text{(Cauchy-Schwarz inequality)}
\end{align*}
Because $\Vert W^{k+1}_{l+1}\Vert$, $\Vert \overline{W}^{k+1}_{l+1}\Vert$  and $\Vert \overline{a}^{k+1}_l\Vert$ are upper bounded, $\nu\Vert (W^{k+1}_{l+1})^TW^{k+1}_{l+1}a^{k+1}_l-(\overline{W}^{k+1}_{l+1})^T\overline{W}^{k+1}_{l+1}\overline{a}^{k+1}_l\Vert$ is therefore upper bounded by a combination of $\Vert W^{k+1}_{l+1}-\overline{W}^{k+1}_{l+1}\Vert$ and $\Vert a^{k+1}_{l+1}-\overline{a}^{k+1}_{l+1}\Vert$.\\
\indent For $a^{k+1}_{L-1}$, 
\begin{align*}
    &\nabla_{a^{k+1}_{L-1}}\phi(\textbf{W}^{k+1},\textbf{b}^{k+1},\textbf{z}^{k+1},\textbf{a}^{k+1},u^{k+1})\\&=\tau^{k+1}_{L-1}\circ(a^{k+1}_{L-1}-\overline{a}^{k+1}_{L-1})+\nabla_{\overline{a}^{k+1}_{L-1}}\phi(\textbf{W}_{L-1}^{k+1},\textbf{b}^{k+1}_{L-1},\textbf{z}^{k+1}_{L-1},\textbf{a}^{k+1}_{L-2},u^{k})-\tau^{k+1}_{L-1}\circ(a^{k+1}_{L-1}-\overline{a}^{k+1}_{L-1})-\nabla_{\overline{a}^{k+1}_{L-1}}\phi(\textbf{W}_{L-1}^{k+1},\textbf{b}^{k+1}_{L-1},\textbf{z}^{k+1}_{L-1},\textbf{a}^{k+1}_{L-2},u^{k})\\&+\nabla_{a^{k+1}_{L-1}}\phi(\textbf{W}^{k+1},\textbf{b}^{k+1},\textbf{z}^{k+1},\textbf{a}^{k+1},u^{k+1})\\&=\tau^{k+1}_{L-1}\circ(a^{k+1}_{L-1}-\overline{a}^{k+1}_{L-1})+\nabla_{\overline{a}^{k+1}_{L-1}}\phi(\textbf{W}_{L-1}^{k+1},\textbf{b}^{k+1}_{L-1},\textbf{z}^{k+1}_{L-1},\textbf{a}^{k+1}_{L-2},u^{k})-\tau^{k+1}_{L-1}\circ(a^{k+1}_{L-1}-\overline{a}^{k+1}_{L-1})-\rho (\overline{W}^{k+1}_{L})^T(\overline{W}^{k+1}_{L}\overline{a}^{k+1}_{L-1}+\overline{b}^{k+1}_{L}-\overline{z}^{k+1}_{L}-u^k/\rho)\\&-\nu(\overline{a}^{k+1}_{L-1}-f_{L-1}(z^{k+1}_{L-1}))+\rho (W^{k+1}_{L})^T(W^{k+1}_{L}a^{k+1}_{L-1}+b^{k+1}_{L}-z^{k+1}_{L}-u^{k+1}/\rho)+\nu(a^{k+1}_{L-1}-f_{L-1}(z^{k+1}_{L-1}))
\end{align*}
Because
\begin{align*}
    &\Vert-\tau^{k+1}_{L-1}\circ(a^{k+1}_{L-1}-\overline{a}^{k+1}_{L-1})-\rho (\overline{W}^{k+1}_{L})^T(\overline{W}^{k+1}_{L}\overline{a}^{k+1}_{L-1}+\overline{b}^{k+1}_{L}-\overline{z}^{k+1}_{L}-u^k/\rho)-\nu(\overline{a}^{k+1}_{L-1}-f_{L-1}(z^{k+1}_{L-1}))\\&+\rho (W^{k+1}_{L})^T(W^{k+1}_{L}a^{k+1}_{L-1}+b^{k+1}_{L}-z^{k+1}_{L}-u^{k+1}/\rho)+\nu(a^{k+1}_{L-1}-f_{L-1}(z^{k+1}_{L-1}))\Vert\\&\leq \Vert\tau^{k+1}_{L-1}\circ(a^{k+1}_{L-1}-\overline{a}^{k+1}_{L-1})\Vert+\nu\Vert a^{k+1}_{L-1}-\overline{a}^{k+1}_{L-1}\Vert+\rho\Vert (W^{k+1}_{L})^Tb^{k+1}_{L}-(\overline{W}^{k+1}_{L})^T\overline{b}^{k+1}_{L} \Vert\\&+\rho\Vert (W^{k+1}_{L})^Tz^{k+1}_{L}-(\overline{W}^{k+1}_{L})^T\overline{z}^{k+1}_{L}\Vert+\Vert (W^{k+1}_{L})^Tu^{k+1}-(\overline{W}^{k+1}_{L})^T{u}^{k}\Vert \\&+\rho\Vert (W^{k+1}_{L})^TW^{k+1}_{L}a^{k+1}_{L-1}-(\overline{W}^{k+1}_{L})^T\overline{W}^{k+1}_{L}\overline{a}^{k+1}_{L-1}\Vert\text{(triangle inequality)}
\end{align*}
Then we need to show that $\rho\Vert (W^{k+1}_{L})^Tb^{k+1}_{L}-(\overline{W}^{k+1}_{L})^T\overline{b}^{k+1}_{L} \Vert$, $\rho\Vert (W^{k+1}_{L})^Tz^{k+1}_{L}-(\overline{W}^{k+1}_{L})^T\overline{z}^{k+1}_{L}\Vert$, $\Vert (W^{k+1}_{L})^Tu^{k+1}-(\overline{W}^{k+1}_{L})^T{u}^{k}\Vert$ and $\rho\Vert (W^{k+1}_{L})^TW^{k+1}_{L}a^{k+1}_{L-1}-(\overline{W}^{k+1}_{L})^T\overline{W}^{k+1}_{L}\overline{a}^{k+1}_{L-1}\Vert$  are upper bounded by $\Vert \textbf{W}^{k+1}-\overline{\textbf{W}}^{k+1}\Vert$,  $\Vert\textbf{b}^{k+1}-\overline{\textbf{b}}^{k+1}\Vert$, $\Vert\textbf{z}^{k+1}-\overline{\textbf{z}}^{k+1}\Vert$, $\Vert\textbf{a}^{k+1}-\overline{\textbf{a}}^{k+1}\Vert$ and $\Vert\textbf{z}^{k+1}-\textbf{z}^{k}\Vert$.
\begin{align*}
    &\rho\Vert (W^{k+1}_{L})^Tb^{k+1}_{L}-(\overline{W}^{k+1}_{L})^T\overline{b}^{k+1}_{L} \Vert\\&=\rho\Vert (W^{k+1}_{L})^Tb^{k+1}_{L}-(\overline{W}^{k+1}_{L})^Tb^{k+1}_{L}+(\overline{W}^{k+1}_{L})^Tb^{k+1}_{L}-(\overline{W}^{k+1}_{L})^T\overline{b}^{k+1}_{L} \Vert\\&\leq \rho\Vert b^{k+1}_{L}\Vert\Vert W^{k+1}_{L}-\overline{W}^{k+1}_{L}\Vert+\rho\Vert \overline{W}^{k+1}_{L}\Vert\Vert b^{k+1}_{L}-\overline{b}^{k+1}_{L}\Vert\\&\text{(triangle inequality, Cauchy-Schwarz inequality)}
\end{align*}
Because $\Vert b^{k+1}_{L}\Vert$ and $\Vert\overline{W}^{k+1}_{L}\Vert$ are upper bounded, $\rho\Vert (W^{k+1}_{L})^Tb^{k+1}_{L}-(\overline{W}^{k+1}_{L})^T\overline{b}^{k+1}_{L} \Vert$ is therefore upper bounded by a combination of $\Vert W^{k+1}_{L}-\overline{W}^{k+1}_{L}\Vert$ and $\Vert b^{k+1}_{L}-\overline{b}^{k+1}_{L}\Vert$.\\
Similarly, $\rho\Vert (W^{k+1}_{L})^Tz^{k+1}_{L}-(\overline{W}^{k+1}_{L})^T\overline{z}^{k+1}_{L} \Vert$ is  upper bounded by a combination of $\Vert W^{k+1}_{L}-\overline{W}^{k+1}_{L}\Vert$ and $\Vert z^{k+1}_{L}-\overline{z}^{k+1}_{L}\Vert$.\\
\begin{align*}
    &\Vert (W^{k+1}_L)^Tu^{k+1}-(\overline{W}^{k+1}_L)^Tu^{k}\Vert\\&=\Vert(W^{k+1}_L)^Tu^{k+1}-(W^{k+1}_L)^Tu^{k}+(W^{k+1}_L)^Tu^{k}-(\overline{W}^{k+1}_L)^Tu^{k}\Vert\\&\leq \Vert(W^{k+1}_L)^T(u^{k+1}-u^{k})\Vert+\Vert(W^{k+1}_L-\overline{W}^{k+1}_L)^Tu^{k}\Vert\text{(triangle inequality)}\\&\leq\Vert W^{k+1}_L\Vert\Vert u^{k+1}-u^{k}\Vert+\Vert W^{k+1}_L-\overline{W}^{k+1}_L\Vert \Vert u^{k}\Vert\text{(Cauthy-Schwarz inequality)}\\&=\Vert W^{k+1}_L\Vert\Vert \nabla_{z^{k+1}_L} R(z^{k+1}_L;y)-\nabla_{z^{k}_L}R(z^{k}_L;y)\Vert+\Vert W^{k+1}_L-\overline{W}^{k+1}_L\Vert \Vert u^{k}\Vert \ \text{(Lemma \ref{lemma:z_l optimality})}\\&\leq H\Vert W^{k+1}_L\Vert\Vert z^{k+1}_L-z^{k}_L\Vert+\Vert W^{k+1}_L-\overline{W}^{k+1}_L\Vert\Vert u^{k}\Vert\\&\text{($R(z_L;y)$ is Lipschitz differentiable)}
\end{align*}
Because $\Vert W^{k+1}_L\Vert$ and $\Vert u^k\Vert$ are bounded, $\Vert (W^{k+1}_L)^Tu^{k+1}-(\overline{W}^{k+1}_L)^Tu^{k}\Vert$ is upper bounded by a combination of $\Vert z^{k+1}_L-z^{k}_L\Vert$ and $\Vert W^{k+1}_L-\overline{W}^{k+1}_L\Vert$.
\begin{align*}
    &\rho\Vert (W^{k+1}_{L})^TW^{k+1}_{L}a^{k+1}_{L-1}-(\overline{W}^{k+1}_{L})^T\overline{W}^{k+1}_{L}\overline{a}^{k+1}_{L-1}\Vert\\&=\rho\Vert(W^{k+1}_{L})^TW^{k+1}_{L}a^{k+1}_{L-1}-(W^{k+1}_{L})^TW^{k+1}_{L}\overline{a}^{k+1}_{L-1}+(W^{k+1}_{L})^TW^{k+1}_{L}\overline{a}^{k+1}_{L-1}\\&-(W^{k+1}_{L})^T\overline{W}^{k+1}_{L}\overline{a}^{k+1}_{L-1}+(W^{k+1}_{L})^T\overline{W}^{k+1}_{L}\overline{a}^{k+1}_{L-1}-(\overline{W}^{k+1}_{L})^T\overline{W}^{k+1}_{L}\overline{a}^{k+1}_{L-1}\Vert\\&\leq \rho\Vert(W^{k+1}_{L})^TW^{k+1}_{L}(a^{k+1}_{L-1}-\overline{a}^{k+1}_{L-1})\Vert+\rho\Vert (W^{k+1}_{L})^T(W^{k+1}_{L}-\overline{W}^{k+1}_{L})\overline{a}^{k+1}_{L-1}\Vert+\rho\Vert(W^{k+1}_{L}-\overline{W}^{k+1}_{L})^T\overline{W}^{k+1}_{L}\overline{a}^{k+1}_{L-1}\Vert\text{(triangle inequality)}\\&\leq \rho\Vert W^{k+1}_{L}\Vert\Vert W^{k+1}_{L}\Vert\Vert a^{k+1}_{L-1}-\overline{a}^{k+1}_{L-1}\Vert+\rho\Vert W^{k+1}_{L}\Vert\Vert W^{k+1}_{L}-\overline{W}^{k+1}_{L}\Vert\Vert\overline{a}^{k+1}_{L-1}\Vert+\rho\Vert W^{k+1}_{L}-\overline{W}^{k+1}_{L}\Vert\Vert\overline{W}^{k+1}_{L}\Vert\Vert\overline{a}^{k+1}_{L-1}\Vert\text{(Cauchy-Schwarz inequality)}
\end{align*}
Because $\Vert W^{k+1}_{L}\Vert$, $\Vert \overline{W}^{k+1}_{L}\Vert$  and $\Vert \overline{a}^{k+1}_{L-1}\Vert$ are upper bounded, $\rho\Vert (W^{k+1}_{L})^TW^{k+1}_{L}a^{k+1}_{L-1}-(\overline{W}^{k+1}_{L})^T\overline{W}^{k+1}_{L}\overline{a}^{k+1}_{L-1}\Vert$ is therefore upper bounded by a combination of $\Vert W^{k+1}_{L}-\overline{W}^{k+1}_{L}\Vert$ and $\Vert a^{k+1}_{L}-\overline{a}^{k+1}_{L}\Vert$.\\
\indent For $u^{k+1}$,
\begin{align*}
    \nabla_{u^{k+1}_l} L_\rho&=\nabla_{u^{k+1}_l}\phi(\textbf{W}^{k+1},\textbf{b}^{k+1},\textbf{z}^{k+1},\textbf{a}^{k+1},{u}^{k+1})\\&=z^{k+1}_L-W^{k+1}_La^{k+1}_L-b^{k+1}_L\\&=(1/\rho) (u^{k+1}-u^k)\\&=(1/\rho)(\nabla_{z^{k}_L}R(z^{k}_L;y)-\nabla_{z^{k+1}_L}R(z^{k+1}_L;y))\text{(Lemma \ref{lemma:z_l optimality})}
\end{align*}
Because 
\begin{align*}
    &\Vert(1/\rho)(\nabla_{z^{k}_L}R(z^{k}_L;y)-\nabla_{z^{k+1}_L}R(z^{k+1}_L;y))\Vert\leq (H/\rho)\Vert z^{k+1}_L-z^{k}_L\Vert\text{($R(z_L;y)$ is Lipschitz differentiable)}
\end{align*}
Therefore, $\Vert \nabla_{u^{k+1}_l} L_\rho\Vert$ is upper bounded by $\Vert z^{k+1}_L-z^{k}_L\Vert$.
\end{proof}
\section*{Proof of Theorem \ref{thero: theorem 3}}
\label{sec:convergence rate}
\begin{proof}
To prove this theorem, we will first show that $c_k$ satisfies two conditions: (1). $c_k\geq c_{k+1}$. (2). $\sum\nolimits_{k=0}^\infty c_k$ is bounded.  We then conclude the convergence rate of $o(1/k)$ based on these two conditions. Specifically, first, we have
\begin{align*}
    c_k&=\min\nolimits_{0\leq i\leq k}(\sum\nolimits_{l=1}^L(\Vert \overline{W}_l^{i+1}-W_l^i\Vert^2_2+\Vert {W}_l^{i+1}-\overline{W}_l^{i+1}\Vert^2_2+\Vert \overline{b}_l^{i+1}-b_l^i\Vert^2_2+\Vert {b}_l^{i+1}-\overline{b}_l^{i+1}\Vert^2_2) +\sum\nolimits_{l=1}^{L-1}(\Vert \overline{a}_l^{i+1}-a_l^i\Vert^2_2+\Vert {a}_l^{i+1}-\overline{a}_l^{i+1}\Vert^2_2)\\&+\Vert \overline{z}^{i+1}_L-{z}^{i}_L\Vert^2_2+\Vert z^{i+1}_L-\overline{z}^{i+1}_L\Vert^2_2) \\&\geq\min\nolimits_{0\leq i\leq k+1}(\sum\nolimits_{l=1}^L\Vert \overline{W}_l^{i+1}-W_l^i\Vert^2_2+\Vert {W}_l^{i+1}-\overline{W}_l^{i+1}\Vert^2_2+\Vert \overline{b}_l^{i+1}-b_l^i\Vert^2_2+\Vert {b}_l^{i+1}-\overline{b}_l^{i+1}\Vert^2_2) +\sum\nolimits_{l=1}^{L-1}(\Vert \overline{a}_l^{i+1}-a_l^i\Vert^2_2+\Vert {a}_l^{i+1}-\overline{a}_l^{i+1}\Vert^2_2)\\&+\Vert \overline{z}^{i+1}_L-{z}^{i}_L\Vert^2_2+\Vert z^{i+1}_L-\overline{z}^{i+1}_L\Vert^2_2)\\&= c_{k+1}
\end{align*}
Therefore $c_k$ satisfies the first condition. Second,
\begin{align*}
    &\sum\nolimits_{k=0}^\infty c_k\\&=\sum\nolimits_{k=0}^\infty\min\nolimits_{0\leq i\leq k}(\sum\nolimits_{l=1}^L(\Vert \overline{W}_l^{i+1}-W_l^i\Vert^2_2+\Vert {W}_l^{i+1}-\overline{W}_l^{i+1}\Vert^2_2+\Vert \overline{b}_l^{i+1}-b_l^i\Vert^2_2+\Vert {b}_l^{i+1}-\overline{b}_l^{i+1}\Vert^2_2) +\sum\nolimits_{l=1}^{L-1}(\Vert \overline{a}_l^{i+1}-a_l^i\Vert^2_2+\Vert {a}_l^{i+1}-\overline{a}_l^{i+1}\Vert^2_2)\\&+\Vert \overline{z}^{i+1}_L-{z}^{i}_L\Vert^2_2+\Vert z^{i+1}_L-\overline{z}^{i+1}_L\Vert^2_2)\\&\leq \sum\nolimits_{k=0}^\infty(\sum\nolimits_{l=1}^L(\Vert \overline{W}_l^{k+1}-W_l^k\Vert^2_2+\Vert {W}_l^{k+1}-\overline{W}_l^{k+1}\Vert^2_2+\Vert \overline{b}_l^{k+1}-b_l^k\Vert^2_2+\Vert {b}_l^{k+1}-\overline{b}_l^{k+1}\Vert^2_2) +\sum\nolimits_{l=1}^{L-1}(\Vert \overline{a}_l^{k+1}-a_l^k\Vert^2_2+\Vert {a}_l^{k+1}-\overline{a}_l^{k+1}\Vert^2_2)\\&+\Vert \overline{z}^{k+1}_L-{z}^{k}_L\Vert^2_2+\Vert z^{k+1}_L-\overline{z}^{k+1}_L\Vert^2_2) \\&\leq (L_\rho(\textbf{W}^0,\textbf{b}^0,\textbf{z}^0,\textbf{a}^0,u^0)-L_\rho(\textbf{W}^*,\textbf{b}^*,\textbf{z}^*,\textbf{a}^*,u^*))/C_3\text{(Property \ref{pro:property 2})}
\end{align*}
So $\sum\nolimits_{k=0}^\infty c_k$ is bounded and $c_{k}$ satisfies the second condition. Finally, it has been proved that the sufficient conditions of convergence rate $o(1/k)$ are: (1) $c_k\geq c_{k+1}$, and (2) $\sum\nolimits_{k=0}^\infty c_k$ is bounded, and (3) $c_k\geq0$ (Lemma 1.2 in \cite{deng2017parallel}). Since we have proved the first two conditions and the third one $c_k \geq 0$ is obvious, the convergence rate of $o(1/k)$ is proven. 
\end{proof}
\end{appendix}

\end{document}